\newtheorem{theorem}{Theorem}
\newtheorem{lemma}{Lemma}
\newtheorem{corol}{Corollary}
\newtheorem{defi}{Definition}
\def\BibTeX{{\rm B\kern-.05em{\sc i\kern-.025em b}\kern-.08em
    T\kern-.1667em\lower.7ex\hbox{E}\kern-.125emX}}
\begin{document}
\bstctlcite{setting}

\title{A Theoretical Analysis of State Similarity Between Markov Decision Processes}

\author{Zhenyu~Tao,~\IEEEmembership{Graduate Student Member,~IEEE},
        Wei~Xu,~\IEEEmembership{Fellow,~IEEE},\\
        and Xiaohu~You,~\IEEEmembership{Fellow,~IEEE}

\thanks{This paper was presented in part at the 2025 NeurIPS Conference \cite{GBSM}.}
\thanks{Z. Tao, W. Xu, and X. You are with the National Mobile Communications Research Lab, Southeast University, Nanjing 210096, China, and also with the Pervasive Communication Research Center, Purple Mountain Laboratories, Nanjing 211111, China (email: \{zhenyu\_tao, wxu, xhyu\}@seu.edu.cn). Xiaohu You is the corresponding author of this paper.}
}


\maketitle

\begin{abstract}
  The bisimulation metric (BSM) is a powerful tool for analyzing state similarities \textit{within} a Markov decision process (MDP), revealing that states closer in BSM have more similar optimal value functions. While BSM has been successfully utilized in reinforcement learning (RL) for tasks like state representation learning and policy exploration, its application to state similarity \textit{between} multiple MDPs remains challenging. Prior work has attempted to extend BSM to pairs of MDPs, but a lack of well-established mathematical properties has limited further theoretical analysis between MDPs. In this work, we formally establish a generalized bisimulation metric (GBSM) for measuring state similarity between \textit{arbitrary pairs} of MDPs, which is rigorously proven with three fundamental metric properties, i.e., GBSM symmetry, inter-MDP triangle inequality, and a distance bound on identical spaces. Leveraging these properties, we theoretically analyze policy transfer, state aggregation, and sampling-based estimation across MDPs, obtaining explicit bounds that are strictly tighter than existing ones derived from the standard BSM. Additionally, GBSM provides a closed-form sample complexity for estimation, improving upon existing asymptotic results based on BSM. Numerical results validate our theoretical findings and demonstrate the effectiveness of GBSM in multi-MDP scenarios.
\end{abstract}

\begin{IEEEkeywords}
Markov decision process (MDP), reinforcement learning (RL), bisimulation metric (BSM).
\end{IEEEkeywords}

\section{Introduction}
Markov decision processes (MDPs) serve as a foundational framework for modeling decision-making problems~\cite{6587286}, particularly in reinforcement learning (RL)~\cite{sutton1998reinforcement,9760515}. To enable efficient analysis of MDPs, Ferns~\textit{et~al.}~\cite{10.5555/1036843.1036863} proposed the bisimulation metric (BSM) based on the Wasserstein distance, also known as the Kantorovich-Rubinstein metric, to quantify state similarity in a policy-independent manner. BSM provides theoretical guarantees that states closer under this metric exhibit more similar optimal value functions in RL. Meanwhile, BSM is a pseudometric~\cite{collatz2014functional} satisfying symmetry, triangle inequality, and indiscernibility of identicals. These three properties, combined with BSM’s measuring capability on optimal value functions, have driven its wide applications across diverse RL tasks. It has been successfully employed in state aggregation~\cite{10.5555/3020419.3020441,DBLP:journals/tmlr/KemertasJ22,DBLP:conf/aaai/0008ZY023a}, representation learning~\cite{10.5555/3666122.3667353,DBLP:conf/iclr/0001MCGL21,10.5555/3540261.3540625, 10560059}, policy exploration and generalization\cite{10.5555/3666122.3667805,DBLP:conf/iclr/AgarwalMCB21,DBLP:conf/nips/ChenP22}, goal-conditioned RL~\cite{pmlr-v162-hansen-estruch22a}, safe RL~\cite{10829536}, offline RL~\cite{DBLP:conf/icml/GuZCLHA22,9803237,DBLP:conf/icml/DadashiRVHPG21}, model predictive control~\cite{DBLP:conf/iclr/Shimizu23}, etc.

However, since BSM is inherently defined over a single MDP, its direct application to theoretical analyses involving multiple MDPs faces notable obstacles. For instance, Phillips~\cite{phillips2006knowledge} applied BSM to policy transfer by constructing a disjoint union of the source and target MDPs’ state spaces. While this operation allows inter-MDP comparisons through BSM, it also enforces disjoint transition probability distributions for states originating from different MDPs, hindering further simplifications and analysis. It necessitates iterative calculation of distances across the entire state space, leading to prohibitive computational costs in deep RL tasks as noted in~\cite{10.5555/1577069.1755839}. To compute state similarities in continuous or large-space discrete MDPs, Ferns~\textit{et~al.}~\cite{10.1137/10080484X} proposed a state similarity approximation method through state aggregation and sampling-based estimation. Although they proved the convergence of approximated state similarities to actual ones, their approach only derived a fairly loose approximation error bound and failed to obtain an explicit sample complexity~\cite{9570295}, i.e., the lower bound on the number of samples required to achieve the specified level of accuracy for an estimation error target. Specifically, the estimation error bound~\cite[equation 7.1]{10.1137/10080484X} depends on the former aggregation process, resulting in an asymptotic sample complexity rather than a closed-form expression. In addition, for representation learning, Zhang~\textit{et~al.}~\cite{DBLP:conf/iclr/0001MCGL21} and Kemertas and Aumentado-Armstrong~\cite{10.5555/3540261.3540625} leveraged BSM to establish value function approximation bounds under optimal and non-optimal policies, respectively, but only derived loose approximation bounds, particularly with large discount factors.

To address the limitation of standard BSM in multi-MDP analysis, several works have sought a modified BSM definition to evaluate state similarity between multiple MDPs~\cite{10.5555/1838206.1838401,10.5555/2936924.2936994,10.1007/s10994-022-06242-4}. Notably, when extended to multi-MDP scenarios, this modified version of BSM loses the original pseudometric properties of BSM, as compared states are in different MDPs. To the best of our knowledge, prior works have typically applied formulations from the single-MDP case directly to the multi-MDP domain, without rigorously reestablishing metric properties under multi-MDP settings. Specifically, Castro and Precup~\cite{10.5555/1838206.1838401} utilized its evaluation capability on optimal value functions to analyze policy transfer. Due to the lack of formal metric properties, the derived theoretical performance bound is limited to transferring only the optimal policy within the source MDP, and it can only reflect the effect of one-step action rather than the long-term impact of the transferred policy~\cite[Theorem 5]{10.5555/1838206.1838401}. 
While Song~\textit{et~al.}~\cite{10.5555/2936924.2936994} successfully employed such a modified BSM in assessing MDP similarities and improving the long-term reward in policy transfer, their investigation focused on empirical validation rather than in-depth theoretical analysis. Furthermore, a recent survey on state similarity measures between MDPs~\cite{10.1007/s10994-022-06242-4}, which highlighted the modified BSM as an effective approach, also emphasized the lack of theoretical guarantees in current multi-MDP similarity metrics. This raises the following two foundational questions:

\noindent\textit{Q1: Does the modified BSM possess any properties for computing state similarities between MDPs, akin to the pseudometric properties of standard BSM within a single MDP?}

\noindent\textit{Q2: If so, how can these properties facilitate the theoretical analysis involving multiple MDPs?
}

To answer these questions, we present a set of theoretical analyses on state similarities between MDPs. The main contributions are summarized as follows:
\begin{itemize}
    \item  We present a formal definition of the generalized BSM (GBSM) between MDPs (Definition~\ref{DefiBSM}), and rigorously establish its metric properties, including GBSM symmetry (Theorem~\ref{Theorem:3}), inter-MDP triangle inequality (Theorem~\ref{Theorem:4}), and the distance bound on identical spaces (Theorem~\ref{Theorem:5}). These properties serve as a direct generalization of the standard BSM's pseudometric properties, and as expected, when the compared MDPs are identical, the GBSM definition and its properties reduce to those of the original BSM.
    \item We apply GBSM in theoretical analyses of policy transfer, state aggregation, and sampling-based estimation of MDPs, yielding explicit bounds for policy transfer performance (Theorem~\ref{Theorem:6}), value function approximation (Theorem~\ref{Theorem:vfa}), and state similarity approximation (Theorems~\ref{Theorem:7} and \ref{Theorem:8}), respectively. Notably, when specialized to the case of identical MDPs, the GBSM framework provides strictly tighter error bounds for BSM, along with an explicit and closed-form sample complexity for approximation that advances beyond the existing asymptotic results. It also yields an explicit sample complexity for model-based RL as a direct consequence.
    \item To demonstrate the compatibility and practicality of GBSM, we showcase extensions of GBSM to other variants of BSM, such as lax BSM~\cite{10.5555/2981780.2981986} and on-policy BSM~\cite{Castro_2020}. Moreover, we propose an efficient algorithm by combining both aggregation and estimation methods to calculate GBSM in practical scenarios, where the state space is commonly large and exact probabilities are often inaccessible.
    \item  We validate the theoretical findings through experiments on both random Garnet MDPs and a practical simulation-to-reality (sim-to-real) RL task in wireless networks. Numerical results corroborate our theoretical findings.
\end{itemize}

\section{Preliminaries}
Before describing the details of our contributions, we give a brief review of the basics of RL, BSM, and the Wasserstein distance.

\textit{Reinforcement Learning:} We consider an MDP $\langle \mathcal{S}, \mathcal{A}, \mathbb{P}, R, \gamma\rangle$ defined by a finite state space $\mathcal{S}$, a finite action space $\mathcal{A}$, transition probability $\mathbb{P}(\tilde{s}|s,a)$ ($a\in\mathcal{A}$, $\{\tilde{s},s\}\in\mathcal{S}$, and $\tilde{s}$ denotes the next state), a reward function $R(s,a)\in[0,\bar{R}]$, and a discount factor $\gamma$. Policies $\pi(\cdot|s)$ are mappings from states to distributions over actions, inducing a value function recursively defined by $V^\pi(s)\!\!=\!\!\mathbb{E}_{a \sim \pi(\cdot|s)}\left[R(s,a)\!+\!\gamma \mathbb{E}_{\tilde{s} \sim \mathbb{P}(\cdot|s,a)}\left[V^\pi(\tilde{s})\right]\right]$. In RL, we are concerned with finding the optimal policy $\pi^*=\arg\max_\pi V^\pi$, which induces the optimal value function denoted by $V^*$.

\textit{Bisimulation Metric:} Different definitions of BSM exist in the literature~\cite{10.5555/1036843.1036863,10.5555/3020419.3020441,10.1137/10080484X}. In this paper, we adopt the formulation from~\cite{10.5555/3020419.3020441} and set the weighting constant to its maximum value $c=\gamma$. The BSM is then defined as:
\begin{align}
    d^{\sim}(s, s')=&\max_{a\in\mathcal{A}} \Big\{\big|R(s,a)-R(s',a)\big| \notag\\
    &\ +\gamma W_1(\mathbb{P}(\cdot|s,a), \mathbb{P}(\cdot|s',a) ;d^{\sim})\Big\},\forall \{s,s'\}\in \mathcal{S}.
\end{align}
Here, $W_1$ is the 1-Wasserstein distance, measuring the minimal transportation cost between distributions $\mathbb{P}(\cdot|s,a)$ to $\mathbb{P}(\cdot|s',a)$, with $d^{\sim}$ as the cost function. Ferns~\textit{et~al.}~\cite{10.5555/1036843.1036863} showed that this metric consistently bounds differences in the optimal value function, i.e., $|V^*(s)-V^*(s')|\leq d^{\sim}(s,s')$, and satisfies three pseudometric properties: (1) Symmetry: $d^{\sim}(s,s')=d^{\sim}(s',s)$, (2) Triangle inequality: $d^{\sim}(s,s')\leq d^{\sim}(s,s'')+d^{\sim}(s'',s')$, and (3) Indiscernibility of identicals: $s=s'\Rightarrow d^{\sim}(s,s')=0$. 

\textit{Wasserstein distance:} 
The Wasserstein distance~\cite{villani2021topics} is defined through the following primal linear program (LP):
\begin{equation}\label{LP1}
\begin{aligned}
 W_1(P, Q;d) =&\   \min _{\boldsymbol{\lambda}}\sum_{i=1}^{|\mathcal{S}_1|}\sum_{j=1}^{|\mathcal{S}_2|} \lambda_{i,j} d\left(s_i, s_j\right), \\
 \text { subject to } & \sum_{j=1}^{|\mathcal{S}_2|}\lambda_{i,j}=P\left(s_i\right),\ \forall \ i\  ;\\
    & \sum_{i=1}^{|\mathcal{S}_1|} \lambda_{i,j}=Q\left(s_j\right) ,\ \forall \ j\  ;\\
    & \lambda_{i,j} \geq 0,\ \forall \ i,j .
\end{aligned} 
\end{equation}
Here, $P$ and $Q$ are distributions on $\mathcal{S}_1$ and $\mathcal{S}_2$, respectively, and $s_i\in\mathcal{S}_1,s_j\in\mathcal{S}_2$. It represents the minimum transportation cost \cite{9517977} from $P$ to $Q$ under cost function $d:\mathcal{S}_1\times\mathcal{S}_2\rightarrow\mathbb{R}_+$, and is equivalent to the following dual LP according to the Kantorovich duality~\cite{kantorovich1958space}:
\begin{equation}\label{LP2}
    \begin{aligned}
    W_1(P, Q;d) =\  & \max_{\boldsymbol{\mu},\boldsymbol{\nu}}\sum_{i=1}^{|\mathcal{S}_1|} \mu_i P(s_i)-\sum_{j=1}^{|\mathcal{S}_2|} \nu_j Q(s_j),\\
    \text{subject to }& \ \mu_i-\nu_j\leq d(s_i,s_j),\ \forall \ i,j .
\end{aligned}
\end{equation}
\section{Generalized Bisimulation Metric}
We now present a formal definition of GBSM and derive its key metric properties.
\begin{defi}[\textbf{GBSM}\footnote{This extends the definition in our earlier conference version \cite{GBSM}, which was restricted to MDPs sharing the same action space. By adopting the Hausdorff metric, the formulation presented here allows for the evaluation of state similarity between arbitrary MDPs with potentially heterogeneous action spaces. Appendix~\ref{Appendix1.5} demonstrates that this new formulation and its associated bounds are strictly tighter than those of the original definition.}]\label{DefiBSM}
Given two MDPs $\mathcal{M}_1=\langle \mathcal{S}_1, \mathcal{A}_1, \mathbb{P}_1, R_1, \gamma\rangle$ and $\mathcal{M}_2=\langle \mathcal{S}_2, \mathcal{A}_2, \mathbb{P}_2, R_2, \gamma\rangle$, and a metric $d$ between states $s\in\mathcal{S}_1$ and $s'\in\mathcal{S}_2$, we define a function $\delta$ for state-action pairs as:
\begin{align}\label{eqdelta}
    \delta (d)((s,a), (s',a'))&=|R_1(s,a)-R_2(s',a')|\notag\\
    &\  +\gamma W_1(\mathbb{P}_1(\cdot|s,a), \mathbb{P}_2(\cdot|s',a') ;d),
\end{align}
where $a\in\mathcal{A}_1$ and $a'\in\mathcal{A}_2$. Then, the GBSM between states across two MDPs is defined as
\begin{align}\label{eq:defi-GBSM}
    d((s,\mathcal{M}_1), (s',&\mathcal{M}_2))=H(X_s,X_{s'};\delta (d)),
\end{align}
where $X_s=\{(s,a)|a\in\mathcal{A}_1\}$, $X_{s'}=\{(s',a')|a'\in\mathcal{A}_2\}$, and $H$ is the Hausdorff metric \cite{rockafellar1998variational} defined by 
\begin{equation}
 H(X,Y;d)=\max\{\max_{x\in X} \min_{y\in Y}d(x,y),\min_{x\in X}\max_{y \in Y}d(x,y)\}.
 \end{equation}
 \end{defi}
For notational simplicity, we use $d^{1\text{-}2}(s,s')$ to denote $d((s,\mathcal{M}_1), (s',\mathcal{M}_2))$, where the superscript $1\text{-}2$ indicates that the first state belongs to $\mathcal{M}_1$ and the second to $\mathcal{M}_2$. Then the existence of such a $d^{1\text{-}2}$ satisfying (\ref{eq:defi-GBSM}) is established by the following theorem.
\begin{figure*}[t]
\centering
\begin{tabular}{cc}
\begin{tabular}{c}
\subfloat[]{\includegraphics[height=0.19\textwidth]{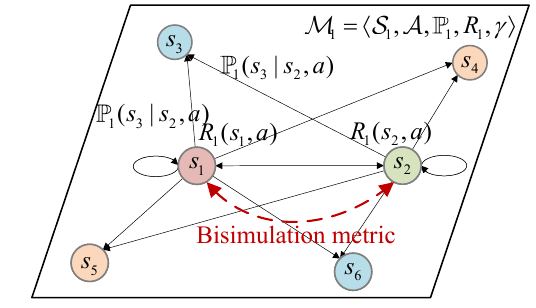}} \\
\subfloat[]{\includegraphics[height=0.19\textwidth]{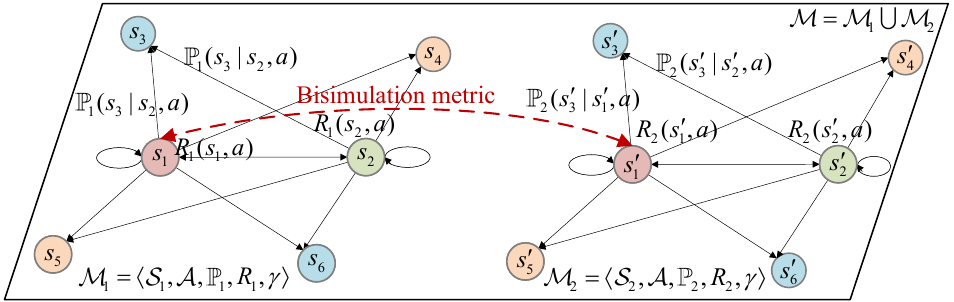}}
\end{tabular}
&
\adjustbox{valign=c}{\subfloat[]{\includegraphics[height=0.42\textwidth]{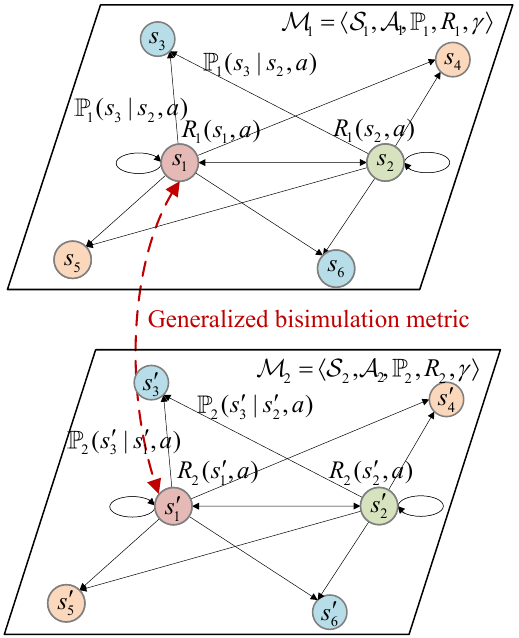}}} \\
\end{tabular}
\caption{Comparison between BSM and GBSM: (a) Analysis of state similarity within a single MDP using BSM. (b) Analysis of state similarity between two MDPs using BSM (requiring a disjoint union construction). (c) Direct analysis of state similarity between two MDPs using GBSM.}
\end{figure*}
\begin{theorem}[\textbf{Existence of GBSM}]\label{Theorem:1}
Let $d_0^{1\text{-}2}$ be a constant zero function and define
\begin{align}\label{eq:dn}
    d_{n}^{1\text{-}2}(s, s'&)=\ H(X_s,X_{s'};\delta (d_{n-1}^{1\text{-}2})),\ n\in\mathbb{N}.
 \end{align}
Then $d_n^{1\text{-}2}$ converges to the fixed point $d^{1\text{-}2}$ with $n\rightarrow\infty$.
\end{theorem}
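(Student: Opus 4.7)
The plan is to establish existence via the Banach fixed-point theorem, showing that the map $F$ defined by $F(d)(s,s') := H(X_s, X_{s'}; \delta(d))$ is a $\gamma$-contraction on a suitable complete metric space. The natural space to work in is $\mathcal{B}$, the set of bounded non-negative functions $d : \mathcal{S}_1 \times \mathcal{S}_2 \to \mathbb{R}_+$ equipped with the sup-norm $\|\cdot\|_\infty$. Completeness of $\mathcal{B}$ under $\|\cdot\|_\infty$ is standard. The first step is to verify that $F$ maps $\mathcal{B}$ into itself: given $d \in \mathcal{B}$, since the reward gap is bounded by $\bar R$ and the Wasserstein term is bounded by $\gamma\|d\|_\infty$, the $\delta(d)$ values are uniformly bounded, hence so are their Hausdorff aggregations. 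Thus the iteration (\ref{eq:dn}) stays inside $\mathcal{B}$ starting from $d_0^{1\text{-}2}\equiv 0$.

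The core of the argument is to show $\|F(d) - F(d')\|_\infty \le \gamma \|d - d'\|_\infty$ for any $d, d' \in \mathcal{B}$. I would do this in two lemma-style steps. First, the 1-Wasserstein distance is 1-Lipschitz in its cost function: viewing $W_1(P,Q;\cdot)$ as a pointwise infimum over couplings $\boldsymbol{\lambda}$ of the linear functional $\sum \lambda_{ij} d(s_i,s_j)$, a standard argument (fix the optimal coupling for $d$, evaluate at $d'$, and symmetrize) gives $|W_1(P,Q;d) - W_1(P,Q;d')| \le \|d - d'\|_\infty$. Consequently, from the definition (\ref{eqdelta}), $|\delta(d)((s,a),(s',a')) - \delta(d')((s,a),(s',a'))| \le \gamma \|d - d'\|_\infty$, uniformly in $(s,a,s',a')$.

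Second, the Hausdorff metric is 1-Lipschitz in its cost function as well: since $H(X,Y;\cdot)$ is a max over min/min over max aggregations of its argument, a $\|\delta(d) - \delta(d')\|_\infty$ perturbation of the cost propagates with factor at most one through both the inner and outer extrema. Combining the two Lipschitz estimates yields $|F(d)(s,s') - F(d')(s,s')| \le \gamma\|d - d'\|_\infty$ for every $(s,s')$, which is exactly the required contraction. Banach's theorem then delivers a unique fixed point $d^{1\text{-}2} \in \mathcal{B}$ and guarantees that $d_n^{1\text{-}2} \to d^{1\text{-}2}$ in sup-norm, hence pointwise, as $n \to \infty$.

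The main obstacle I anticipate is cleanly verifying the 1-Lipschitz property of $H$ with respect to its cost argument on set pairs $(X_s, X_{s'})$ that vary with $(s,s')$: one must be careful that the inner minimizers/maximizers selected for $\delta(d)$ need not be optimal for $\delta(d')$, so the bound follows by the standard sub-optimality swap argument rather than by direct substitution. Aside from this bookkeeping, the remaining steps are routine applications of completeness and the contraction mapping theorem.
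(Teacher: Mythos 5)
Your proof is correct, but it takes a genuinely different route from the paper. The paper establishes existence via the Knaster--Tarski fixed-point theorem: it equips the set of cost functions $\mathcal{S}_1\times\mathcal{S}_2\to[0,\bar{R}/(1-\gamma)]$ with the pointwise partial order, shows this is a complete lattice, and then proves order-continuity of the operator $F$ (via two lemmas establishing monotonicity and continuity of $W_1$ in its cost argument, using both the primal and dual LP formulations), so that iterating from the least element converges to the least fixed point. You instead use the Banach contraction mapping theorem on the sup-norm space of bounded functions, reducing everything to two 1-Lipschitz estimates: $|W_1(P,Q;d)-W_1(P,Q;d')|\le\|d-d'\|_\infty$ (by the standard sub-optimality swap on couplings) and the analogous bound for the max-min/min-max aggregation in $H$, giving $\|F(d)-F(d')\|_\infty\le\gamma\|d-d'\|_\infty$. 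Both arguments are valid here. Your contraction route is arguably cleaner and buys more: it yields \emph{uniqueness} of the fixed point and a geometric convergence rate $\|d_n^{1\text{-}2}-d^{1\text{-}2}\|_\infty\le\gamma^n\|d^{1\text{-}2}\|_\infty$, neither of which follows directly from Knaster--Tarski as invoked in the paper. The trade-off is that your argument relies essentially on $\gamma<1$; the lattice-theoretic approach is the one that survives in undiscounted or $c=1$ settings (which is why Ferns et al.\ originally used it), though for the present paper, where $\gamma<1$ is assumed throughout, this generality is not needed. Your anticipated concern about the Lipschitz property of $H$ is handled exactly as you say, by the sub-optimality swap applied to each of the nested extrema over the finite sets $X_s$ and $X_{s'}$, so there is no gap.
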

\begin{proof}[Proof Sketch]
The existence of $d^{1\text{-}2}$ is established through the fixed-point theorem~\cite{tarski1955lattice} and the definition of the Wasserstein distance, similar to the proof of BSM in~\cite[Theorem 4.5]{10.5555/1036843.1036863}. See Appendix \ref{Appendix1} for complete proof. 
\end{proof}
Similar to BSM, which evaluates the state similarity through the optimal value function, GBSM naturally bounds differences in the optimal value function between two MDPs.
\begin{theorem}[\textbf{Optimal value difference bound between MDPs}]\label{Theorem:2}
Let $V_1^*$ and $V_2^*$ denote the optimal value functions in $\mathcal{M}_1$ and $\mathcal{M}_2$, respectively. Then the GBSM provides an upper bound for the difference between the optimal values for any state pair $(s,s')\in \mathcal{S}_1\times\mathcal{S}_2$:
    \begin{align}
        |V^*_1(s)-V^*_2(s')|\leq {d}^{1\text{-}2}(s,s').
    \end{align}
\end{theorem}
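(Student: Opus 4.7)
The plan is to couple a value-iteration induction with the GBSM iterates $d_n^{1\text{-}2}$ furnished by Theorem~\ref{Theorem:1}. Setting $V_i^{(0)}\equiv 0$ and applying the Bellman optimality operator $V_i^{(n+1)}(s)=\max_{a\in\mathcal{A}_i}\bigl[R_i(s,a)+\gamma\,\mathbb{E}_{\tilde{s}\sim\mathbb{P}_i(\cdot|s,a)}V_i^{(n)}(\tilde{s})\bigr]$ for $i\in\{1,2\}$, the standard contraction argument yields $V_i^{(n)}\to V_i^{\ast}$, while Theorem~\ref{Theorem:1} gives $d_n^{1\text{-}2}\to d^{1\text{-}2}$. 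The theorem then reduces to establishing
\[
\bigl|V_1^{(n)}(s)-V_2^{(n)}(s')\bigr|\leq d_n^{1\text{-}2}(s,s'),\qquad \forall\,n\in\mathbb{N},\ (s,s')\in\mathcal{S}_1\times\mathcal{S}_2,
\]
by induction on $n$ and passing to the limit; the base case is immediate since both sides vanish.

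For the inductive step, I first compare the action values $Q_i^{(n+1)}(s,a)=R_i(s,a)+\gamma\,\mathbb{E}_{\mathbb{P}_i(\cdot|s,a)}V_i^{(n)}$ pointwise in $(a,a')$. The inductive hypothesis $|V_1^{(n)}(s_i)-V_2^{(n)}(s_j)|\leq d_n^{1\text{-}2}(s_i,s_j)$ makes both of the dual pairs $(\mu_i,\nu_j)=\pm(V_1^{(n)}(s_i),V_2^{(n)}(s_j))$ feasible for the Kantorovich program~\eqref{LP2} with cost $d_n^{1\text{-}2}$, so that both signs of the expectation difference are controlled:
\[
\bigl|\mathbb{E}_{\mathbb{P}_1(\cdot|s,a)}V_1^{(n)}-\mathbb{E}_{\mathbb{P}_2(\cdot|s',a')}V_2^{(n)}\bigr|\leq W_1\bigl(\mathbb{P}_1(\cdot|s,a),\mathbb{P}_2(\cdot|s',a');d_n^{1\text{-}2}\bigr).
\]
Combined with the trivial reward comparison, this yields the per-action bound $|Q_1^{(n+1)}(s,a)-Q_2^{(n+1)}(s',a')|\leq \delta(d_n^{1\text{-}2})((s,a),(s',a'))$ valid for every $(a,a')\in\mathcal{A}_1\times\mathcal{A}_2$.

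To lift this to the difference of maxima I argue by cases on the sign of $V_1^{(n+1)}(s)-V_2^{(n+1)}(s')$. In the non-negative case, choosing $a^{\star}\in\arg\max_{a}Q_1^{(n+1)}(s,a)$ and using $V_2^{(n+1)}(s')\geq Q_2^{(n+1)}(s',a')$ for every $a'$ gives the bound $\min_{a'}\delta(d_n^{1\text{-}2})((s,a^{\star}),(s',a'))\leq \max_{a}\min_{a'}\delta(d_n^{1\text{-}2})$; the reverse sign symmetrically yields $\max_{a'}\min_{a}\delta(d_n^{1\text{-}2})$. Both quantities are dominated by $H(X_s,X_{s'};\delta(d_n^{1\text{-}2}))=d_{n+1}^{1\text{-}2}(s,s')$ from Definition~\ref{DefiBSM}, where for the second term I invoke the elementary max-min inequality $\max_{a'}\min_{a}\leq \min_{a}\max_{a'}$ to match the $\min\max$ argument of $H$. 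This closes the induction, and taking $n\to\infty$ transfers the bound to $(V_1^{\ast},V_2^{\ast},d^{1\text{-}2})$. The main obstacle I anticipate is precisely this max-over-actions lift: because $\mathcal{A}_1$ and $\mathcal{A}_2$ may differ and the Hausdorff expression combines a $\max\min$ with a $\min\max$ term, the case analysis must carefully align the two sign cases with the correct outer argument of $H$; the Kantorovich-duality step and the limit transfer are routine by comparison.
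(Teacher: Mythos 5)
Your proof is correct and follows essentially the same route as the paper's: induction on the value-iteration and GBSM iterates, Kantorovich-duality feasibility of $(V_1^{(n)},V_2^{(n)})$ (in both signs) to control the expectation difference by $W_1$, and a lift over the action maxima into the Hausdorff expression $H(X_s,X_{s'};\delta(d_n^{1\text{-}2}))$. Your explicit two-case treatment of the sign of $V_1^{(n+1)}(s)-V_2^{(n+1)}(s')$, with the max-min inequality invoked to dominate $\max_{a'}\min_{a}$ by the $\min_{a}\max_{a'}$ term of $H$, is in fact slightly more careful than the paper's single ``without loss of generality'' case, but it is the same argument.
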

\begin{proof}[Proof Sketch]
We first construct a recursive form of the optimal value function, which starts from $V^{0}(s)=0$ and is defined by $V^{n}(s) =\ \max_{a}\Big\{R(s,a)+\gamma \mathbb{E}_{\tilde{s} \sim \mathbb{P}(\cdot|s,a)}\big[V^{(n-1)}(\tilde{s})\big]\Big\}$. The proof proceeds by induction on $n$. The key insight is that $\big( V_1^{n}(s_i) \big)_{i=1}^{|\mathcal{S}_1|}$ and $\big( V_2^{n}(s_j) \big)_{j=1}^{|\mathcal{S}_2|}$ form a feasible, but not necessarily the optimal, solution to the dual LP in (\ref{LP2}) for $W_1\big(\mathbb{P}_1(\cdot|s,a),\mathbb{P}_2(\cdot|s',a);d^{1\text{-}2}_n\big)$. See Appendix \ref{Appendix2} for complete proof.
\end{proof}

Now, we start to establish the three fundamental metric properties of GBSM, which we term GBSM symmetry, inter-MDP triangle inequality, and the distance bound on identical spaces. These properties align with the pseudometric properties of BSM, including symmetry, triangle inequality, and indiscernibility of identical.

\begin{theorem}[\textbf{GBSM symmetry}]\label{Theorem:3}
Let ${d}^{1\text{-}2}$ be the GBSM between $\mathcal{M}_1$ and $\mathcal{M}_2$, and ${d}^{2\text{-}1}$ be the GBSM with reversed MDP order, then we have
\begin{align}
   {d}^{1\text{-}2}(s,s')={d}^{2\text{-}1}(s',s),\  \forall\ (s,s')\in \mathcal{S}_1\times\mathcal{S}_2.
\end{align}
\end{theorem}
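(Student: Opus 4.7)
The plan is to leverage the iterative construction $d_n^{1\text{-}2}$ introduced in Theorem~\ref{Theorem:1} and establish the identity $d_n^{1\text{-}2}(s,s') = d_n^{2\text{-}1}(s',s)$ for every $n\in\mathbb{N}$ by induction on $n$, then pass to the limit using the convergence guaranteed by Theorem~\ref{Theorem:1}. The base case $n=0$ is immediate since $d_0^{1\text{-}2}\equiv d_0^{2\text{-}1}\equiv 0$ by definition.

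For the inductive step, I would decompose the claim into three swap-symmetry facts about the constituents of the operator defining $d_n$. First, $|R_1(s,a)-R_2(s',a')|$ is trivially invariant under the swap $(s,a)\leftrightrightarrows(s',a')$. Second, the Wasserstein term satisfies
\[
W_1\big(\mathbb{P}_1(\cdot|s,a),\mathbb{P}_2(\cdot|s',a');d_{n-1}^{1\text{-}2}\big) = W_1\big(\mathbb{P}_2(\cdot|s',a'),\mathbb{P}_1(\cdot|s,a);d_{n-1}^{2\text{-}1}\big),
\]
which I would establish from the primal LP~(\ref{LP1}) by transposing the transport plan $\lambda_{i,j}\mapsto\lambda_{j,i}$: the row- and column-sum marginal constraints exchange roles, and the objectives coincide by the inductive hypothesis $d_{n-1}^{1\text{-}2}(s_i,s_j) = d_{n-1}^{2\text{-}1}(s_j,s_i)$. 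Combining these two facts yields the pointwise identity $\delta(d_{n-1}^{1\text{-}2})((s,a),(s',a')) = \delta(d_{n-1}^{2\text{-}1})((s',a'),(s,a))$.

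Third, I would invoke the intrinsic symmetry of the Hausdorff metric, namely $H(X,Y;d) = H(Y,X;\tilde{d})$ whenever $\tilde{d}(y,x)=d(x,y)$, which follows directly by relabeling the arguments of the inner $\max$ and $\min$ in the definition of $H$. Applying this with $X=X_s$, $Y=X_{s'}$, and $d=\delta(d_{n-1}^{1\text{-}2})$ then gives
\[
d_n^{1\text{-}2}(s,s') = H\big(X_s,X_{s'};\delta(d_{n-1}^{1\text{-}2})\big) = H\big(X_{s'},X_s;\delta(d_{n-1}^{2\text{-}1})\big) = d_n^{2\text{-}1}(s',s),
\]
closing the induction. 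Pointwise convergence $d_n^{1\text{-}2}\to d^{1\text{-}2}$ and $d_n^{2\text{-}1}\to d^{2\text{-}1}$ from Theorem~\ref{Theorem:1} then transfers the equality to the fixed points.

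I expect the main subtlety to lie in the Wasserstein swap, because the primal LP~(\ref{LP1}) is formally asymmetric in the roles of $P$ and $Q$, with distinct marginal constraints along rows versus columns. The key observation is that transposing the coupling simultaneously interchanges the two families of marginal constraints while preserving the objective under the inductive hypothesis, so feasibility and optimality are carried over bijectively. The remaining steps---the triangle-inequality-free symmetry of $|\cdot|$ and of $H$---are routine once the Wasserstein piece is in hand.
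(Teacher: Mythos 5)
Your overall strategy---induction on the iterates $d_n^{1\text{-}2}$ from Theorem~\ref{Theorem:1}, treating the reward term, the Wasserstein term, and the Hausdorff operator separately, then passing to the limit---is exactly the paper's proof, and your LP-transposition argument for the Wasserstein swap supplies detail that the paper's two-line proof omits. The one step that does not go through as written is the ``intrinsic symmetry of $H$ by relabeling'': with the definition printed in Definition~\ref{DefiBSM}, namely $H(X,Y;d)=\max\{\max_{x\in X}\min_{y\in Y}d(x,y),\ \min_{x\in X}\max_{y\in Y}d(x,y)\}$, \emph{both} outer operations range over the first set, so relabeling yields $H(X_{s'},X_s;\tilde d)=\max\{\max_{y}\min_{x}d(x,y),\ \min_{y}\max_{x}d(x,y)\}$, which is a genuinely different quantity from $H(X_s,X_{s'};d)$; for instance, for the $3\times2$ cost matrix with rows $(1,1)$, $(5,1)$, $(1,5)$ the two expressions evaluate to $1$ and $5$ respectively, so the operator as printed is not symmetric and the inductive step would fail. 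Your relabeling argument is valid for the standard Hausdorff metric $\max\{\max_x\min_y d,\ \max_y\min_x d\}$, which is presumably what the authors intend (the paper's own proof silently assumes the same symmetry), so the gap is inherited from the definition rather than introduced by you---but to close the induction rigorously you must either correct the definition of $H$ to the symmetric form or prove separately that the printed $\max$--$\min$/$\min$--$\max$ combination is invariant under exchanging the two sets, which it is not in general.
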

\begin{proof}
This property is readily proved through induction. We have $|R_1(s,a)-R_2(s',a')|=|R_2(s',a')-R_1(s,a)|$ for the base case. With the assumption of ${d}^{1\text{-}2}_n(s,s')={d}^{2\text{-}1}_n(s',s)$, we have $W_1\big(\mathbb{P}_1(\cdot|s,a),\mathbb{P}_2(\cdot|s',a')\allowbreak ;d^{1\text{-}2}_n\big)=W_1\big(\mathbb{P}_2(\cdot|s',a'),\mathbb{P}_1(\cdot|s,a);d^{2\text{-}1}_n\big)$, and from (\ref{eq:dn}) we have ${d}^{1\text{-}2}_{n+1}(s,s')={d}^{2\text{-}1}_{n+1}(s',s)$. It is therefore concluded that ${d}^{1\text{-}2}_n(s,s')={d}^{2\text{-}1}_n(s',s)$ for all $n\in\mathbb{N}$ and $(s,s')\in \mathcal{S}_1\times\mathcal{S}_2$. Taking $n\rightarrow \infty$ yields the desired result.
\end{proof}

\begin{theorem}[\textbf{Inter-MDP triangle inequality of GBSM}]\label{Theorem:4}
Given MDPs $\mathcal{M}_1=\langle \mathcal{S}_1, \mathcal{A}_1, \mathbb{P}_1, R_1, \gamma\rangle$, $\mathcal{M}_2=\langle \mathcal{S}_2, \mathcal{A}_2, \mathbb{P}_2, R_2, \gamma\rangle$, and $\mathcal{M}_3=\langle \mathcal{S}_3, \mathcal{A}_3, \mathbb{P}_3, R_3, \gamma\rangle$, GBSMs between the three MDPs satisfy
\begin{equation} \label{triangleieq}
    {d}^{1\text{-}2}(s,s')\leq{d}^{1\text{-}3}(s,s'')+{d}^{3\text{-}2}(s'',s'),\ 
\end{equation}
for any $(s,s',s'')\in \mathcal{S}_1\times\mathcal{S}_2\times\mathcal{S}_3$. Here, the GBSM between any two MDPs can be arbitrarily reversed according to its symmetry.
\end{theorem}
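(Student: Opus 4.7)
The plan is to prove the inequality for every iterate $d_n^{1\text{-}2}$, $d_n^{1\text{-}3}$, $d_n^{3\text{-}2}$ arising in the construction of Theorem~\ref{Theorem:1}, and then pass to the limit. The base case $n=0$ is immediate from $d_0^{1\text{-}2}\equiv 0$. For the induction step, I would assume that for every triple of states $(s,s',s'')\in\mathcal{S}_1\times\mathcal{S}_2\times\mathcal{S}_3$ one has $d_n^{1\text{-}2}(s,s')\leq d_n^{1\text{-}3}(s,s'')+d_n^{3\text{-}2}(s'',s')$, and show the same for $d_{n+1}$.

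The argument has two layers, corresponding to the two compositions in (\ref{eqdelta}) and (\ref{eq:defi-GBSM}). First, at the state-action level I would establish
\[
\delta(d_n)((s,a),(s',a')) \leq \delta(d_n)((s,a),(s'',a'')) + \delta(d_n)((s'',a''),(s',a'))
\]
for any $a\in\mathcal{A}_1$, $a''\in\mathcal{A}_3$, $a'\in\mathcal{A}_2$, where each occurrence of $\delta(d_n)$ is taken with respect to the appropriate pair of MDPs. The reward part follows from the usual triangle inequality for $|\cdot|$. The Wasserstein part calls for a gluing argument: take an optimal coupling $\lambda^{1\text{-}3}$ for $W_1(\mathbb{P}_1(\cdot|s,a),\mathbb{P}_3(\cdot|s'',a'');d_n^{1\text{-}3})$ and an optimal coupling $\lambda^{3\text{-}2}$ for $W_1(\mathbb{P}_3(\cdot|s'',a''),\mathbb{P}_2(\cdot|s',a');d_n^{3\text{-}2})$, disintegrate both with respect to their common marginal on $\mathcal{S}_3$, and form a joint distribution on $\mathcal{S}_1\times\mathcal{S}_3\times\mathcal{S}_2$ whose $\mathcal{S}_1\times\mathcal{S}_2$ marginal is a feasible coupling of $\mathbb{P}_1(\cdot|s,a)$ and $\mathbb{P}_2(\cdot|s',a')$. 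Applying the inductive hypothesis pointwise to $d_n$ under this coupling yields the desired Wasserstein triangle inequality.

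Second, at the Hausdorff level, once the $\delta(d_n)$ inequality above is in hand, I would invoke the fact that the Hausdorff construction $H(\cdot,\cdot;\delta)$ inherits the triangle inequality from its underlying (cross-space) cost. Concretely, for each $(s,a)\in X_s$ there exists $(s'',a'')\in X_{s''}$ nearly attaining $\min_{(s'',a'')}\delta(d_n)((s,a),(s'',a''))$, and then $(s',a')\in X_{s'}$ nearly attaining $\min_{(s',a')}\delta(d_n)((s'',a''),(s',a'))$; chaining through the state-action triangle inequality controls $\max_{(s,a)}\min_{(s',a')}\delta(d_n)$, and a symmetric chain controls $\max_{(s',a')}\min_{(s,a)}\delta(d_n)$. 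Combining both directions bounds $H(X_s,X_{s'};\delta(d_n))=d_{n+1}^{1\text{-}2}(s,s')$ by $d_{n+1}^{1\text{-}3}(s,s'')+d_{n+1}^{3\text{-}2}(s'',s')$. Letting $n\to\infty$ and using the convergence from Theorem~\ref{Theorem:1} gives (\ref{triangleieq}); the reversibility of the MDP order in the statement follows from Theorem~\ref{Theorem:3}.

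The main obstacle I expect is the Wasserstein gluing step, since the three distributions live on genuinely different state spaces and the cost functions $d_n^{1\text{-}3}$ and $d_n^{3\text{-}2}$ act across different pairs. Care is needed to verify that the disintegration against $\mathbb{P}_3(\cdot|s'',a'')$ produces a valid joint whose projection is a feasible transport plan in the sense of (\ref{LP1}), and that its transport cost, evaluated under the cross-space metric $d_n^{1\text{-}2}$ via the inductive pointwise triangle inequality, separates cleanly into the two Wasserstein terms. The Hausdorff step is comparatively routine, but the max-min and min-max halves of $H$ must both be chained, which is why the cross-space symmetry from Theorem~\ref{Theorem:3} is useful bookkeeping.
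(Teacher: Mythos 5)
Your proposal is correct and follows essentially the same route as the paper: induction on the iterates $d_n$ with base case zero, a Gluing-Lemma construction of a three-way coupling whose $\mathcal{S}_1\times\mathcal{S}_2$ marginal is feasible for the primal LP (giving the Wasserstein transitivity under the pointwise inductive hypothesis), a chaining argument through intermediate actions to propagate the inequality from $\delta(d_n)$ to the Hausdorff level, and a passage to the limit. The "main obstacle" you flag is exactly the step the paper isolates as equation (\ref{preservation}) and resolves the way you describe.
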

\begin{proof}[Proof Sketch]
We first prove the transitive property of inequality on the Wasserstein distance through Gluing Lemma~\cite{villani2009optimal}, and establish the inter-MDP triangle inequality through induction. See Appendix \ref{Appendix:Theorem:4} for complete proof.
\end{proof}

Since the identicals only exist within the same space, we establish the distance bound only when $\mathcal{M}_1$ and $\mathcal{M}_2$ share the same state and action spaces $\mathcal{S},\mathcal{A}$. This property is formulated as follows.

\begin{theorem}[\textbf{Distance bound on identical spaces}]\label{Theorem:5}
When $\mathcal{M}_1$ and $\mathcal{M}_2$ share the same state and action spaces $\mathcal{S},\mathcal{A}$, define
\begin{align}
    \delta_{\textnormal{TV}}((s,a),(s',a'&)) = \big|R_1(s,a)-R_2(s',a')\big|\notag\\
            & +\frac{\gamma \bar{R}}{1-\gamma} \textnormal{TV}\left(\mathbb{P}_1(\cdot|s,a), \mathbb{P}_2(\cdot|s',a') \right),
\end{align}
where TV represents the total variation distance defined by $\textnormal{TV}(P,Q)= \frac{1}{2} \sum_{s\in\mathcal{S}} \big|P(s)-Q(s)\big|$, then the GBSM satisfy
\begin{align}\label{TVineq}
   \max_{s}d^{\textnormal{1-2}}(s,s) \leq & \frac{1}{1-\gamma} \max_{s} H(X_s,X_s;\delta_{\textnormal{TV}}).
\end{align}
\end{theorem}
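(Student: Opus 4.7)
The plan is to reduce the Wasserstein term in $\delta(d^{1\text{-}2})$ to a total variation term plus a controllable error, exploiting the fact that $\mathbb{P}_1(\cdot|s,a)$ and $\mathbb{P}_2(\cdot|s,a')$ live on the \emph{same} state space $\mathcal{S}$, so one can transport mass along the diagonal. Let $M := \bar{R}/(1-\gamma)$ and $D := \max_{s\in\mathcal{S}} d^{1\text{-}2}(s,s)$. As a preliminary step, I would first show that $\max_{s,s'} d^{1\text{-}2}(s,s') \leq M$: from the fixed-point equation and the trivial bounds $|R_1(s,a)-R_2(s',a')|\le\bar{R}$ and $W_1(\cdot,\cdot;d)\le\max d$, one gets $\max d^{1\text{-}2}\le \bar{R}+\gamma\max d^{1\text{-}2}$, which yields the claim.

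Next, for any fixed $s\in\mathcal{S}$ and $(a,a')\in\mathcal{A}\times\mathcal{A}$, I would construct the explicit primal coupling for (\ref{LP1}) that places mass $\lambda_{\tilde{s},\tilde{s}}=\min(\mathbb{P}_1(\tilde{s}|s,a),\mathbb{P}_2(\tilde{s}|s,a'))$ on every diagonal pair $(\tilde{s},\tilde{s})$ and distributes the remaining mass (of total weight exactly $\textnormal{TV}(\mathbb{P}_1(\cdot|s,a),\mathbb{P}_2(\cdot|s,a'))$) arbitrarily off the diagonal. Since the diagonal entries cost at most $D$ and the off-diagonal entries cost at most $M$, this feasible plan gives
\begin{align*}
W_1\bigl(\mathbb{P}_1(\cdot|s,a),\mathbb{P}_2(\cdot|s,a');d^{1\text{-}2}\bigr)
&\le (1-\textnormal{TV})\,D + \textnormal{TV}\cdot M \\
&\le D + \tfrac{\bar{R}}{1-\gamma}\,\textnormal{TV}\bigl(\mathbb{P}_1(\cdot|s,a),\mathbb{P}_2(\cdot|s,a')\bigr).
\end{align*}
Multiplying by $\gamma$ and adding the reward difference, this yields the pointwise comparison $\delta(d^{1\text{-}2})((s,a),(s,a'))\le \delta_{\textnormal{TV}}((s,a),(s,a')) + \gamma D$ for every $(s,a,a')$.

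Finally, I would invoke the monotone/shift property of the Hausdorff metric in its cost argument: if $d_1\le d_2+c$ pointwise, then both $\max_x\min_y d_1$ and $\min_x\max_y d_1$ are at most their $d_2$-counterparts plus $c$, hence $H(X,Y;d_1)\le H(X,Y;d_2)+c$. Applied to the pointwise inequality above with $X=Y=X_s$, this gives
\begin{equation*}
d^{1\text{-}2}(s,s) = H(X_s,X_s;\delta(d^{1\text{-}2})) \le H(X_s,X_s;\delta_{\textnormal{TV}}) + \gamma D.
\end{equation*}
Taking the maximum over $s$ on both sides produces $D\le \max_s H(X_s,X_s;\delta_{\textnormal{TV}}) + \gamma D$, and rearranging yields the stated bound.

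I expect the main obstacle to be the Wasserstein-to-TV reduction in the second step: one must identify the correct explicit coupling and observe that the $\bar{R}/(1-\gamma)$ factor appearing in $\delta_{\textnormal{TV}}$ is exactly the uniform upper bound on $d^{1\text{-}2}$ established in step one, which is what makes the off-diagonal cost absorbable into a clean TV term. Everything else is either a fixed-point estimate or an elementary monotonicity property of $H$.
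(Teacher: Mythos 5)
Your proposal is correct and follows essentially the same route as the paper's proof: the same diagonal coupling giving $W_1(P,Q;d)\le(1-\textnormal{TV})\max_s d(s,s)+\textnormal{TV}\cdot\max_{s,s'}d(s,s')$, the same uniform bound $\max_{s,s'}d^{1\text{-}2}(s,s')\le\bar{R}/(1-\gamma)$ from the fixed-point equation, and the same final rearrangement of $D\le\max_s H(X_s,X_s;\delta_{\textnormal{TV}})+\gamma D$. The only (minor, and arguably cleaner) difference is that you pass the pointwise inequality $\delta(d^{1\text{-}2})\le\delta_{\textnormal{TV}}+\gamma D$ through an explicit shift/monotonicity property of the Hausdorff metric, whereas the paper argues via a ``without loss of generality'' choice of which branch of the Hausdorff max is attained.
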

\begin{proof}[Proof Sketch]
This theorem is established by constructing a specific transportation plan between $\mathbb{P}_1(\cdot|s,a)$ and $\mathbb{P}_2(\cdot|s',a')$ and leveraging the boundedness of the reward function. See Appendix \ref{Appendix:Theorem:5} for complete proof.
\end{proof}
A direct consequence of Theorem \ref{Theorem:5} is that the right-hand side of inequality (\ref{TVineq}) becomes zero if $\mathcal{M}_1\!=\!\mathcal{M}_2$, since
\begin{align}
    H(X_s,X_s;&\delta_{\textnormal{TV}}) \leq  \max_a \big\{ \big|R_1(s,a)-R_2(s,a)\big|\notag\\
            & +\frac{\gamma \bar{R}}{1-\gamma} \textnormal{TV}\left(\mathbb{P}_1(\cdot|s,a), \mathbb{P}_2(\cdot|s,a) \right)\big\} =0.
\end{align}
It indicates $d^{\textnormal{1-1}}(s,s)=d((s,\mathcal{M}_1), (s,\mathcal{M}_1))=0$, confirming the indiscernibility of identicals of GBSM when the compared objects (state-MDP pairs) are genuinely identical. We denote the term $H(X_s,X_s;\delta_{\textnormal{TV}})$ in Theorem \ref{Theorem:5} as $d^{\textnormal{1-2}}_\textnormal{TV}(s,s)$ in the following.

Thus far, we present a formal definition of GBSM with rigorously proved metric properties. Notably, when all compared MDPs are identical, GBSM reduces to lax BSM~\cite{10.5555/2981780.2981986} that computes state similarity within one MDP. In this case, the three fundamental properties of GBSM proposed in this section reduces to the original pseudometric properties of lax BSM.
\section{Applications of GBSM in Multi-MDP Analysis}
To demonstrate the effectiveness of GBSM in multi-MDP scenarios, we apply it to theoretical analyses of policy transfer, state aggregation, and sampling-based estimation of MDPs.
\subsection{Performance Bound of Policy Transfer Using GBSM}
Using GBSM, we analyze policy transfer from a source MDP $\mathcal{M}_1$ to a target MDP $\mathcal{M}_2$ and derive a theoretical performance bound for the transferred policy. This bound takes the form of a regret which is defined as the expected discounted reward loss incurred by following the transferred policy instead of the optimal one~\cite{kaelbling1996reinforcement}. Specifically, it is a weighted sum of the GBSM between the two MDPs and the regret within the source MDP itself, formulated by the following.
\begin{theorem}[\textbf{Regret bound on policy transfer}]\label{Theorem:6}
 Let $\pi$ be a policy on $\mathcal{M}_1$. Suppose we transfer $\pi$ to $\mathcal{M}_2$ using a state mapping $f: \mathcal{S}_2 \to \mathcal{S}_1$ and an action mapping $g: \mathcal{A}_1 \to \mathcal{A}_2$, such that the agent in $\mathcal{M}_2$ executes action $g(\pi(f(s')))$ at state $s'\in\mathcal{S}_2$. The regret of the transferred policy in $\mathcal{M}_2$ is bounded by:
 \begin{align}\label{Th6eq1}
    &\max_{s'\in\mathcal{S}_2}|V_2^*(s')-V_2^{\pi}(s')|\notag\\
    \leq & \big(\max_{s'\in\mathcal{S}_2}d^{\textnormal{1-2}}(f(s'),s')+\max_{\substack{s' \in \mathcal{S}_2 \\ a \in \mathcal{A}_1}}\delta(d^{\textnormal{1-2}})((f(s'),a),(s',g(a)))\notag\\
&+(1+\gamma)\max_{s\in\mathcal{S}_1}|V_1^*(s)-V_1^{\pi}(s)|\big)\big/\big(1-\gamma\big).
\end{align}
\end{theorem}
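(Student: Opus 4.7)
The plan is to bound $\Delta := \max_{s'\in\mathcal{S}_2}|V_2^*(s')-V_2^{\pi}(s')|$ by pivoting through the source-MDP value along the state mapping $f$. Writing $\Delta_1 := \max_{s\in\mathcal{S}_1}|V_1^*(s)-V_1^{\pi}(s)|$ for the in-source regret, I would insert $V_1^*(f(s'))$ and $V_1^\pi(f(s'))$ and apply the triangle inequality:
\begin{align*}
|V_2^*(s') - V_2^\pi(s')| \;\leq\;& |V_2^*(s')-V_1^*(f(s'))| + |V_1^*(f(s'))-V_1^\pi(f(s'))| \\
&+ |V_1^\pi(f(s'))-V_2^\pi(s')|.
\end{align*}
Theorem~\ref{Theorem:2} immediately bounds the first summand by $d^{1\text{-}2}(f(s'),s')$, and the second is at most $\Delta_1$. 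All of the real work therefore lies in the third summand, which I denote $\Phi(s')$ and which is the cross-MDP value gap under the source and transferred policies respectively.

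For $\Phi(s')$ I would unfold one Bellman step with $a=\pi(f(s'))\in\mathcal{A}_1$ and $a'=g(a)\in\mathcal{A}_2$, since these are precisely the actions selected by the two policies at $f(s')$ and $s'$:
\begin{align*}
\Phi(s') \;\leq\;& |R_1(f(s'),a)-R_2(s',a')| \\
&+ \gamma\bigl|\mathbb{E}_{\tilde s\sim\mathbb{P}_1}[V_1^\pi(\tilde s)] - \mathbb{E}_{\tilde s'\sim\mathbb{P}_2}[V_2^\pi(\tilde s')]\bigr|.
\end{align*}
To handle the expectation difference I would take the optimal coupling $\lambda^\star$ realising $W_1(\mathbb{P}_1(\cdot|f(s'),a),\mathbb{P}_2(\cdot|s',a');d^{1\text{-}2})$, and then under $\lambda^\star$ control each pointwise term $|V_1^\pi(s_i)-V_2^\pi(s_j)|$ by routing through the optimal values using Theorem~\ref{Theorem:2}:
\begin{align*}
|V_1^\pi(s_i)-V_2^\pi(s_j)| \;\leq\; \Delta_1 + d^{1\text{-}2}(s_i,s_j) + \Delta.
\end{align*}
Summing against $\lambda^\star$ converts the middle piece into $W_1(\cdot;d^{1\text{-}2})$, and combined with the reward gap this yields $\Phi(s')\leq \delta(d^{1\text{-}2})((f(s'),a),(s',g(a))) + \gamma\Delta_1 + \gamma\Delta$.

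Substituting back, taking the maximum over $s'\in\mathcal{S}_2$, and relaxing the data-dependent choice $a=\pi(f(s'))$ to the free maximum over $a\in\mathcal{A}_1$ gives
\begin{align*}
\Delta \;\leq\;& \max_{s'}d^{1\text{-}2}(f(s'),s') + (1+\gamma)\Delta_1 \\
&+ \max_{s',a}\delta(d^{1\text{-}2})((f(s'),a),(s',g(a))) + \gamma\Delta,
\end{align*}
and moving the $\gamma\Delta$ term across and dividing by $1-\gamma$ delivers the claimed inequality. The main obstacle is the coupling step: $|V_1^\pi(s_i)-V_2^\pi(s_j)|$ is itself a cross-MDP value gap of the same type we are trying to bound, so the argument is inevitably self-referential. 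Detouring through $V_1^*$ and $V_2^*$ (enabled by Theorem~\ref{Theorem:2}) is what lets the self-reference collapse to a single coefficient $\gamma\Delta$, which is in turn absorbed by the geometric-series factor $1/(1-\gamma)$; without this routing the recursion would not close.
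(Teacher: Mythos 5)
Your proposal is correct and follows essentially the same route as the paper's proof in Appendix~\ref{Appendix3}: the same three-term triangle-inequality decomposition through $V_1^*(f(s'))$ and $V_1^\pi(f(s'))$, the same Bellman unfolding of the cross-MDP term, the same detour through the optimal value functions via Theorem~\ref{Theorem:2}, and the same absorption of the self-referential $\gamma\Delta$ term into the $1/(1-\gamma)$ factor. The only difference is cosmetic: you pass from the expectation gap to $W_1(\cdot\,;d^{1\text{-}2})$ via the primal optimal coupling and a pointwise bound on $|V_1^\pi(s_i)-V_2^\pi(s_j)|$, whereas the paper first isolates $\bigl|\sum_k \mathbb{P}_1 V_1^* - \sum_k \mathbb{P}_2 V_2^*\bigr|$ and invokes dual-LP feasibility of $(V_1^*,V_2^*)$; both yield the identical intermediate inequality.
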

\begin{proof}[Proof Sketch]
The proof of (\ref{Th6eq1}) is similar to the proof in~\cite[Theorem 4.1]{phillips2006knowledge} and is conducted by replacing the BSM by GBSM. See Appendix \ref{Appendix3} for complete proof.
\end{proof}
This theorem reveals that the policy transfer regret between \textit{any} two MDPs is upper-bounded by three key factors: (1) the inherent similarity distance between source and target MDPs $d^{\textnormal{1-2}}(f(s'),s')$, (2) the alignment of MDPs under state and action mappings $\delta(d^{\textnormal{1-2}})((f(s'),a),(s',g(a)))$, and (3) the suboptimality of the policy in the source MDP $|V_1^*(s)-V_1^{\pi}(s)|$. Special cases of Theorem \ref{Theorem:6} yield the following refined bounds:
\begin{corol}[\textbf{Optimal action mapping for policy transfer}]\label{corolactionmapping}
Suppose the action mapping $g$ is chosen greedily with respect to $d^{1\textnormal{-}2}$. That is, for any state $s' \in \mathcal{S}_2$ and action $a \in \mathcal{A}_1$, let $g(a) = \operatorname*{arg\,min}_{a' \in \mathcal{A}_2} \delta((f(s'), a), (s', a'); d^{1\textnormal{-}2})$.\footnote{In cases where $\mathcal{M}_1$ and $\mathcal{M}_2$ share the same action space, this requirement is typically satisfied by the identity mapping $g(a)=a$.} Then the regret bound is tightened to:
\begin{align}\label{boundwithoptimalactionmapping}
    \max_{s'\in\mathcal{S}_2}|V_2^*(s')-V_2^{\pi}(s')| \leq & \frac2{1-\gamma}\max_{s'\in\mathcal{S}_2}d^{\textnormal{1-2}}(f(s'),s')\notag\\
&+\frac{1+\gamma}{1-\gamma}\max_{s\in\mathcal{S}_1}|V_1^*(s)-V_1^{\pi}(s)|.
\end{align}
\end{corol}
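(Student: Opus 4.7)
The plan is to start from the bound in Theorem~\ref{Theorem:6} and show that the greedy choice of $g$ collapses the alignment term into the MDP-similarity term, which is the only modification needed to recover (\ref{boundwithoptimalactionmapping}).

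First I would substitute the greedy rule $g(a) = \operatorname*{arg\,min}_{a' \in \mathcal{A}_2} \delta(d^{1\text{-}2})((f(s'),a),(s',a'))$ into the alignment term of Theorem~\ref{Theorem:6}. For every fixed $s' \in \mathcal{S}_2$ this rewrites the inner expression as
\[
\max_{a \in \mathcal{A}_1} \delta(d^{1\text{-}2})((f(s'),a),(s',g(a))) = \max_{a \in \mathcal{A}_1} \min_{a' \in \mathcal{A}_2} \delta(d^{1\text{-}2})((f(s'),a),(s',a')).
\]

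Next I would invoke the Hausdorff definition from Definition~\ref{DefiBSM},
\[
d^{1\text{-}2}(f(s'),s') = \max\!\Bigl\{\max_{x \in X_{f(s')}}\min_{y \in X_{s'}}\delta(d^{1\text{-}2})(x,y),\ \min_{x \in X_{f(s')}}\max_{y \in X_{s'}}\delta(d^{1\text{-}2})(x,y)\Bigr\},
\]
and observe that the quantity above is precisely its first argument. Hence $\max_{a}\min_{a'}\delta(d^{1\text{-}2})((f(s'),a),(s',a')) \leq d^{1\text{-}2}(f(s'),s')$, and taking a further maximum over $s'$ yields
\[
\max_{\substack{s' \in \mathcal{S}_2 \\ a \in \mathcal{A}_1}} \delta(d^{1\text{-}2})((f(s'),a),(s',g(a))) \leq \max_{s' \in \mathcal{S}_2} d^{1\text{-}2}(f(s'),s').
\]

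Plugging this inequality back into Theorem~\ref{Theorem:6} merges its first two terms into $\frac{2}{1-\gamma}\max_{s'}d^{1\text{-}2}(f(s'),s')$ while the source-MDP suboptimality contribution is unchanged, producing (\ref{boundwithoptimalactionmapping}). I expect no serious technical obstacle, as the argument is essentially a direct consequence of the Hausdorff form of GBSM combined with Theorem~\ref{Theorem:6}. The only subtle point worth flagging in a full write-up is that the argmin defining $g$ depends on $s'$, so $g$ should be interpreted as a state-indexed family $\{g_{s'}\}_{s'\in\mathcal{S}_2}$ of mappings (equivalently, a map $\mathcal{S}_2\times\mathcal{A}_1 \to \mathcal{A}_2$); this dependence is harmless because the bound in Theorem~\ref{Theorem:6} already takes a pointwise maximum over $s'$, and the footnote in the corollary statement further confirms that a state-independent mapping suffices in the common case of shared action spaces.
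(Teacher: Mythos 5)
Your proposal is correct and follows essentially the same route as the paper, which simply notes that the corollary "follows directly from Definition~\ref{DefiBSM}": the greedy choice of $g$ turns the alignment term of Theorem~\ref{Theorem:6} into $\max_{a}\min_{a'}\delta(d^{1\text{-}2})((f(s'),a),(s',a'))$, which is the first argument of the outer max in the Hausdorff metric and hence bounded by $d^{1\text{-}2}(f(s'),s')$, merging the first two terms of the regret bound. Your remark that the argmin makes $g$ implicitly state-indexed is a fair clarification and does not affect the validity of the argument, since Theorem~\ref{Theorem:6} and its proof only ever use $g$ pointwise at each $s'$.
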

\begin{proof}
This result follows directly from Definition \ref{DefiBSM}.
\end{proof}
\begin{corol}[\textbf{Optimal state mapping for policy transfer}]
Assume the conditions of Corollary \ref{corolactionmapping} hold. If the state mapping is further chosen as $f(s')=\arg\min_{s\in\mathcal{S}_1}d^{\textnormal{1-2}}(s,s')$ for all $ s'\in \mathcal{S}_2$, the bound is further tightened to:
\begin{align}
    \max_{s'\in\mathcal{S}_2}|V_2^*(s')-V_2^{\pi}(s')| \leq& \frac2{1-\gamma}\max_{s'\in\mathcal{S}_2}\Big\{\min_{s\in\mathcal{S}_1}d^{\textnormal{1-2}}(s,s')\Big\}\notag\\
&+\frac{1+\gamma}{1-\gamma}\max_{s\in\mathcal{S}_1}|V_1^*(s)-V_1^{\pi}(s)|.
\end{align}
\end{corol}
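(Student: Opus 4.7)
The plan is to obtain this corollary as a direct specialization of Corollary~\ref{corolactionmapping}. That result establishes the bound in \eqref{boundwithoptimalactionmapping} for an arbitrary state mapping $f:\mathcal{S}_2\to\mathcal{S}_1$, provided the action mapping $g$ is chosen greedily with respect to $d^{\textnormal{1-2}}$. Because the greedy definition of $g$ is pointwise in $(s',a)$ and is parameterized by $f(s')$, any prescription of $f$ automatically induces a compatible $g$; thus the bound in \eqref{boundwithoptimalactionmapping} remains valid verbatim for the specific $f$ we are about to pick.

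The key step is then to isolate how the right-hand side of \eqref{boundwithoptimalactionmapping} depends on $f$ and optimize over that choice. The second summand $\frac{1+\gamma}{1-\gamma}\max_{s\in\mathcal{S}_1}|V_1^*(s)-V_1^{\pi}(s)|$ involves only the source MDP and the source policy and is therefore unaffected by $f$. Only the first summand $\frac{2}{1-\gamma}\max_{s'\in\mathcal{S}_2} d^{\textnormal{1-2}}(f(s'),s')$ is sensitive to $f$, and it is minimized pointwise in $s'$ by taking $f(s')\in\arg\min_{s\in\mathcal{S}_1} d^{\textnormal{1-2}}(s,s')$. Substituting the identity $d^{\textnormal{1-2}}(f(s'),s')=\min_{s\in\mathcal{S}_1} d^{\textnormal{1-2}}(s,s')$ that this choice produces into \eqref{boundwithoptimalactionmapping} immediately yields the claimed inequality.

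I do not anticipate a substantive obstacle here: the finiteness of $\mathcal{S}_1$ assumed in Section~II guarantees that the argmin is attained, and any tie-breaking rule leaves the value $\min_{s\in\mathcal{S}_1} d^{\textnormal{1-2}}(s,s')$ unchanged. The only point worth an explicit sentence in the final write-up is that re-selecting $f$ within Corollary~\ref{corolactionmapping} is legitimate precisely because the greedy action mapping there is defined relative to $f$ rather than fixed in advance; once this observation is recorded, what remains is a single line of algebra.
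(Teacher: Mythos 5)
Your proposal is correct and matches the paper's proof, which likewise obtains the result by substituting the choice $f(s')=\arg\min_{s\in\mathcal{S}_1}d^{\textnormal{1-2}}(s,s')$ directly into the bound \eqref{boundwithoptimalactionmapping} of Corollary~\ref{corolactionmapping}. Your additional remarks on the compatibility of the greedy action mapping with the re-chosen $f$ and on attainment of the argmin are sound but not needed beyond what the paper records.
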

\begin{proof}
This result is established by applying the assumption on $f(s')$ to (\ref{boundwithoptimalactionmapping}).
\end{proof}
In contrast to the approach of~\cite{phillips2006knowledge}, which constructs a disjoint union state space for analysis, we provide a similar theoretical bound by directly analyzing the relationship between the source and target MDPs. Crucially, this formulation naturally accommodates MDPs with heterogeneous state and action spaces. Furthermore, this direct method avoids a constant total variation distance, thereby enabling simplifications such as the bound based on $d^{\textnormal{1-2}}_\textnormal{TV}$, as well as the approximation method in the following section. In terms of computational efficiency, calculating BSM on the disjoint union of two MDPs renders a significant computational complexity scaling with $|\mathcal{S}_1\!+\mathcal{S}_2|^2$. In contrast, our GBSM directly computes state similarity between $\mathcal{M}_1$ and $\mathcal{M}_2$, with a reduced complexity scaling with $|\mathcal{S}_1|\!\,\cdot\!\,|\mathcal{S}_2|$.

\subsection{Approximation Methods and Corresponding Error Bounds}
When the state space is extensive and actual transition probabilities are inaccessible, approximation methods are necessary for the efficient computation of state similarities. In the single MDP scenario, Ferns~\textit{et~al.}~\cite{10.1137/10080484X} proposed a state similarity approximation (SSA) method based on state aggregation and sampling-based estimation. Let $\mathcal{U}\subseteq \mathcal{S}$ be a set of selected representative states, $[\,\cdot\,]:\mathcal{S}\rightarrow\mathcal{U}$ an aggregation mapping, $\tilde{\sigma}=\max_{s\in\mathcal{S}}\{d^{\sim}(s,[s])\}$ the maximum aggregation distance, and $K$ the number of samples used to empirically estimate each transition probability. The SSA error satisfies
\begin{align}\label{Eq:aggBSM}
    \max_{s,s'}|&d^{\sim}(s,s')-d^{\sim}_{\tilde{\sigma},K}([s],[s'])|\leq \frac{2\tilde{\sigma} (2+\gamma)}{1-\gamma} \notag\\
&+ \frac{2 \gamma}{1-\gamma}\max_{a,s} W_1\Big([\hat{\mathbb{P}}](\cdot|[s],a), [\mathbb{P}](\cdot|[s],a) ;d^{\sim}\Big).
\end{align}
Here, $d^{\sim}_{\tilde{\sigma},K}$ denotes the BSM on the approximated MDP, $[\mathbb{P}]$ denotes the transition probability between aggregated states, and $[\hat{\mathbb{P}}]$ represents its empirical counterparts estimated from $K$ samples. However, the BSM-based aggregation error bound ${2\tilde{\sigma} (2+\gamma)}/{(1-\gamma)}$ is fairly loose, and the sample complexity for the estimation error is limited to asymptotic expressions. 

Apart from approximating state similarities, it is also crucial to directly quantify the difference between optimal value functions within the original MDPs and their approximated counterparts in aggregated MDPs. Using a BSM-based analysis, Zhang~\textit{et~al.}~\cite{DBLP:conf/iclr/0001MCGL21} established a value function approximation (VFA) bound on this difference, given by $2\tilde{\sigma}/(1-\gamma)$. A key limitation of this bound, however, is its looseness in settings with a large discount factor $\gamma$.

To address this, we apply the GBSM to directly compute state similarities between the original MDPs and their aggregated/estimated counterparts. Beyond extending the approach in~\cite{10.1137/10080484X} to the multi-MDP setting, our GBSM-based analysis yields significantly tighter approximation bounds for both SSA and VFA, and provides an explicit and closed-form expression for the sample complexity.


\subsubsection{State Aggregation}
Given the previously defined $\mathcal{S}$, $\mathcal{U}$, and $[\,\cdot\,]$, the aggregated state space $[\mathcal{S}]$ is defined such that the reward function and transition probability of each state are replaced by those of its representative state, given by $R(s,a)=R([s],a)$ and $\mathbb{P}(\cdot|s,a)=\mathbb{P}(\cdot|[s],a)$ for all $s\in\mathcal{S}$. The aggregated transition probability is defined as $[\mathbb{P}](s'|s,a)=\sum_{s''\in\mathcal{S},[s'']=s'} \mathbb{P}(s''|s,a)$. Note that $[\mathbb{P}](s'|s,a)=0$ when $s'\notin\mathcal{U}$. With this construction, we define the aggregated MDP for $\mathcal{M}_1$ as $\mathcal{M}_{[1]}= \langle [\mathcal{S}_1], \mathcal{A}_1, [\mathbb{P}_1], R_1, \gamma\rangle$. First, we obtain the VFA bound directly from GBSM.
\begin{theorem}[\textbf{VFA error bound}]\label{Theorem:vfa}
Given an MDP $\mathcal{M}_1$ and its aggregated counterpart $\mathcal{M}_{[1]}$, the VFA bound is given by
\begin{equation} \label{vfa}
\max_{s\in\mathcal{S}_1}|V^*_1(s)-V^*_{[1]}(s)|\leq \sigma_1 \leq \tilde{\sigma}_1/(1-\gamma),
\end{equation}
where $\sigma_1 = \max\limits_{s\in\mathcal{S}_1}d^{1\textnormal{-}[1]}(s,[s])$ and $\tilde{\sigma}_1 = \max\limits_{s\in\mathcal{S}_1}d^{\sim}(s,[s])$.
\end{theorem}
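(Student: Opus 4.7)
The plan is to prove the two inequalities separately.

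For the first inequality $\max_{s\in\mathcal{S}_1}|V_1^*(s)-V_{[1]}^*(s)|\leq \sigma_1$, I would apply Theorem~\ref{Theorem:2} to the MDP pair $(\mathcal{M}_1,\mathcal{M}_{[1]})$ at the state pair $(s,[s])$, which gives $|V_1^*(s)-V_{[1]}^*([s])|\leq d^{1\text{-}[1]}(s,[s])$. By construction of $\mathcal{M}_{[1]}$, every state in a common aggregation class has the same reward and transition kernel and hence the same optimal value; under the natural convention $V_{[1]}^*(s):=V_{[1]}^*([s])$, this yields $|V_1^*(s)-V_{[1]}^*(s)|\leq d^{1\text{-}[1]}(s,[s])\leq\sigma_1$, and maximizing over $s$ concludes this half.

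For the second inequality $\sigma_1\leq\tilde\sigma_1/(1-\gamma)$, I would invoke the inter-MDP triangle inequality (Theorem~\ref{Theorem:4}) with $[s]$ in $\mathcal{M}_1$ as the intermediate point:
\[
d^{1\text{-}[1]}(s,[s])\;\leq\; d^{1\text{-}1}(s,[s])\;+\;d^{1\text{-}[1]}([s],[s]).
\]
The first summand stays inside $\mathcal{M}_1$, so GBSM reduces to lax BSM and is bounded above by the standard BSM, giving $d^{1\text{-}1}(s,[s])\leq d^{\sim}(s,[s])\leq\tilde\sigma_1$. For the second summand I would expand $d^{1\text{-}[1]}([s],[s])$ via the fixed-point recursion of Definition~\ref{DefiBSM}: since $R_{[1]}([s],a)=R_1([s],a)$ the reward term vanishes, and identity action pairing inside the Hausdorff leaves only $\gamma W_1(\mathbb{P}_1(\cdot|[s],a),[\mathbb{P}_1](\cdot|[s],a);d^{1\text{-}[1]})$. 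Because $[\mathbb{P}_1](\cdot|[s],a)$ is by construction the pushforward of $\mathbb{P}_1(\cdot|[s],a)$ under $[\,\cdot\,]$, the plan sending each $s''$ to $[s'']$ is a valid transport, with cost $\sum_{s''}\mathbb{P}_1(s''|[s],a)\,d^{1\text{-}[1]}(s'',[s''])\leq\sigma_1$; hence $d^{1\text{-}[1]}([s],[s])\leq\gamma\sigma_1$. Combining the two summands and maximizing over $s$ produces the self-referential inequality $\sigma_1\leq\tilde\sigma_1+\gamma\sigma_1$, which rearranges into the desired bound.

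To dispel the self-reference, I would repeat the preceding argument on the iterates $d_n^{1\text{-}[1]}$ from Theorem~\ref{Theorem:1}, obtaining the contractive recursion $\Delta_{n+1}\leq\tilde\sigma_1+\gamma\Delta_n$ with $\Delta_0=0$ and $\Delta_n:=\max_{s\in\mathcal{S}_1}d_n^{1\text{-}[1]}(s,[s])$. This telescopes to $\Delta_n\leq\tilde\sigma_1(1-\gamma^n)/(1-\gamma)$, and passing to the limit completes the proof. The main obstacle is the treatment of $d^{1\text{-}[1]}([s],[s])$: even though the two MDPs agree on rewards at $[s]$, their transition kernels $\mathbb{P}_1$ and $[\mathbb{P}_1]$ differ, so this distance is genuinely positive and must be controlled without introducing any new aggregation quantity. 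The crux is that the aggregated kernel is exactly the pushforward of the original kernel under $[\,\cdot\,]$, which allows the canonical pushforward coupling to feed the residual distance back into $\sigma_1$ itself and close the recursion with the $\gamma$-contraction factor.
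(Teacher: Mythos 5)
Your proof is correct, and it reaches the bound by a leaner route than the paper. Both arguments share the same skeleton --- first inequality from Theorem~\ref{Theorem:2} (with the observation that all states in an aggregation class share the same optimal value in $\mathcal{M}_{[1]}$), then the inter-MDP triangle inequality pivoting at $[s]$, then a $\gamma$-contraction that closes a self-referential inequality $\sigma_1\leq\tilde\sigma_1+\gamma\sigma_1$. The difference is in how the residual term at $([s],[s])$ is controlled. The paper introduces an auxiliary intermediate MDP $\mathcal{M}_{1_{[\mathcal{S}]}}=\langle[\mathcal{S}_1],\mathcal{A}_1,\mathbb{P}_1,R_1,\gamma\rangle$ (representatives' dynamics, transitions not yet pushed forward), proves by a separate induction that $d^{1_{[\mathcal{S}]}\text{-}[1]}(s,[s])=0$, uses the triangle inequality to identify $d^{1\text{-}[1]}(s,[s])$ with $d^{1\text{-}1_{[\mathcal{S}]}}(s,[s])$, and only then runs the contraction at $([s],[s])$ with the identity coupling (both kernels there being $\mathbb{P}_1(\cdot|[s],a)$). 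You skip the auxiliary MDP entirely and bound $d^{1\text{-}[1]}([s],[s])\leq\gamma\sigma_1$ directly via the pushforward coupling of $\mathbb{P}_1(\cdot|[s],a)$ onto $[\mathbb{P}_1](\cdot|[s],a)$, which is valid precisely because the aggregated kernel is that pushforward; your diagonal-pairing bound on the Hausdorff term is the same device the paper uses in its own appendix. What your version buys is brevity (one induction-free coupling argument instead of an auxiliary MDP plus a zero-distance lemma) and a cleaner resolution of the self-reference: you run the recursion on the iterates $d_n^{1\text{-}[1]}$ and telescope, whereas the paper rearranges $\max_s d^{1\text{-}1_{[\mathcal{S}]}}(s,[s])\leq\max_s d^{1\text{-}1}(s,[s])+\gamma\max_s d^{1\text{-}1_{[\mathcal{S}]}}(s,[s])$ directly, which tacitly relies on the finiteness bound $\max_{s,s'}d\leq\bar R/(1-\gamma)$. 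What the paper's detour buys is the standalone identity $d^{1\text{-}1_{[\mathcal{S}]}}(s,s)=d^{1\text{-}[1]}(s,[s])$, which makes the two aggregation steps (replacing dynamics by the representative's, then collapsing the support) individually visible; this is conceptually informative but not needed for the stated inequality.
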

\begin{proof}
The first inequality in (\ref{vfa}) follows directly from Theorem~\ref{Theorem:2}. The second is established by constructing an intermediate MDP between $\mathcal{M}_1$ and $\mathcal{M}_{[1]}$ and repeatedly applying the inter-MDP triangle inequality stated in Theorem~\ref{Theorem:4}. See Appendix \ref{Appendix:Theorem:vfa} for complete proof.
\end{proof}
This theorem demonstrates the significant tightness of the GBSM-based VFA bound $\sigma_1$ compared to the existing BSM-based bound $2\tilde{\sigma}_1/(1-\gamma)$ in~\cite{DBLP:conf/iclr/0001MCGL21}.
We are now ready to establish the aggregation error bound for SSA as follows.
\begin{theorem}[\textbf{SSA aggregation error bound}]\label{Theorem:7}
Given MDPs $\mathcal{M}_1$, $\mathcal{M}_2$ and their aggregated counterparts $\mathcal{M}_{[1]}$,$\mathcal{M}_{[2]}$, the SSA error bound is given by
\begin{align} \label{eq17}
\max_{s\in\mathcal{S}_1,s'\in\mathcal{S}_2}|d^{\textnormal{1-2}}(s,s')-d^{[1]\textnormal{-}[2]}(s,s')|&\leq \sigma_1 + \sigma_2 \notag \\ 
&\leq (\tilde{\sigma}_1+\tilde{\sigma}_2)/(1-\gamma).
\end{align}
\end{theorem}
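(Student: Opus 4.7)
The plan is to establish the first inequality via two applications of the inter-MDP triangle inequality (Theorem~\ref{Theorem:4}), chaining $d^{1\text{-}2}(s,s')$ with $d^{[1]\text{-}[2]}(s,s')$ through terms that are controlled by $\sigma_1$ and $\sigma_2$; the second inequality then follows at once from Theorem~\ref{Theorem:vfa}, which already provides $\sigma_i \leq \tilde{\sigma}_i/(1-\gamma)$.

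The key preliminary observation I would record is a bisimilarity identity inside the aggregated MDPs. Because $R_i(s,a)=R_i([s],a)$ and $\mathbb{P}_i(\cdot|s,a)=\mathbb{P}_i(\cdot|[s],a)$ hold for every $s\in\mathcal{S}_i$ in $\mathcal{M}_{[i]}$, the state-action pair sets and the map $\delta$ in Definition~\ref{DefiBSM} coincide for $s$ and $[s]$ when viewed from $\mathcal{M}_{[i]}$. Hence, directly from the fixed-point characterization in Theorem~\ref{Theorem:1}, we have $d^{[1]\text{-}[2]}(s,s')=d^{[1]\text{-}[2]}([s],[s'])=d^{[1]\text{-}[2]}([s],s')=d^{[1]\text{-}[2]}(s,[s'])$, and similarly $d^{[1]\text{-}1}(s,s)=d^{1\text{-}[1]}(s,[s])$ after invoking the GBSM symmetry of Theorem~\ref{Theorem:3}.

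With this identity in hand, I would apply Theorem~\ref{Theorem:4} twice to write
\begin{align*}
d^{1\text{-}2}(s,s') &\leq d^{1\text{-}[1]}(s,[s]) + d^{[1]\text{-}2}([s],s')\\
&\leq d^{1\text{-}[1]}(s,[s]) + d^{[1]\text{-}[2]}([s],[s']) + d^{[2]\text{-}2}([s'],s').
\end{align*}
The first and third terms are bounded by $\sigma_1$ and $\sigma_2$ (using symmetry for the third), while the middle term equals $d^{[1]\text{-}[2]}(s,s')$ by the bisimilarity identity. This yields $d^{1\text{-}2}(s,s')-d^{[1]\text{-}[2]}(s,s') \leq \sigma_1+\sigma_2$. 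Reversing the roles of the original and aggregated MDPs in the same chain (with the intermediate MDP chosen as $\mathcal{M}_1$ and $\mathcal{M}_2$ successively) gives the matching lower bound, and taking the maximum over $(s,s')\in\mathcal{S}_1\times\mathcal{S}_2$ produces the first inequality in~(\ref{eq17}). The second inequality is immediate from $\sigma_1+\sigma_2 \leq \tilde{\sigma}_1/(1-\gamma)+\tilde{\sigma}_2/(1-\gamma)$.

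The main obstacle I anticipate is justifying the bisimilarity identity $d^{[1]\text{-}[2]}(s,s')=d^{[1]\text{-}[2]}([s],[s'])$ cleanly: without it, the triangle-inequality chain would only control $d^{[1]\text{-}[2]}([s],[s'])$, leaving a gap between that quantity and the $d^{[1]\text{-}[2]}(s,s')$ appearing in the statement. Everything else is a mechanical assembly of Theorems~\ref{Theorem:3}, \ref{Theorem:4}, and~\ref{Theorem:vfa}.
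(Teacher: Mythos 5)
Your proof is correct and follows essentially the same route as the paper, which simply combines the inter-MDP triangle inequality (Theorem~\ref{Theorem:4}) with the bound $\max_s d^{1\text{-}[1]}(s,[s])=\sigma_1\leq\tilde{\sigma}_1/(1-\gamma)$ from Theorem~\ref{Theorem:vfa}. The bisimilarity identity $d^{[1]\text{-}[2]}(s,s')=d^{[1]\text{-}[2]}([s],[s'])$ that you single out is exactly the step the paper leaves implicit, and your justification of it via the fixed-point iteration (states and their representatives share rewards and transitions in the aggregated MDPs, so $\delta$ cannot distinguish them at any iterate) is sound.
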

\begin{proof}
This theorem is easily derived by combining Theorem~\ref{Theorem:4} and Theorem~\ref{Theorem:vfa}.
\end{proof}
When the compared MDPs are identical, i.e., $\mathcal{M}_2=\mathcal{M}_1=\langle \mathcal{S}, \mathcal{A}, \mathbb{P}, R, \gamma\rangle$, Theorem \ref{Theorem:7} reduces to the aggregation error bound in the single-MDP scenario as 
\begin{equation}
    \max_{s,s'}|d^{\textnormal{1-1}}(s,s')-d^{[1]\textnormal{-}[1]}(s,s')|\leq 2 \sigma_1 \leq 2\tilde{\sigma}_1/(1-\gamma),
\end{equation}
indicating significant tightness of the GBSM-based SSA bound $2 \sigma_1$ compared to the BSM-based one $2\tilde{\sigma}_1 (2+\gamma)/(1-\gamma)$~\cite{10.1137/10080484X}.

\subsubsection{Sampling-based Estimation}

To estimate a probability distribution $P$ through statistical sampling, we define the empirical distribution based on $K$ samples as $\hat{P}(x)=\frac{1}{K} \sum_{i=1}^K \delta_{X_i}(x)$, where $\{X_1, X_2, \ldots, X_K\}$, are $K$ independent points sampled from $P$ and $\delta$ denotes the Dirac measure at $X_i$ such that $\delta_{X_i}(x)=1$ if $x = X_i$ and $0$ otherwise. 
Then the empirical MDP for $\mathcal{M}_1$ is constructed by sampling $K$ points for each $\mathbb{P}_1(\cdot|s,a)$, defined by $\mathcal{M}_{\hat{1}}=\langle \mathcal{S}_1, \mathcal{A}_1, \hat{\mathbb{P}}_1, R_1, \gamma\rangle$. The estimation error bound is derived as follows.

\begin{theorem}[\textbf{SSA estimation error bound}]\label{Theorem:8}
Given MDPs $\mathcal{M}_1$, $\mathcal{M}_2$ and their empirical counterparts $\mathcal{M}_{\hat{1}}$,$\mathcal{M}_{\hat{2}}$, the SSA error bound is given by
\begin{align}\label{the8eq1}
   \max_{s,\!s'}|d^{\textnormal{1-2}}(s,\!s')\!-\!d^{\hat{1}\textnormal{-}\hat{2}}(s,s')|\!\leq \! \max_{s}d^{1\textnormal{-}\hat{1}}(s,\!s) \!+\!\max_{s'}d^{2\textnormal{-}\hat{2}}(s'\!,\!s').
\end{align}
To reach an error less than $\epsilon$ with a probability of $1-\alpha$, the sample complexity is given by
\begin{align}\label{samplecomplexity}
   K \geq -\ln(\alpha/2) \frac{\gamma^2\bar{R}^2|\mathcal{S}_{\cdot}|^2}{2\epsilon^2(1-\gamma)^4},
\end{align}
for each state-action pair in $\mathcal{M}_1$ (where $|\mathcal{S}_{\cdot}|=|\mathcal{S}_1|$) and $\mathcal{M}_2$ (where $|\mathcal{S}_{\cdot}|=|\mathcal{S}_2|$).
\end{theorem}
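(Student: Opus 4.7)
The plan is to treat the two claims separately. The first inequality follows from the metric properties of GBSM already established, while the sample complexity requires an additional concentration argument.

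For (\ref{the8eq1}), I would invoke the inter-MDP triangle inequality (Theorem~\ref{Theorem:4}) twice, using the empirical MDPs as intermediate stops. Using $\mathcal{M}_{\hat{1}}$ as the intermediate MDP with intermediate state $s$, and then $\mathcal{M}_{\hat{2}}$ with intermediate state $s'$ applied to the remaining cross-MDP term, yields
\begin{equation*}
    d^{\textnormal{1-2}}(s,s') \leq d^{\textnormal{1-}\hat{1}}(s,s) + d^{\hat{1}\textnormal{-}\hat{2}}(s,s') + d^{\hat{2}\textnormal{-}2}(s',s').
\end{equation*}
GBSM symmetry (Theorem~\ref{Theorem:3}) rewrites $d^{\hat{2}\textnormal{-}2}(s',s')$ as $d^{\textnormal{2-}\hat{2}}(s',s')$, and the reverse direction follows by swapping the roles of the original and empirical MDPs in the same two-step argument. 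Combining the two directions and taking maxima over $(s,s')\in\mathcal{S}_1\times\mathcal{S}_2$ delivers (\ref{the8eq1}).

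For the sample complexity (\ref{samplecomplexity}), I would first reduce $d^{\textnormal{1-}\hat{1}}$ to a transition-level TV distance. Since $\mathcal{M}_1$ and $\mathcal{M}_{\hat{1}}$ share state space, action space, and reward function, Theorem~\ref{Theorem:5} applies, with the reward-difference contribution in $\delta_{\textnormal{TV}}$ vanishing for matched state-action pairs. Upper-bounding the Hausdorff term via the identity action correspondence then yields
\begin{equation*}
    \max_s d^{\textnormal{1-}\hat{1}}(s,s) \leq \frac{\gamma\bar{R}}{(1-\gamma)^2} \max_{s,a} \textnormal{TV}\big(\mathbb{P}_1(\cdot|s,a), \hat{\mathbb{P}}_1(\cdot|s,a)\big),
\end{equation*}
with an analogous bound for $\mathcal{M}_2$ versus $\mathcal{M}_{\hat{2}}$. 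By (\ref{the8eq1}), ensuring each TV distance is at most $\frac{\epsilon(1-\gamma)^2}{2\gamma\bar{R}}$ delivers overall error at most $\epsilon$.

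I would then apply Hoeffding's inequality to the empirical estimator $\hat{\mathbb{P}}(\tilde{s}|s,a) = \frac{1}{K}\sum_{i=1}^K \delta_{X_i}(\tilde{s})$: each coordinate deviates by more than $t$ with probability at most $2\exp(-2Kt^2)$. Combined with the elementary bound $\textnormal{TV} \leq \frac{|\mathcal{S}_\cdot|}{2}\max_{\tilde{s}}|\hat{\mathbb{P}}(\tilde{s}) - \mathbb{P}(\tilde{s})|$, setting $t = \frac{\epsilon(1-\gamma)^2}{\gamma\bar{R}|\mathcal{S}_\cdot|}$ translates the TV target into a per-coordinate tolerance, and inverting $2\exp(-2Kt^2)\leq \alpha$ recovers the stated closed-form $K$. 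The main obstacle is the careful handling of the Hausdorff term in Theorem~\ref{Theorem:5}, where both the $\max$-$\min$ and $\min$-$\max$ components must be controlled by exhibiting the identity action map as a feasible matching so that the reward differences drop out cleanly; a secondary technical point is that the coordinatewise Hoeffding bound combined with the $\ell_1$-versus-$\ell_\infty$ inequality for TV inflates the complexity by a factor of $|\mathcal{S}_\cdot|$, which is precisely what produces the $|\mathcal{S}_\cdot|^2$ scaling in the final expression.
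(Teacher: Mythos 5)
Your proposal is correct and follows essentially the same route as the paper: the double application of the inter-MDP triangle inequality (with symmetry) for \eqref{the8eq1}, the reduction via Theorem~\ref{Theorem:5} to a transition-level total-variation bound of the form $\frac{\gamma\bar{R}}{(1-\gamma)^2}\max_{s,a}\textnormal{TV}(\mathbb{P}_1(\cdot|s,a),\hat{\mathbb{P}}_1(\cdot|s,a))$, and the per-coordinate Hoeffding argument with tolerance $\frac{\epsilon(1-\gamma)^2}{\gamma\bar{R}|\mathcal{S}_\cdot|}$ yielding the stated $K$. The only cosmetic difference is that you route the $\ell_1$-to-$\ell_\infty$ step through the inequality $\textnormal{TV}\leq\frac{|\mathcal{S}_\cdot|}{2}\max_{\tilde{s}}|\hat{\mathbb{P}}(\tilde{s})-\mathbb{P}(\tilde{s})|$ while the paper expands the TV sum directly; the arithmetic and the resulting bound are identical.
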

\begin{proof}
    Inequality (\ref{the8eq1}) is easily obtained from Theorem \ref{Theorem:4}. In terms of the sample complexity, we apply Theorem \ref{Theorem:5} and obtain
    \begin{align}
        &\max_{s,s'}|d^{\textnormal{1-2}}(s,s')-d^{\hat{1}\textnormal{-}\hat{2}}(s,s')| \notag \\
        \leq\ &\max_{s} d^{\textnormal{1-}\hat{1}}(s,s) + \max_{s'} d^{\textnormal{2-}\hat{2}}(s',s') \notag\\
        \leq\ &\big(\max_{s} d^{\textnormal{1-}\hat{1}}_\textnormal{TV}(s,s) + \max_{s'} d^{\textnormal{2-}\hat{2}}_\textnormal{TV}(s',s')\big)\big/\big( 1-\gamma \big)\notag\\
        \leq\ &\frac{\gamma \bar{R}}{(1-\gamma)^2} \bigg(\max_{s,a}\textnormal{TV}\big(\mathbb{P}_1(\cdot|s,a), \hat{\mathbb{P}}_1(\cdot|s,a) \big) \notag\\
        &\qquad\qquad+ \max_{s',a}\textnormal{TV}\big(\mathbb{P}_2(\cdot|s',a), \hat{\mathbb{P}}_2(\cdot|s',a) \big)\bigg)\notag \\
        =\ & \frac{\gamma \bar{R}}{2(1\!-\!\gamma)^2} \Bigg(\! \max_{s,a} \Bigg\{\! \sum_{\tilde{s}\in \mathcal{S}_1} \!\big|\mathbb{P}_1(\tilde{s}|s,a)\!-\! \hat{\mathbb{P}}_1(\tilde{s}|s,a)\big|\! \Bigg\} \!\notag\\
    &\qquad\qquad\ +\!  \max_{s'\!,a} \Bigg\{\! \sum_{\tilde{s}\in \mathcal{S}_2} \!\big|\mathbb{P}_2(\tilde{s}|s'\!,a)\!-\! \hat{\mathbb{P}}_2(\tilde{s}|s'\!,a)\big| \!\Bigg\}\!\Bigg).
    \end{align}
    To ensure the estimation error remains below $\epsilon$, we require $|\mathbb{P}_1(\tilde{s}|s,a)- \hat{\mathbb{P}}_1(\tilde{s}|s,a)|\leq \frac{\epsilon(1-\gamma)^2}{\gamma \bar{R}|\mathcal{S}_1|}$ and $|\mathbb{P}_2(\tilde{s}|s,a)- \hat{\mathbb{P}}_2(\tilde{s}|s,a)|\leq \frac{\epsilon(1-\gamma)^2}{\gamma \bar{R}|\mathcal{S}_2|}$. Next, by applying the Hoeffding's inequality~\cite{hoeffding1994probability} that is defined by $\operatorname{Pr}\{|\hat{P}(s)-P(s)|\geq\epsilon\} \leq 2\rm e^{-2K\epsilon^2}$, we derive the desired sample complexity.
\end{proof}
When the compared MDPs are identical, the estimation error bound in Theorem \ref{Theorem:8} reduces to $2 \max_{s}d^{1\textnormal{-}\hat{1}}(s,s)$ for the single-MDP setting. We now prove the tightness of this new sampling error bound compared to the existing bound $\frac{2 \gamma}{1\!-\!\gamma}\max_{a,s} W_1( \hat{\mathbb{P}}(\cdot|s,a),\mathbb{P}(\cdot|s,a);d^\sim)$ in~\cite{10.1137/10080484X}. According to the transitive property of inequality on the Wasserstein distance defined in (\ref{preservation}), we have
\begin{align}
& \ d^{1\textnormal{-}\hat{1}}(s,s) \notag\\
=&\  H(X_{s},X_{s};\delta (d^{1\textnormal{-}\hat{1}}))  \notag\\
    \leq &\  \gamma W_1( \mathbb{P}(\cdot|s,a),\hat{\mathbb{P}}(\cdot|s,a) ;d^{1\textnormal{-}\hat{1}}) \notag\\
    \leq & \  \gamma W_1(\mathbb{P}(\cdot|s,a),\hat{\mathbb{P}}(\cdot|s,a), ;d^{1\textnormal{-}1})\notag\\
    &\ +\gamma W_1(\hat{\mathbb{P}}(\cdot|s,a),\hat{\mathbb{P}}(\cdot|s,a);d^{1\textnormal{-}\hat{1}})\notag\\
    \leq & \ \gamma W_1(\hat{\mathbb{P}}(\cdot|s,a), \mathbb{P}(\cdot|s,a) ;d^{1\textnormal{-}1})  + \gamma\max_{s}d^{1\textnormal{-}\hat{1}}(s,s).
\end{align}
Taking the maximum of both sides and rearranging the inequality yields 
\begin{align}
    2 \max_{s}d^{1\textnormal{-}\hat{1}}(s,s)\leq & \frac{2 \gamma}{{1-\gamma}}\max_{a,s} W_1( \hat{\mathbb{P}}(\cdot|s,a),\mathbb{P}(\cdot|s,a);d^\textnormal{1-1})\notag\\
    \leq& \frac{2 \gamma}{{1-\gamma}}\max_{a,s} W_1( \hat{\mathbb{P}}(\cdot|s,a),\mathbb{P}(\cdot|s,a);d^\sim).
\end{align}
We have thus established that our sampling error bound is strictly tighter than the BSM-derived bound given by \eqref{Eq:aggBSM} in~\cite{10.1137/10080484X}.

\subsection{Composite Applications}
The established metric properties of GBSM make it a well-suited tool for analyzing errors from different sources. Typically, in case the approximation combines both state aggregation and sampling-based estimation, where the approximated MDP is defined as $\mathcal{M}_{[\hat{1}]}=\langle [\mathcal{S}_1], \mathcal{A}_1, [\hat{\mathbb{P}}_1], R_1, \gamma\rangle$, we have
\begin{align}
    &\max_{s,s'}\big|d^{\textnormal{1-1}}(s,s')-d^{[\hat{1}]\textnormal{-}[\hat{1}]}(s,s')\big| \notag\\
    \leq &\  2 \max_s d^{1\textnormal{-}[\hat{1}]}(s,s) \notag\\
    \leq &\ 2 \max_s d^{\textnormal{1-[1]}}(s,s) + 2 \max_s d^{[1]\textnormal{-}[\hat{1}]}(s,s)
\end{align}
via the inter-MDP triangle inequality. It enables a decoupled analysis of error, and thus results in an explicit and closed-formed sample complexity, i.e., $K\geq-\ln(\alpha/2) \frac{\gamma^2\bar{R}^2|\mathcal{U}|^2}{2\epsilon^2(1-\gamma)^4}$ for an error below $\epsilon$ with probability of $1-\alpha$, where $\mathcal{U}$ is the set of representative states. In contrast, the sample complexity for estimation in BSM-based analysis~\cite{10.1137/10080484X} still depends on the unaggregated $d^\sim$, as shown in (\ref{Eq:aggBSM}), resulting in an asymptotic result.

Additionally, by combining Corollary~\ref{corolactionmapping} and Theorem~\ref{Theorem:8}, we establish the sample complexity for model-based RL to yield a near-optimal policy. In model-based RL, one first approximates an MDP from data, then derives a value function from the approximate MDP, and finally obtains a policy through the value function. We assume that in the approximated MDP model, the derived value function and associated policy are optimal, and have the following corollary.
\begin{corol}[\textbf{Sample complexity of model-based RL}]
Consider an RL task defined by an MDP $\mathcal{M}_1=\langle \mathcal{S}_1, \mathcal{A}_1, \mathbb{P}_1, R_1, \gamma\rangle$. To ensure a policy regret less than $\epsilon$ with a probability of $1-\alpha$, the sample complexity of model-based RL is given by
\begin{align}
   K \geq -\ln(\alpha/2) \frac{\gamma^2\bar{R}^2|\mathcal{S}_1|^2}{2\epsilon^2(1-\gamma)^6}
\end{align}
for each state-action pair.
\end{corol}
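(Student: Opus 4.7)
The plan is to chain Corollary~\ref{corolactionmapping} with the sampling-based estimation machinery of Theorem~\ref{Theorem:8}. In model-based RL the policy $\pi$ is optimal on the empirical MDP $\mathcal{M}_{\hat{1}}=\langle\mathcal{S}_1,\mathcal{A}_1,\hat{\mathbb{P}}_1,R_1,\gamma\rangle$ constructed from $K$ samples per state-action pair, but is evaluated on the true MDP $\mathcal{M}_1$. I would therefore treat $\mathcal{M}_{\hat{1}}$ as the source and $\mathcal{M}_1$ as the target in Corollary~\ref{corolactionmapping}, using the identity state and action mappings. This is admissible because the two MDPs share $\mathcal{S}_1$ and $\mathcal{A}_1$, and, per the footnote of the corollary, the identity action mapping satisfies the greedy condition whenever the compared MDPs share an action space.

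Under these mappings, the optimality of $\pi$ on $\mathcal{M}_{\hat{1}}$ forces the source-suboptimality term $\max_{s}|V_{\hat{1}}^*(s)-V_{\hat{1}}^\pi(s)|$ to vanish, so Corollary~\ref{corolactionmapping} collapses to
\[
\max_{s\in\mathcal{S}_1}|V_1^*(s)-V_1^\pi(s)|\leq \frac{2}{1-\gamma}\max_{s\in\mathcal{S}_1}d^{\hat{1}\textnormal{-}1}(s,s).
\]
I would then bound the right-hand side by repeating the TV-plus-Hoeffding argument from the proof of Theorem~\ref{Theorem:8}. Since $\mathcal{M}_1$ and $\mathcal{M}_{\hat{1}}$ share the same reward function, the reward term in $\delta_{\textnormal{TV}}$ drops out, and Theorem~\ref{Theorem:5} yields $\max_s d^{\hat{1}\textnormal{-}1}(s,s)\leq \frac{\gamma\bar{R}}{(1-\gamma)^2}\max_{s,a}\textnormal{TV}(\mathbb{P}_1(\cdot|s,a),\hat{\mathbb{P}}_1(\cdot|s,a))$. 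Expanding the TV distance coordinate-wise and requiring each $|\mathbb{P}_1(\tilde{s}|s,a)-\hat{\mathbb{P}}_1(\tilde{s}|s,a)|\leq \eta$, the target inequality $\frac{2}{1-\gamma}\cdot\frac{\gamma\bar{R}}{(1-\gamma)^2}\cdot\frac{|\mathcal{S}_1|\eta}{2}\leq\epsilon$ forces $\eta\leq\epsilon(1-\gamma)^3/(\gamma\bar{R}|\mathcal{S}_1|)$, and Hoeffding's inequality $\operatorname{Pr}\{|\hat{P}(s)-P(s)|\geq\eta\}\leq 2e^{-2K\eta^2}$ then delivers precisely the claimed $K$.

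The main obstacle is not conceptual but purely bookkeeping: the $(1-\gamma)^6$ in the denominator arises by squaring the accumulated $(1-\gamma)^{-3}$ factor, namely the $(1-\gamma)^{-1}$ in the regret transfer and the $(1-\gamma)^{-2}$ converting TV to $d^{\hat{1}\textnormal{-}1}$ via Theorem~\ref{Theorem:5}, so careful propagation of these powers through the Hoeffding inversion is essential. The only substantive check is that the identity action mapping genuinely realizes the greedy choice required by Corollary~\ref{corolactionmapping} for the pair $(\mathcal{M}_{\hat{1}},\mathcal{M}_1)$, which is immediate from their shared action space.
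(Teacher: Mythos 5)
Your proposal is correct and follows essentially the same route as the paper: invoke Corollary~\ref{corolactionmapping} with identity mappings so the source-suboptimality term vanishes, then push the resulting requirement $\max_s d^{1\text{-}\hat{1}}(s,s)\leq\epsilon(1-\gamma)/2$ through the TV-plus-Hoeffding chain of Theorems~\ref{Theorem:5} and~\ref{Theorem:8}. The paper compresses the last step into the substitution $\epsilon\mapsto\epsilon(1-\gamma)$ in (\ref{samplecomplexity}), whereas you unfold the same bookkeeping explicitly; the exponents and constants agree.
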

\begin{proof}
In model-based RL, where transitions are estimated within the original state and action spaces $\mathcal{S}_1$ and $\mathcal{A}_1$, the mappings $g(a)$ and $f(s')$ are implicitly assumed to be identity functions, i.e., $g(a)=a$ and $f(s')=s'$. By using Corollary~\ref{corolactionmapping}, we know that ensuring a policy regret less than $\epsilon$ (transferred from approximated MDP to the original one) requires $\max_s d^{1\textnormal{-}\hat{1}}(s,s)\leq \epsilon(1-\gamma) /2$. Note that in Theorem~\ref{Theorem:8}, we require $\max_s d^{1\textnormal{-}\hat{1}}(s,s)\leq \epsilon/2$. Replacing $\epsilon$ in (\ref{samplecomplexity}) by $\epsilon(1-\gamma)$ yields the desired result.
\end{proof}

\section{Extensions and Algorithm}
To demonstrate the compatibility and practicality of GBSM, we showcase how it can be extended to the variant of BSM and propose an efficient algorithm for computing GBSM in practical scenarios.
\subsection{Extensions to BSM variants}\label{laxgbsm}
The proposed GBSM framework is readily extendable to on-policy BSM~\cite{Castro_2020}, which computes state similarities between MDPs under non-optimal policies. To achieve this, we rewrite the Definition~\ref{DefiBSM} to
\begin{align}
    d^{1\textnormal{-}2}_\pi(s,s')\!=|R_1^\pi(s)\!-\!R_2^\pi(s')| \!+\!\gamma W_1(\mathbb{P}_1^\pi(\cdot|s), \mathbb{P}_2^\pi(\cdot|s') ;d^{1\textnormal{-}2}_\pi),
\end{align}
where $R_{1}^\pi(s)=\sum_a \pi(a|s)R_{1}(s,a)$ and $\mathbb{P}_1^\pi(\tilde{s}|s)=\sum_a \pi(a|s)\mathbb{P}_1(\tilde{s}|s,a),\forall\tilde{s}\in\mathcal{S}_1$ represent the expected reward and transition probabilities for a non-optimal policy $\pi$ in $\mathcal{M}_1$, with corresponding terms $R_{2}^\pi$ and $\mathbb{P}_2^\pi$ defined similarly for $\mathcal{M}_2$. Our theoretical properties are also preserved in this setting: the value difference bound in Theorem~\ref{Theorem:2} now applies to the on-policy value function by 
\begin{align}\label{VFBoundonPi}
    |V^\pi_1(s)-V^\pi_2(s')|\leq d_\pi^{1\textnormal{-}2}(s,s').
\end{align}
Then metric properties in Theorems~\ref{Theorem:3} and~\ref{Theorem:4} follow directly, and $d^{1\textnormal{-}2}_\pi(s,s')$ is bounded by an on-policy TV-based metric formulated as
\begin{align}
       d^{1\textnormal{-}2}_{\text{TV},\pi}(s,\!s')\!=&\Big\{\!\big|R_1^\pi(s)\!-\!R_2^\pi(s)\big|\!+\!\frac{\gamma \bar{R}}{1\!-\!\gamma} \text{TV}\big(\mathbb{P}_1^\pi(\cdot|s), \mathbb{P}_2^\pi(\cdot|s) \big)\!\Big\}
\end{align}
as Theorem~\ref{Theorem:5} for on-policy GBSM. In addition, as a direct consequence of (\ref{VFBoundonPi}), we have
\begin{align}
    \max_s |V^\pi_1(s)-V^\pi_{[1]}(s)|\leq \max_s d_\pi^{1\textnormal{-}[1]}(s,s)\leq \max_s \frac{\tilde{d}_\pi(s,[s])}{1-\gamma}
\end{align}
for VFA under non-optimal policy, a tighter bound compared with the existing result $\max_s 2\tilde{d}_\pi(s,[s])/(1-\gamma)$ in~\cite{10.5555/3540261.3540625}. See Appendix~\ref{Appendix6} for the proof.

\subsection{Efficient Algorithm for Calculating GBSM}
While the theoretical properties of GBSM provide a robust framework for multi-MDP analysis, its practical application requires an efficient computational method, particularly when dealing with large state spaces or when the exact transition probabilities and reward functions of an MDP are unknown (e.g., in a real-world system). To address these challenges, we combine the theorems on state aggregation and sampling-based estimation and propose an efficient algorithm for calculating GBSM.

Typically, when computing GBSM between two MDPs, there would be a fully-known source MDP $\mathcal{M}_\text{s}$ and an insufficiently known target MDP $\mathcal{M}_\text{t}$. For instance, in a sim-to-real task, the former one is the MDP of the simulation environment, while the latter one is the MDP in the real world, from which only a finite set of sampled data tuples $(s, a, s', r)$ is available. Our methodology, detailed in Algorithm~\ref{algoGBSM}, addresses this by first constructing an aggregated empirical model from the limited data and then aligning the state spaces in two compared MDPs to ensure a valid comparison. The process consists of three main stages:
\begin{algorithm}[t]
\caption{Computing GBSM in practice}\label{algoGBSM}
\begin{algorithmic}[1]
\State \textbf{Input:} 
Set of sampled data from target MDP $\mathcal{D}=\{(s, a, s', r)\}$, transition probability in source MDP $\mathbb{P}_\text{s}$, reward function in source MDP $R_\text{s}$, sample threshold $\eta_1$, and convergence threshold $\eta_2$.
\State \textbf{Output:} GBSM $d$
\State \textit{Stage 1: Construct representative state set for target MDP}
\State $\mathcal{U}_{\text{t}} \gets [~]$

\For{$s \in \mathcal{S}$}
    \For{$a \in \mathcal{A}$}
        \If{number of samples for $(s, a)$ in $\mathcal{D} \geq \eta_1$}
            \State Append $s$ to $\mathcal{U}_{\text{t}}$
        \EndIf
    \EndFor
\EndFor \\
$\mathcal{D}' = \{ (s, a, s', r) \in \mathcal{D} \mid s \in \mathcal{U}_{\text{t}} \land s' \in \mathcal{U}_{\text{t}} \}$
\For{$s \in \mathcal{U}_{\text{t}}$}
    \For{$a \in \mathcal{A}$}
        \State Estimate $\mathbb{P}_\text{t}(\cdot | s, a)$ and $R_\text{t}(s, a)$ from $\mathcal{D}'$
    \EndFor
\EndFor
\State \textit{Stage 2: Construct representative state set for source MDP}
\State $\mathcal{U}_{\text{s}} \gets \mathcal{U}_{\text{t}}$
\Repeat
    \For{$u \in \mathcal{U}_{\text{s}}$}
        \For{$a \in \mathcal{A}$}
            \For{$s \in \mathcal{S}$}
                \If{$\mathbb{P}_\text{s}(s | u, a) > 0$ and $s \notin \mathcal{U}_{\text{s}}$}
                    \State Append $s$ to $\mathcal{U}_{\text{s}}$
                \EndIf
            \EndFor
        \EndFor
    \EndFor
\Until{$|\mathcal{U}_{\text{s}}|$ does not increase}
\State \textit{Stage 3: Calculate GBSM iteratively}
\For{$s \in \mathcal{U}_{\text{t}},\ s' \in \mathcal{U}_{\text{s}}$}
    \State $d_0(s, s') \gets \max_{a \in \mathcal{A}} \left| R_\text{t}(s, a) - R_\text{s}(s', a) \right|$
\EndFor

\For{$n = 1, 2, \dots$}
    \State Compute $d_n$ according to Equation (5)
    \If{$\max_{s,s'} |d_n(s, s') - d_{n-1}(s, s')| \leq \eta_2$}
        \State ${d} \gets d_n$
        \State \Return ${d}$ 
    \EndIf
\EndFor

\end{algorithmic}
\end{algorithm}
\begin{enumerate}
    \item \textbf{Empirical model construction:} An empirical model of the target MDP is constructed from the collected dataset. To ensure the reliability of the estimated dynamics, we first form a representative state set $\mathcal{U}_\text{t}$, containing states for which a sufficient number of samples is available, and estimate empirical transition probabilities and rewards for representative states. 
    \item \textbf{Closed state space formulation:} A key prerequisite for the iterative calculation of GBSM is that the state spaces must be closed, meaning any state reachable from a state within the set must also be in the set. It is formalized by: $\forall (s,a)\in\mathcal{U}\!\times\!\mathcal{A}$, if $\mathbb{P}(s'|s,a) > 0$, then $ s'\in \mathcal{U}$. By filtering the collected data, we first construct a closed representative state space $\mathcal{U}_\text{t}$ in the real MDP. For the fully known source MDP $\mathcal{M}_\text{s}$, we begin by initializing the representative set as $\mathcal{U}_\text{s}\gets\mathcal{U}_\text{t}$. We then iteratively expand it by adding any states that are reachable from the current $\mathcal{U}_\text{s}$ until no new states can be added. This expansion ensures that $\mathcal{U}_\text{s}$ is a closed state space for the subsequent iterative computation.
    \item \textbf{Iterative GBSM computation:} With the empirical target MDP defined on $\mathcal{U}_\text{t}$ and the source MDP defined on the closed set $\mathcal{U}_\text{s}$, the GBSM is computed iteratively according to Equation (\ref{eq:dn}) until the distance converges within a predefined threshold.
\end{enumerate}

This algorithm facilitates the approximation of the GBSM by utilizing a carefully selected subset of representative states. It effectively addresses the challenges posed by the inaccessibility of exact transition probabilities and reward functions, which are common in practical applications.

\section{Numerical Results}

In this section, we empirically validate the theoretical results derived from GBSM. The section is organized into two parts. First, we use synthetic MDPs to rigorously verify the theoretical bounds and properties in a controlled setting. Second, we demonstrate the practicality of GBSM in a complex, real-world application involving sim-to-real RL for the resource allocation task in wireless network optimization.

\begin{figure*}[t]
\centering
\includegraphics[width=0.3\textwidth]{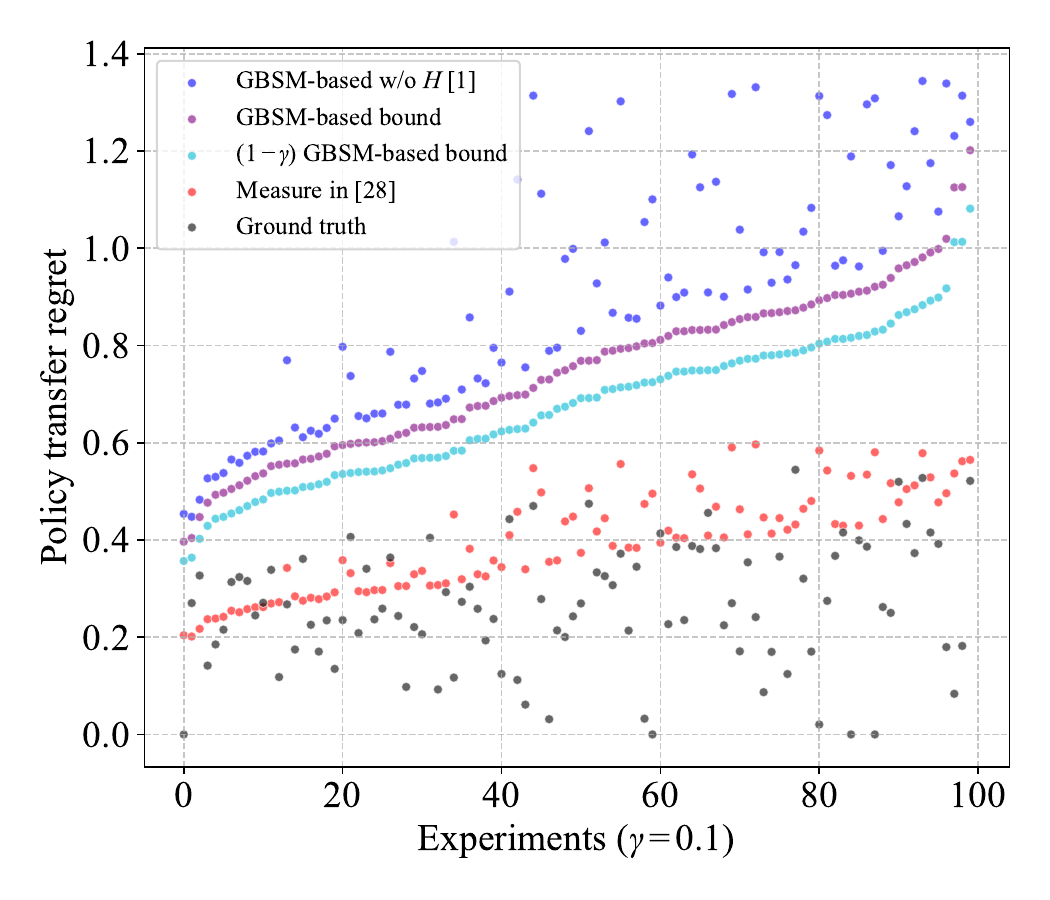}
\includegraphics[width=0.3\textwidth]{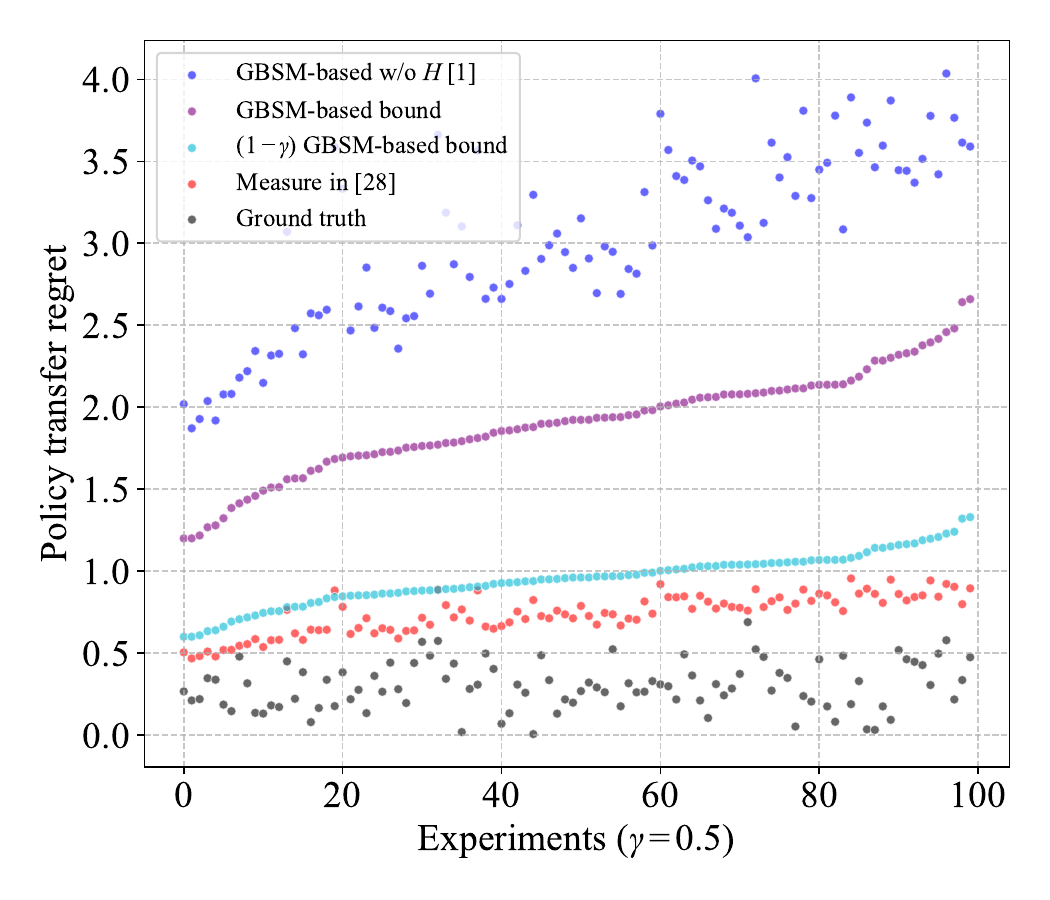}
\includegraphics[width=0.3\textwidth]{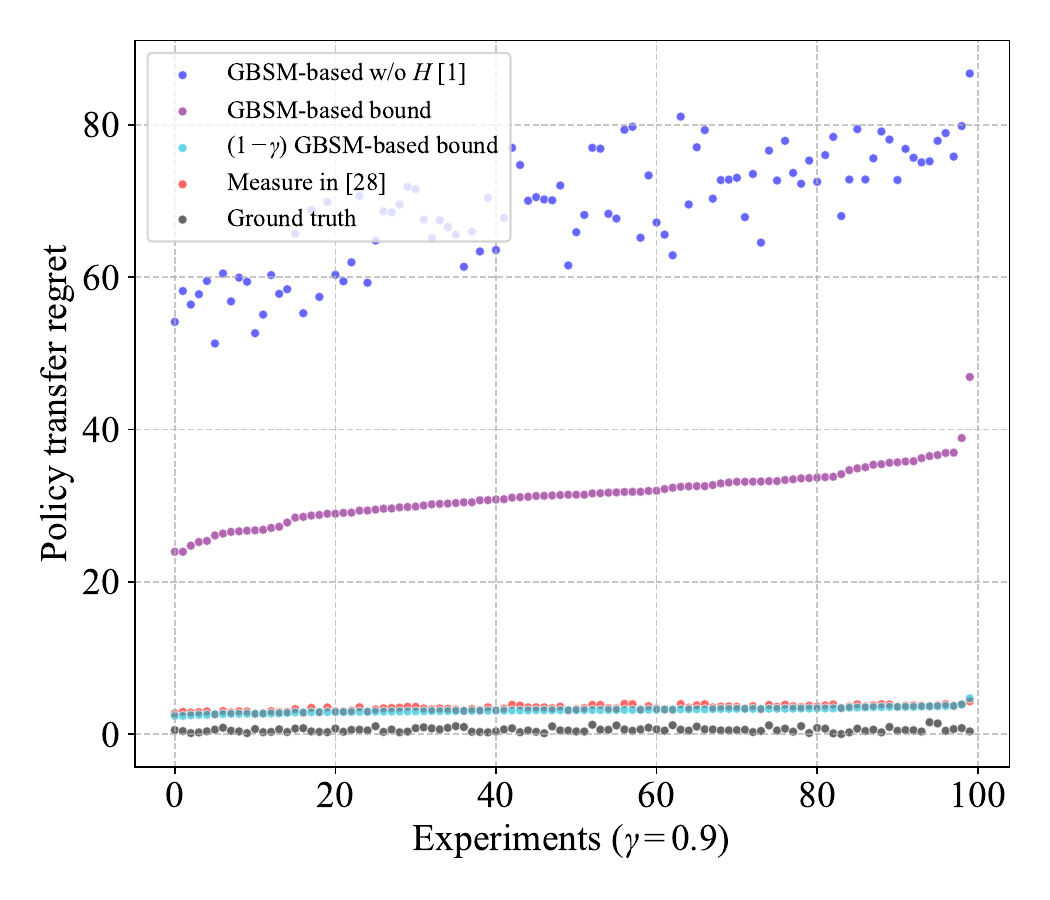}
\caption{Policy transfer experiments on random Garnet MDPs.}\label{Fig1}
\end{figure*}

\begin{figure*}[!t]
\centering
{\includegraphics[width=0.3\textwidth]{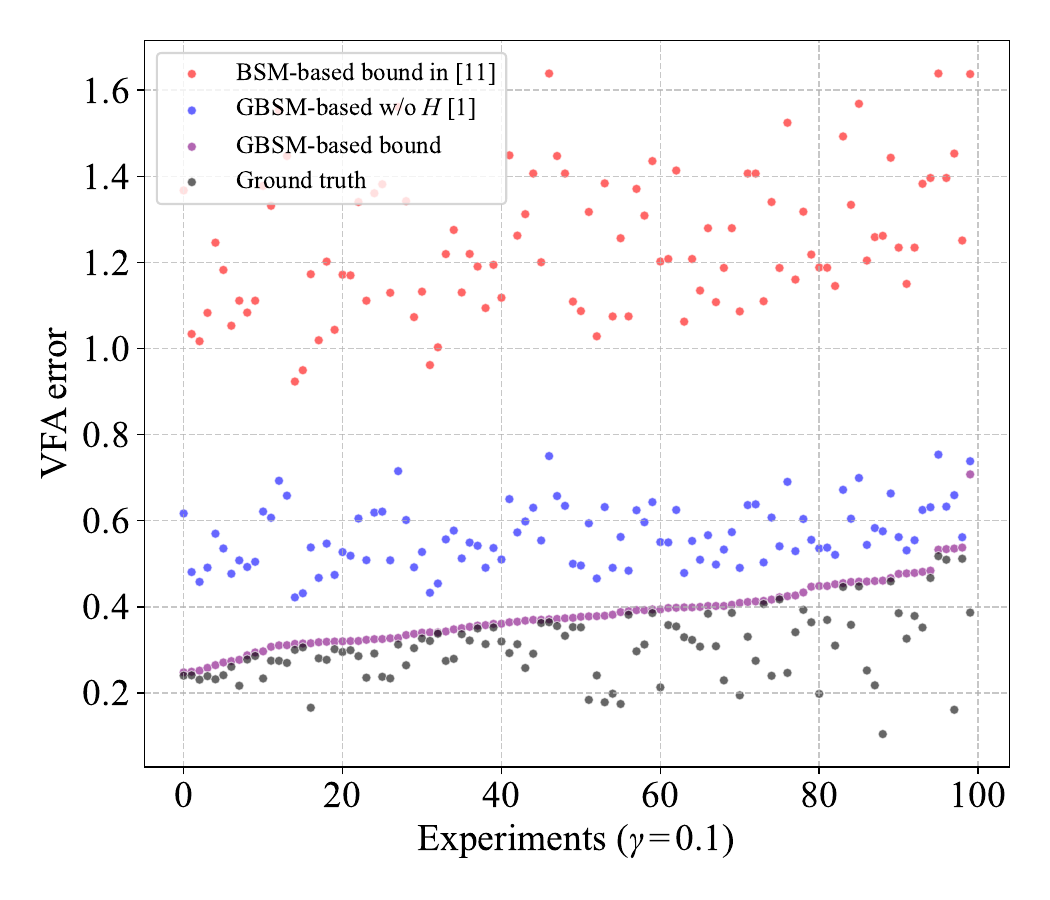}}
{\includegraphics[width=0.3\textwidth]{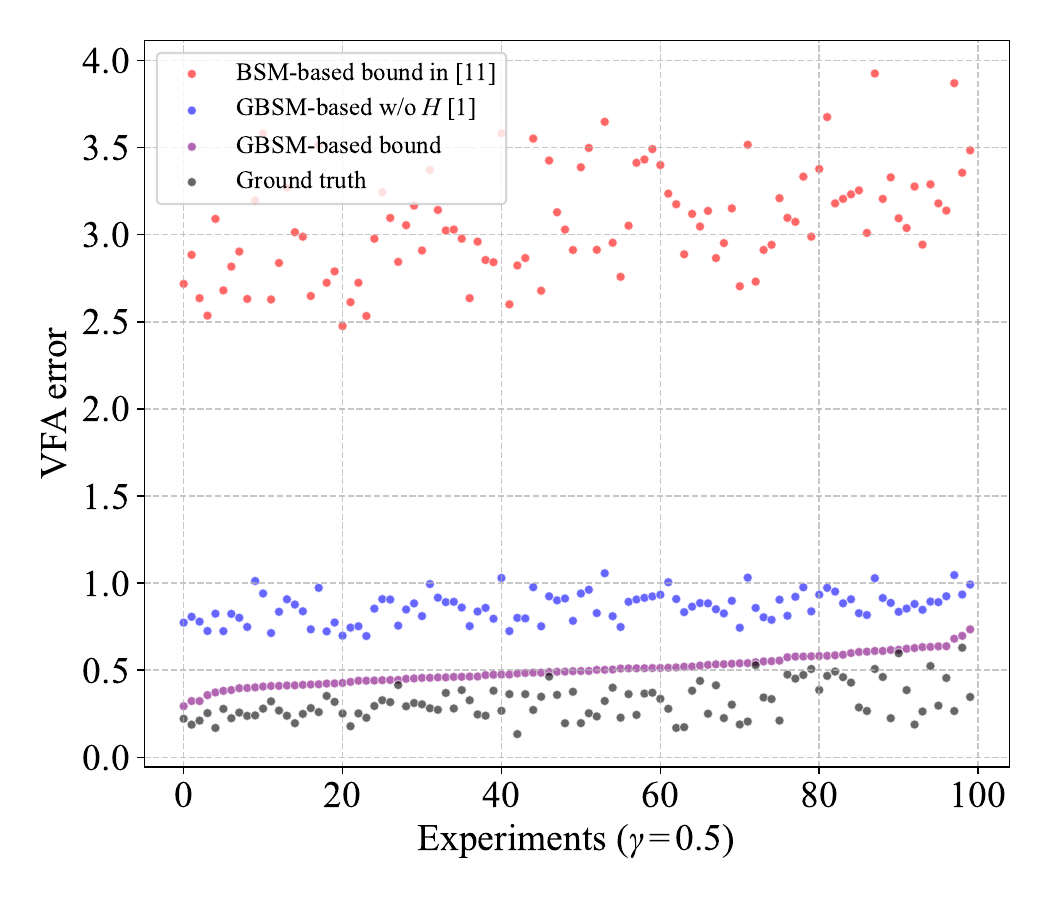}}
{\includegraphics[width=0.3\textwidth]{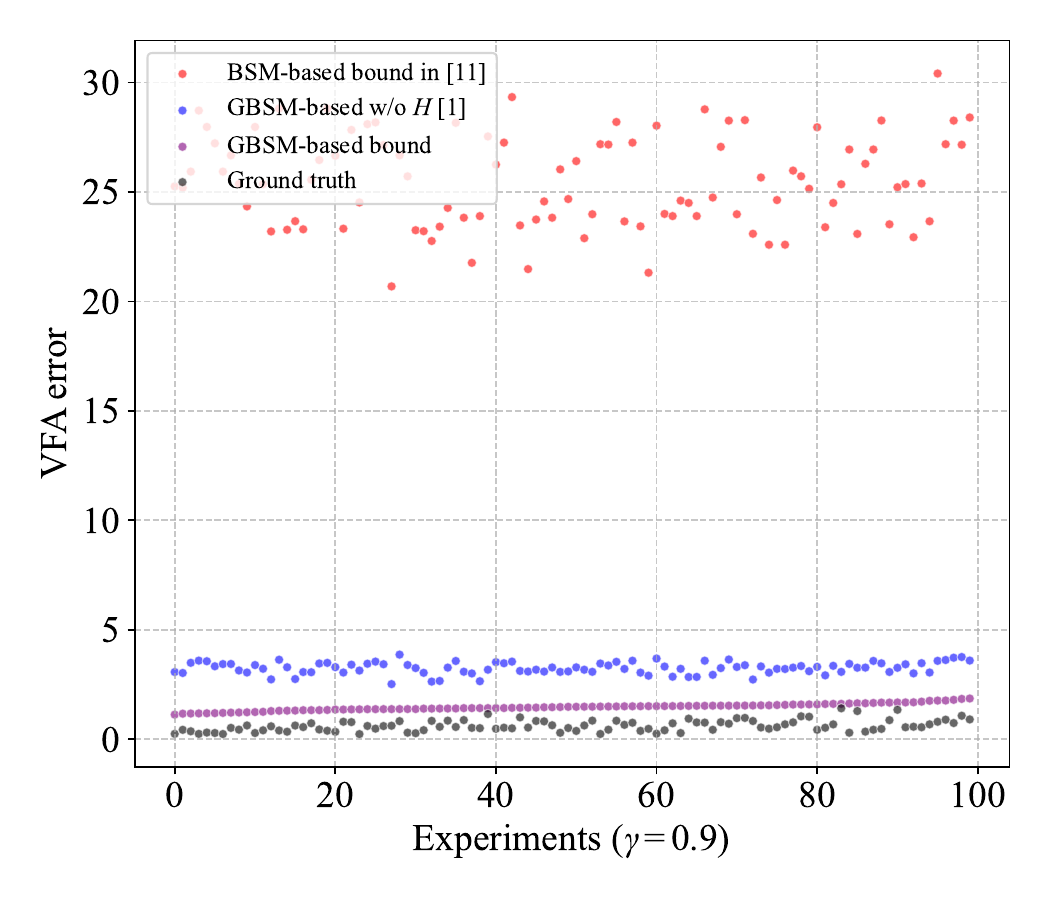}}
\caption{VFA experiments on random Garnet MDPs.}\label{Fig4}
\end{figure*}

\subsection{Validation on Synthetic MDPs}
To validate our theoretical findings, we first conduct experiments on random Garnet MDPs, a class of random MDP instances widely used for benchmarking \cite{tarbouriech2019active,grand2021scalable,sun2024policy}. To ensure comparability with BSM-based benchmarks, we fix the state and action spaces across all MDPs. Specifically, we construct these MDPs with $|\mathcal{S}| = 20, |\mathcal{A}| = 5$,  and a 50\% branching factor. We construct their aggregated and estimated counterparts to test the approximation bounds. In the aggregated MDPs, the reward functions and transition probabilities for half of the states are replaced by those of their representative states, while the estimated MDPs are established by introducing a Gaussian noise with a standard deviation ranging from 0.1 to 0.3 to the transition probabilities. Experiments were conducted for $\gamma\in\{0.1,0.5,0.9\}$, with 100 independent trials for each setting.


\subsubsection{Policy Transfer Regret Bound}
We first evaluate the policy transfer regret bound from Theorem~\ref{Theorem:6}. For each trial, we generate a source MDP $\mathcal{M}_1$ and a target MDP $\mathcal{M}_2$, compute the optimal policy $\pi_1^*$ in the source, and deploy it in the target. The ground truth regret is calculated as $\max_{s' \in \mathcal{S}_2} |V_2^*(s') - V_2^{\pi_1}(s')|$, where the precise value functions are computed under a tabular Q-learning setting~\cite{9207957}. We compute the GBSM-based bound using Corollary~\ref{corolactionmapping} given the identical action spaces, and compare it against the existing result of~\cite{10.5555/2936924.2936994} as well as our earlier conference work \cite{GBSM}, which defined the GBSM without the Hausdorff metric $H$. The x-axis represents the experiment index of 100 independent trials, and the y-axis plots the values of ground-truth error and (G)BSM-based bounds in each trial. As shown in Fig.~\ref{Fig1}, the GBSM-based bound consistently and rigorously upper-bounds the actual regret, whereas the existing empirical measure~\cite{10.5555/2936924.2936994} fails to do so. Furthermore, by incorporating the Hausdorff metric, the GBSM formulation proposed in this work yields a significantly tighter bound than our preliminary conference version~\cite{GBSM}, particularly in large $\gamma$ settings. The figure also visualizes a tighter empirical bound, $2\max d^{\textnormal{1-2}}$, derived by scaling the GBSM-based theoretical bound (\ref{boundwithoptimalactionmapping}) by a factor of $(1\!-\!\gamma)$. This empirical bound not only contains the ground-truth regret across all trials but also maintains its tightness, particularly for large values of $\gamma$. While we present this as an empirical observation pending formal proof, it could serve as valuable guidance for practical applications.


\subsubsection{Approximation Error Bounds}
We also validate the tightness of our GBSM-based bounds for VFA and SSA, comparing them against existing BSM-based ones. Specifically, the BSM-based VFA bound is computed via Theorem~\ref{Theorem:vfa} and compared with the BSM-based bound $2\tilde{\sigma}_1/(1-\gamma)$ in~\cite{DBLP:conf/iclr/0001MCGL21} as well as our earlier conference work \cite{GBSM}. We also compare them with the ground-truth error values to assess their tightness. For SSA bounds comparison, we employ the BSM-based bound in (\ref{Eq:aggBSM}). Since the estimation error bound in (\ref{Eq:aggBSM}) depends on the aggregation process, we decouple the two for clearer analysis. Specifically, the BSM-based aggregation SSA bound is given by $2\tilde{\sigma}_1 (2+\gamma)/(1-\gamma)$, and the estimation SSA bound is $\frac{2 \gamma}{1-\gamma}\max_{a,s} W_1(\hat{\mathbb{P}}(\cdot|s,a), \mathbb{P}(\cdot|s,a) ;d^{\sim})$~\cite{10.1137/10080484X}. The GBSM-based SSA bounds follow from Theorem \ref{Theorem:7} (aggregation) and Theorem \ref{Theorem:8} (estimation). As depicted in Fig.~\ref{Fig4} and Fig.~\ref{Fig2}, the GBSM-based bounds for VFA and SSA are consistently and significantly tighter than those derived from BSM across all tested $\gamma$ values. This corroborates our theoretical analysis and highlights the effectiveness of GBSM in multi-MDP analysis.


\begin{figure*}[!t]
\centering
{\includegraphics[width=0.3\textwidth]{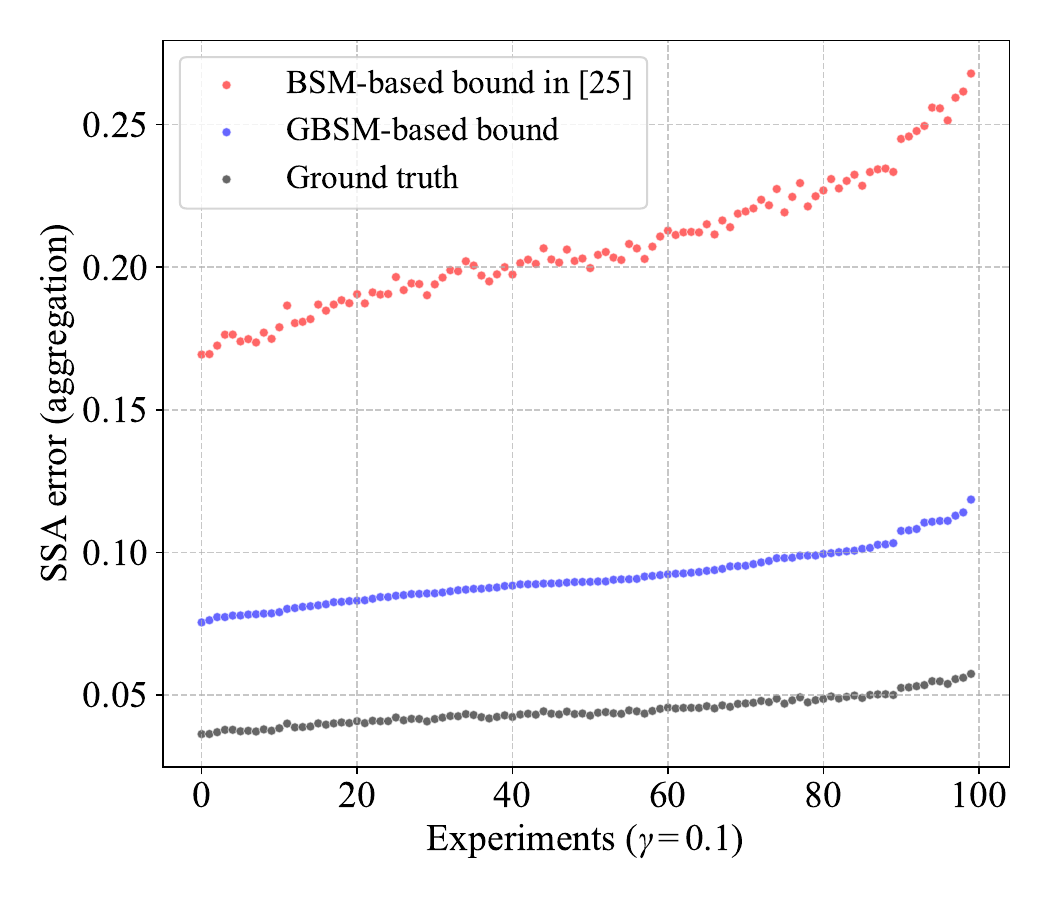}}
{\includegraphics[width=0.3\textwidth]{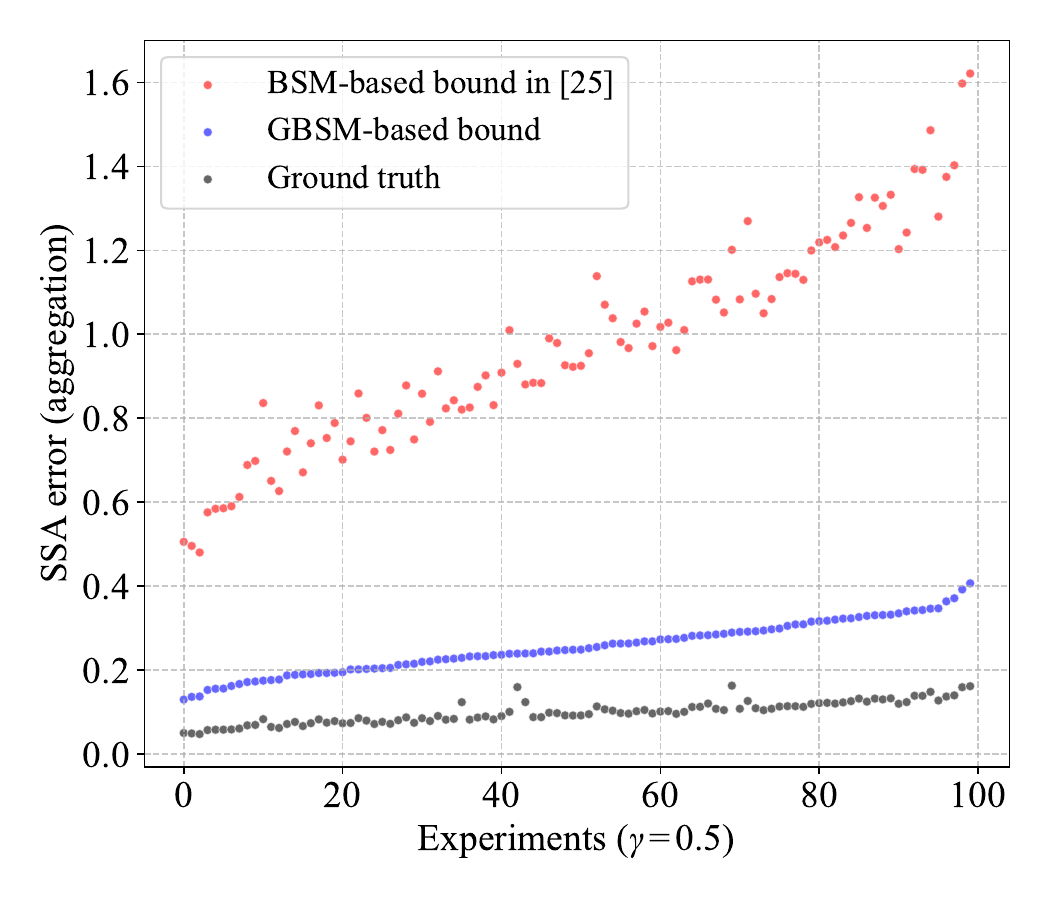}}
{\includegraphics[width=0.3\textwidth]{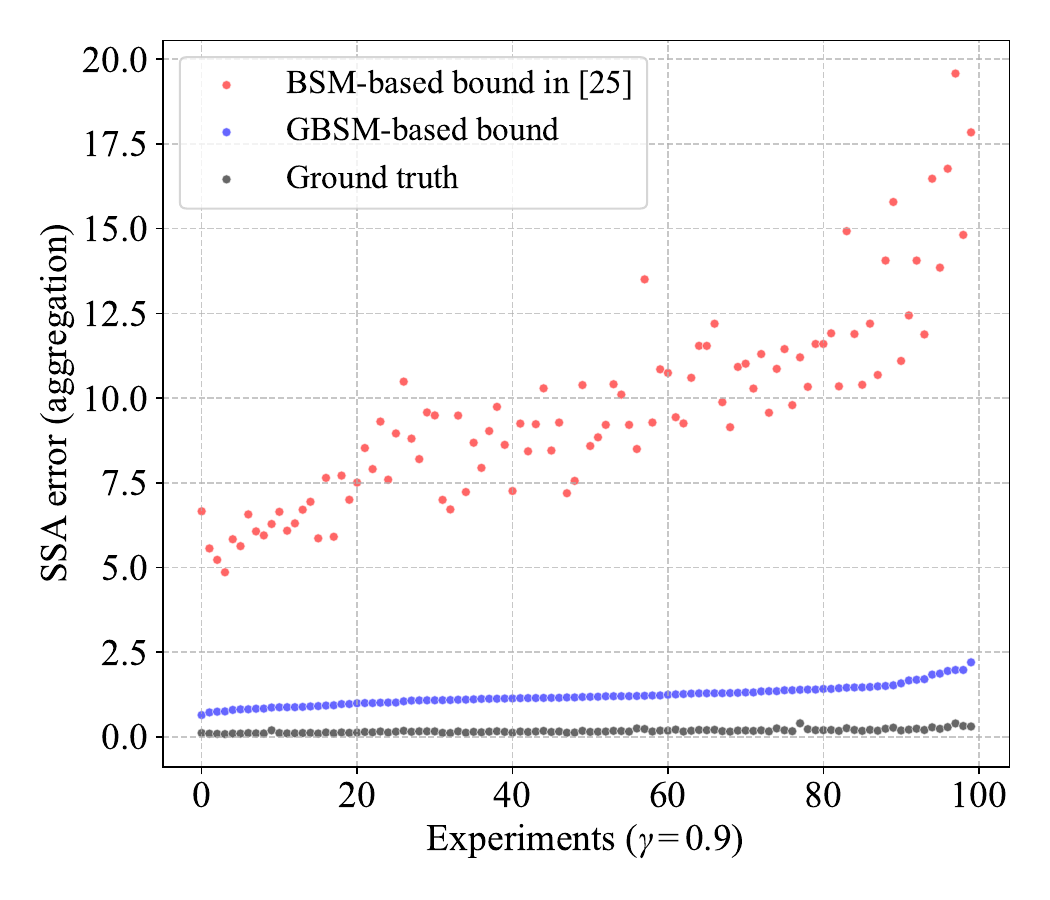}}

{\includegraphics[width=0.3\textwidth]{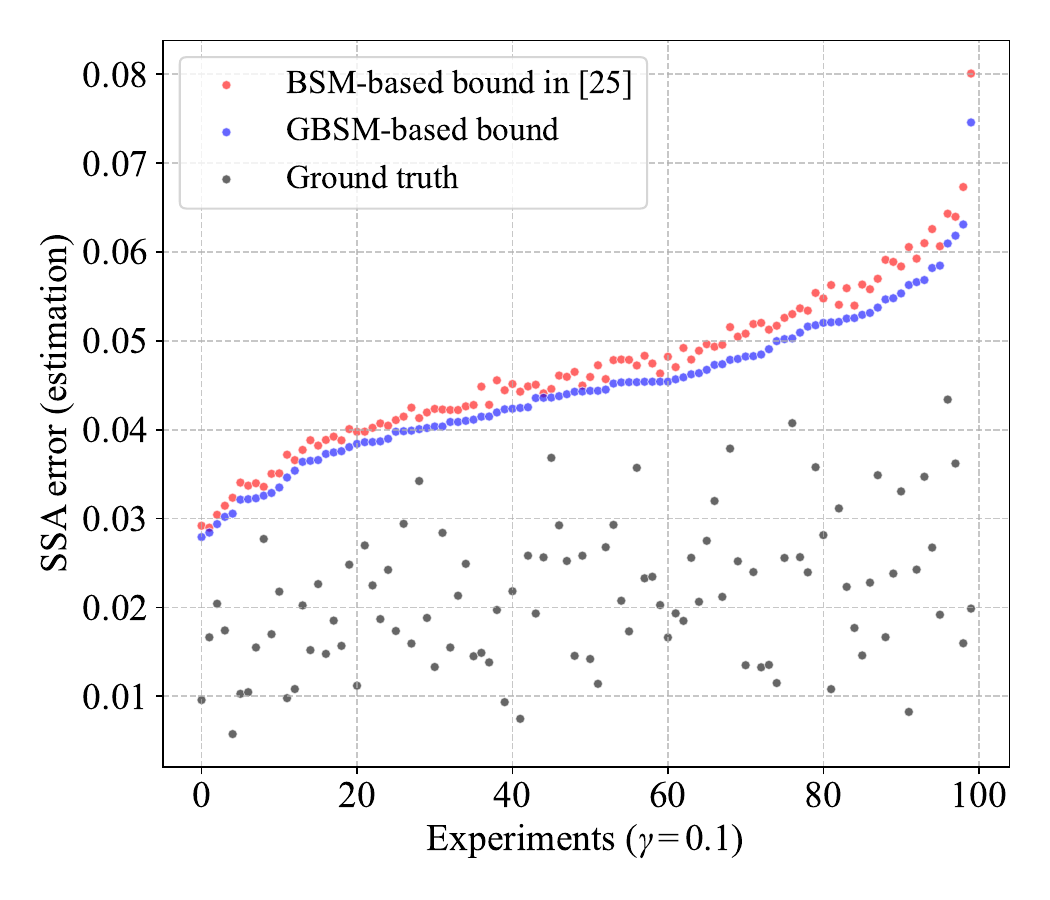}}
{\includegraphics[width=0.3\textwidth]{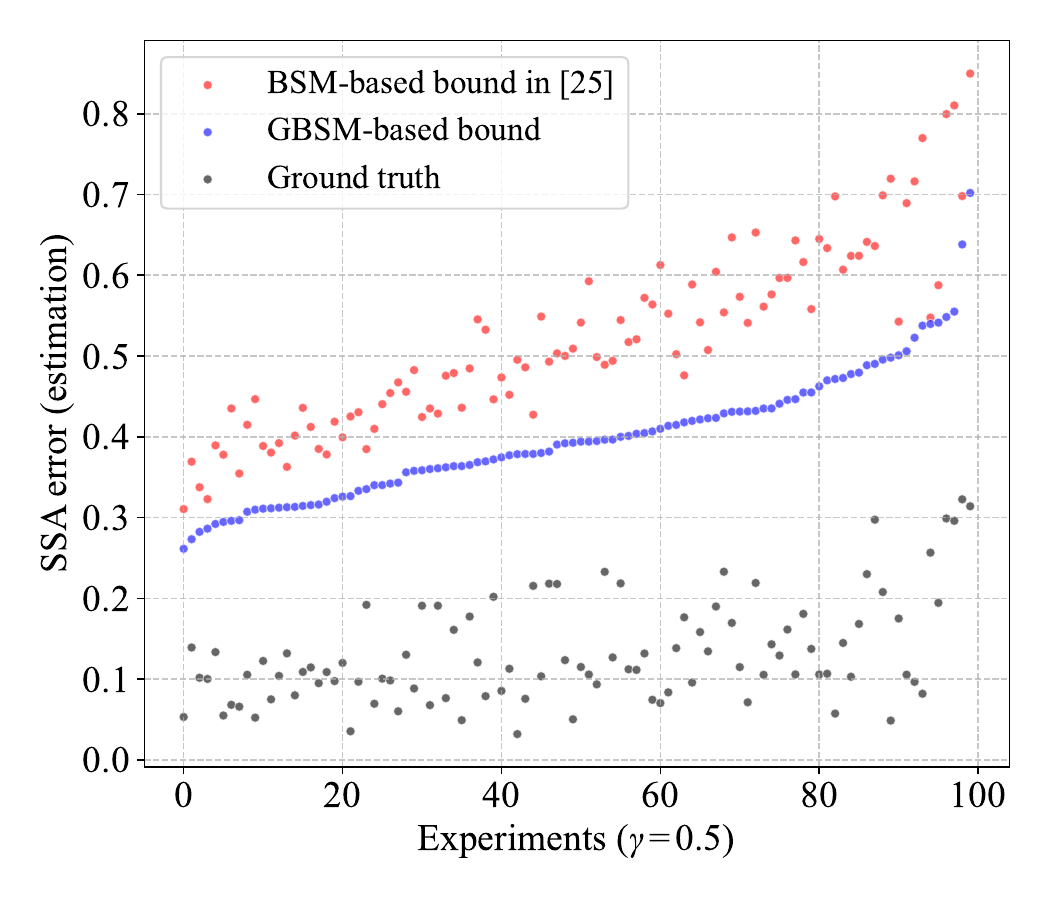}}
{\includegraphics[width=0.3\textwidth]{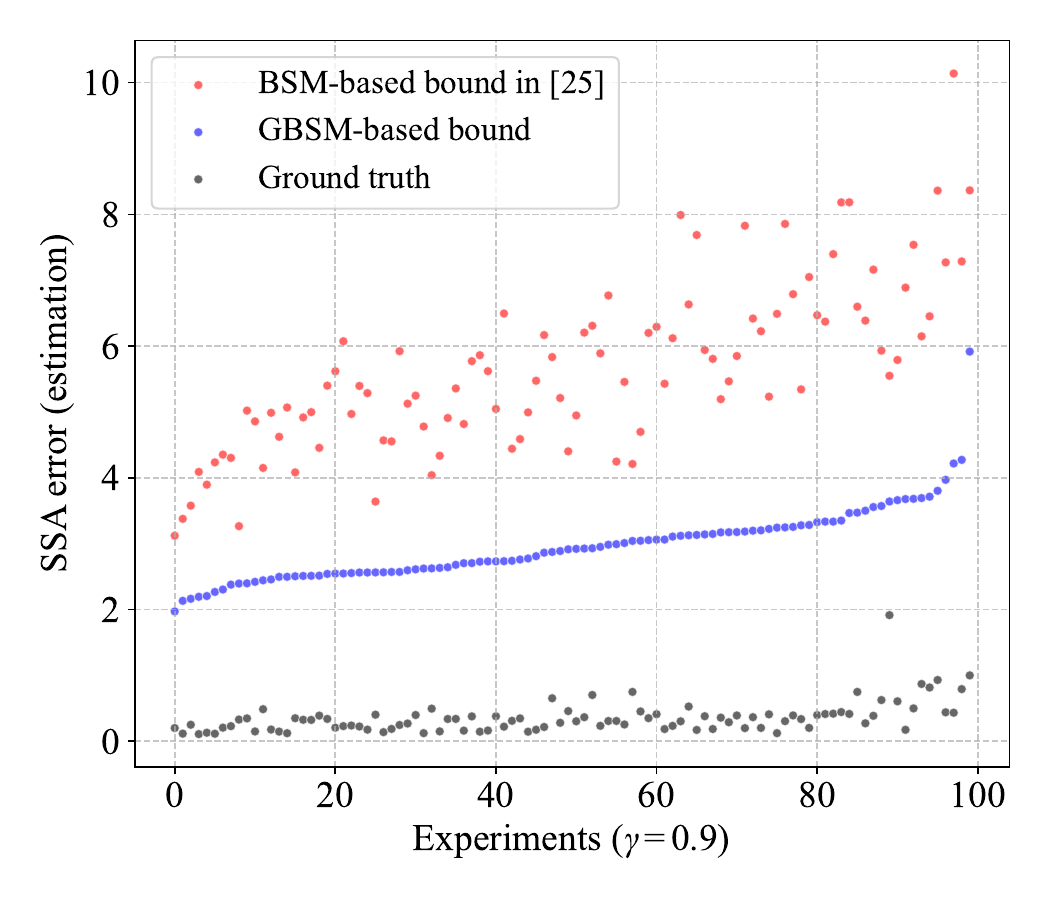}}
\caption{SSA experiments on random Garnet MDPs.}\label{Fig2}
\end{figure*}

\subsection{Real World Test: Sim-to-real RL in Wireless Networks}

\begin{figure}[t]
\centering
\subfloat[]{\includegraphics[width=0.35\textwidth]{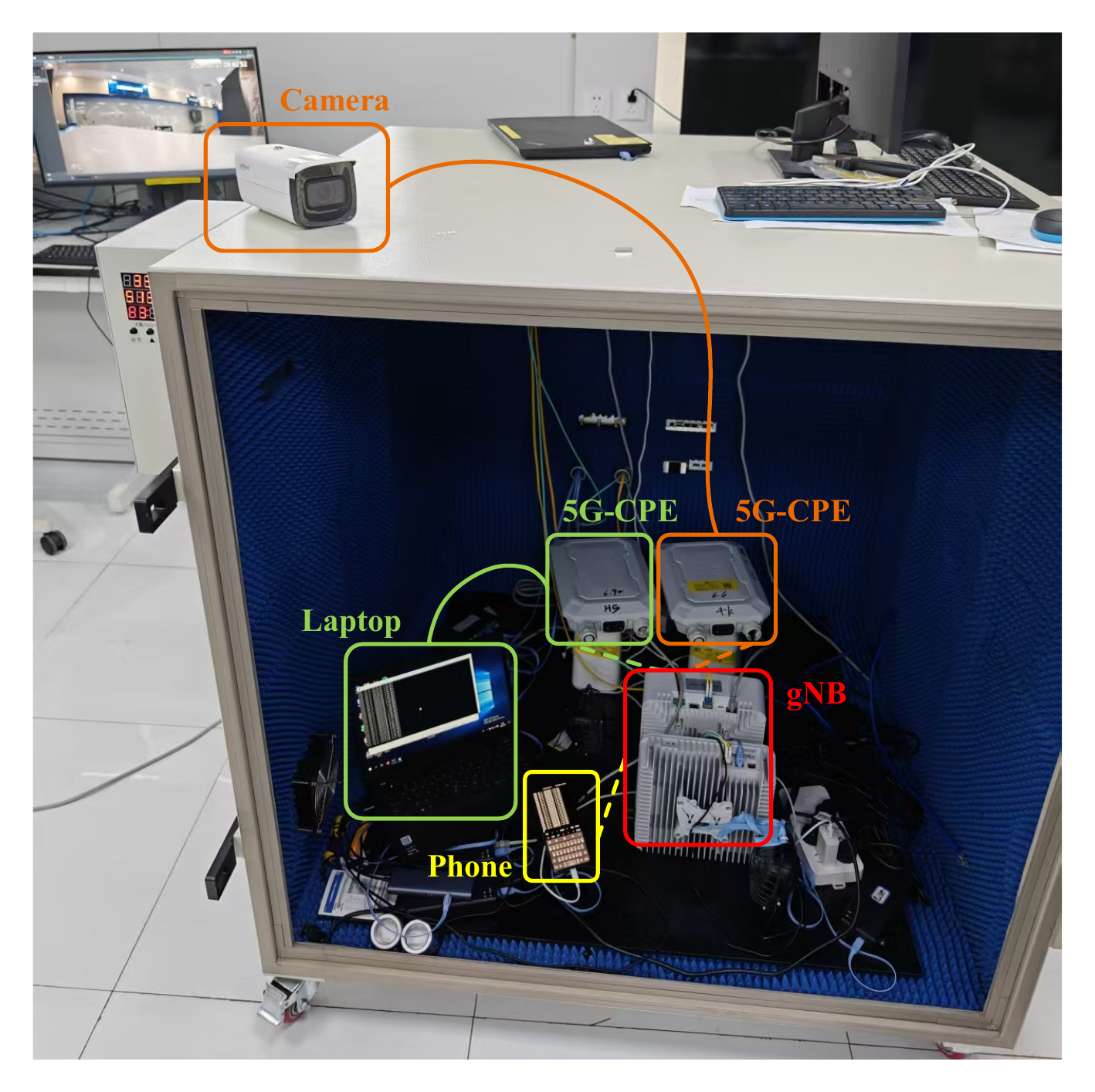}\label{Fig5.1}} \\
\subfloat[]{\includegraphics[width=0.5\textwidth]{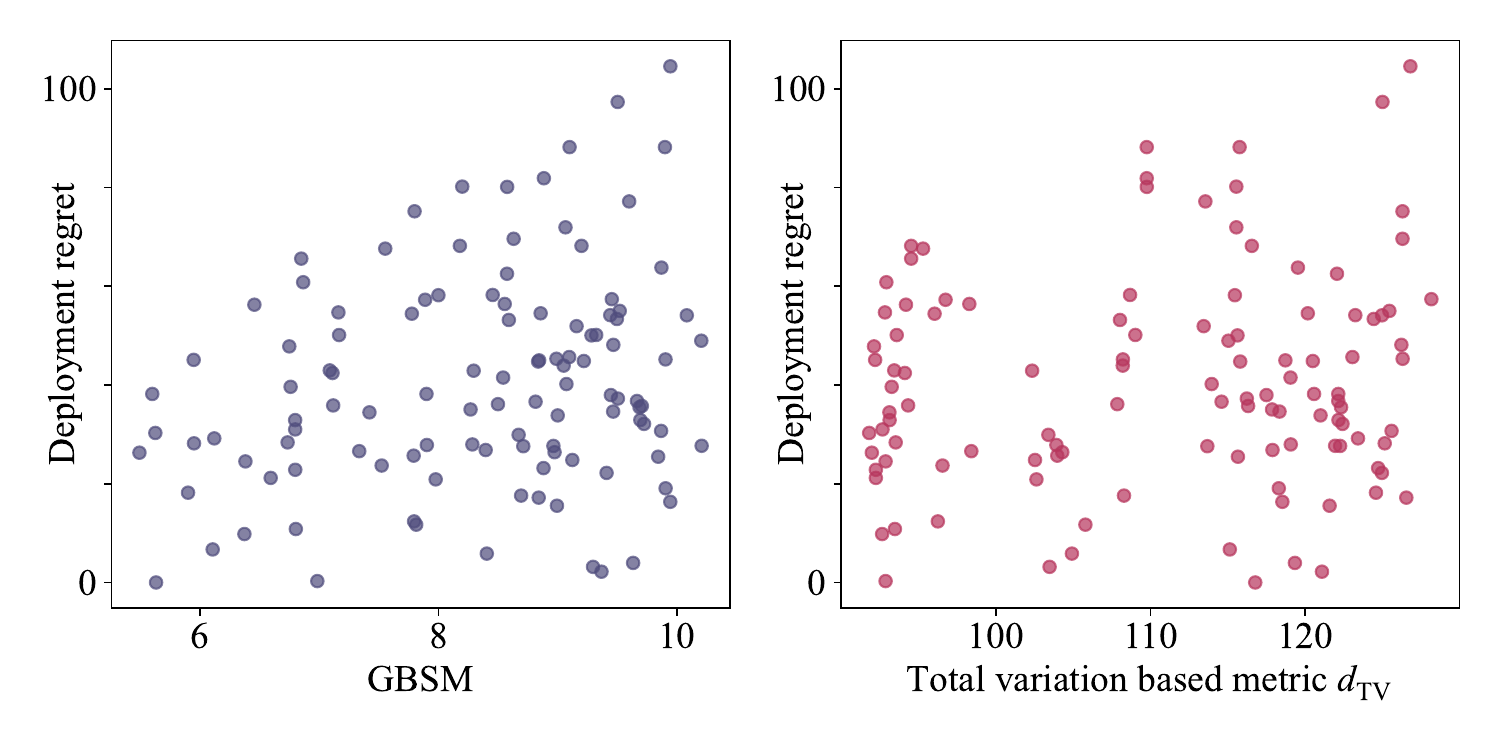}\label{Fig5.2}}
\caption{Sim-to-real experiment results in wireless network testbed.}
\end{figure}

To demonstrate GBSM's practicality, we apply it to enhance a sim-to-real RL paradigm for dynamic resource allocation in an uplink transmission scenario. The goal is to select the highest-fidelity simulation environments from a set of candidates to train an RL agent, thereby ensuring strong performance upon deployment in the real network. The experimental setup consists of a real-world wireless network testbed \cite{huang2021true,10183795} serving three users with heterogeneous traffic profiles (a camera, a gaming laptop, and a smartphone), as depicted in Fig.~\ref{Fig5.1}. We collected real-world operational data to generate hundreds of candidate simulation environments, each a learned MDP model of the real environment. For each simulation environment, we first calculated its GBSM with the real environment through Algorithm~\ref{algoGBSM} and then trained an RL agent within it to convergence. Policies from each simulation environment were finally deployed and evaluated in the real-world network testbed.

The results, shown in Fig.~\ref{Fig5.2}, reveal a strong, approximately linear correlation between the GBSM and the maximum policy regret during real-world deployment. In this context, regret is quantified by the sum-throughput loss compared to the best policy. This finding is consistent with both our theoretical analysis and synthetic experiments, confirming that simulation environments with lower GBSM values reliably yield policies with better real-world performance. These results also highlight the practical effectiveness of Algorithm~\ref{algoGBSM}, which successfully predicts policy transferability using a GBSM computed on only a subset of representative states, thereby validating our approximation for applications where exact MDPs are inaccessible. For comparison, we also evaluated the simpler metric based on the total variation distance, $d_\textnormal{TV}$. While it exhibited a similar positive trend, its discriminative ability was visibly weaker as it only captures immediate, one-step model discrepancies. In contrast, GBSM's recursive Bellman-like definition captures the long-term, cumulative impact of model inaccuracies, making it a far more reliable predictor of policy performance. Consequently, GBSM can be used to identify superior simulation environments before training, accelerating and reducing risk in the sim-to-real RL workflow.

\section{Conclusion}
In this paper, we have formally introduced GBSM and established its fundamental theoretical properties, including GBSM symmetry, inter-MDP triangle inequality, and distance bound on identical spaces. Leveraging these properties, we provide tighter bounds for policy transfer, state aggregation, and sampling-based estimation of MDPs, compared to the ones derived from BSM. To our knowledge, this is the first rigorous theoretical investigation of GBSM beyond simple definitional adaptation. We believe this work introduces a valuable theoretical tool for multi-MDP analysis. 

Under the new GBSM framework, there exist numerous potential applications for future study, and we conclude by highlighting several promising directions for future research.
\begin{itemize}
    \item \textit{Sim-to-real policy transfer:} GBSM can quantify the gap between simulated and real-world MDPs to predict policy transfer performance and guide the systematic refinement of simulation environments. Meanwhile, the approximation methods could be effectively employed to address the inaccessibility of precise transition probabilities and reward functions in the real world.
    \item \textit{Multi-task RL:} In multi-task settings, GBSM provides a principled metric for task similarity. This can inform task clustering, mitigate gradient interference issues, and coordinate policy optimization across different MDPs.
    \item \textit{Extension to average reward setting:} Our analysis is currently based on the discounted-reward criterion, and the theoretical bounds might become vacuous as the discount factor $\gamma\!\rightarrow\!1$. An important extension would be to develop a modified GBSM for the average-reward setting, which better reflects the RL objective in many long-horizon tasks.
\end{itemize}

\begin{appendices}

\section{Proof of Theorem \ref{Theorem:1}}\label{Appendix1}
To prove the existence of $d^{1\text{-}2}$, we introduce the Knaster-Tarski fixed-point theorem. Let $(\mathcal{X}, \preceq)$ denote a partial order, which means certain pairs of elements within the set $\mathcal{X}$ are comparable under the homogeneous relation $\preceq$~\cite{tarski1955lattice}. If this partial order has least upper bounds and greatest lower bounds for its arbitrary subsets, it is called a complete lattice. The Knaster-Tarski fixed-point theorem asserts that for a continuous function on a complete lattice, the iterative application of this function to the least element of the lattice converges to a fixed point $\bar{x}$, which satisfies $\bar{x}=f(\bar{x})$. Formally, the theorem is stated as follows.
\begin{lemma}[\textbf{Knaster-Tarski fixed-point theorem}~\cite{tarski1955lattice}]\label{lemma:fixedpoint}  
 If the partial order $(\mathcal{X}, \preceq)$ is a complete lattice and $f\!: \mathcal{X} \rightarrow \mathcal{X}$ is a continuous function. Then, $f$ has a least fixed point, given by 
 \begin{equation}
     \textnormal{fix}(f) = \sqcup_{n \in \mathbb{N}} f^{n}(x_0),
 \end{equation}
 where $x_0$ is the least element of $\mathcal{X}$, $\sqcup$ denotes the least upper bound, $f^{n}(x_0)=f(f^{(n-1)}(x_0))$, and $f^{(1)}(x_0)=f(x_0)$. Here, the continuity of $f$ is defined such that for any increasing sequence $\{x_n\}$ in $\mathcal{X}$, it satisfies
 \begin{equation}\label{eq:continu}
     f\left(\sqcup_{n \in \mathbb{N}}\left\{x_n\right\}\right)=\sqcup_{n \in \mathbb{N}}\left\{f\left(x_n\right)\right\}.
 \end{equation}
\end{lemma}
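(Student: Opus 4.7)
The plan is to proceed in four conceptual steps: (i) extract monotonicity of $f$ from the stated continuity condition, (ii) show that the iterates $\{f^{n}(x_0)\}$ form an increasing chain whose supremum exists, (iii) verify that this supremum is a fixed point, and (iv) verify that it is the least fixed point. Throughout, the completeness of $(\mathcal{X},\preceq)$ supplies arbitrary suprema, and the least element $x_0$ provides a canonical base point from which to iterate.

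First, I would derive monotonicity of $f$ from (\ref{eq:continu}). Given any $x \preceq y$, consider the sequence defined by $x_1 = x$ and $x_n = y$ for all $n \geq 2$. This sequence is increasing with least upper bound $y$, so applying the continuity condition yields
\begin{equation}
  f(y) \;=\; f\!\left(\sqcup_{n\in\mathbb{N}}\{x_n\}\right) \;=\; \sqcup_{n\in\mathbb{N}}\{f(x_n)\} \;=\; f(x)\,\sqcup\, f(y),
\end{equation}
which forces $f(x) \preceq f(y)$. Combining this monotonicity with $x_0 \preceq f(x_0)$ (since $x_0$ is the least element) and iterating gives $f^{n}(x_0) \preceq f^{n+1}(x_0)$ for every $n$, so $\{f^{n}(x_0)\}$ is an increasing chain and its supremum $\bar{x} := \sqcup_{n\in\mathbb{N}} f^{n}(x_0)$ exists in $\mathcal{X}$.

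Next, I would verify $f(\bar{x}) = \bar{x}$ by applying the continuity condition directly to the chain $\{f^{n}(x_0)\}$, obtaining $f(\bar{x}) = \sqcup_{n\in\mathbb{N}}\{f^{n+1}(x_0)\}$, which is the supremum of the tail of the chain and therefore equals $\bar{x}$ itself (prepending $x_0$ does not change the supremum since $x_0 \preceq f(x_0)$). For minimality, let $y$ be any fixed point; then $x_0 \preceq y$, and by induction using monotonicity $f^{n}(x_0) \preceq f^{n}(y) = y$ for all $n$, so taking the supremum yields $\bar{x} \preceq y$. This simultaneously establishes existence and the closed-form expression $\textnormal{fix}(f) = \sqcup_{n\in\mathbb{N}} f^{n}(x_0)$.

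The main technical obstacle I anticipate is the careful extraction of monotonicity from a continuity condition that, as written, only quantifies over increasing sequences. The padding trick (embedding a comparable pair $x \preceq y$ into an eventually-constant increasing sequence) must be spelled out, otherwise both the induction that $\{f^{n}(x_0)\}$ is increasing and the comparison step $f^{n}(x_0) \preceq f^{n}(y)$ in the minimality argument rest on an unjustified premise. A secondary, minor concern is an indexing convention for $\mathbb{N}$ and the definition of $f^{0}$; I would fix the convention $f^{0}(x_0) = x_0$ and $f^{n}(x_0) = f(f^{n-1}(x_0))$ at the outset to make the re-indexing $\sqcup_{n}\{f^{n+1}(x_0)\} = \sqcup_{n}\{f^{n}(x_0)\}$ unambiguous.
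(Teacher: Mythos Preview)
Your proof is correct and follows the standard Kleene-style argument for this result. Note, however, that the paper does not supply its own proof of this lemma: it is stated with a citation to~\cite{tarski1955lattice} and then invoked as a black box in the proof of Theorem~\ref{Theorem:1}, so there is no paper-side argument to compare against. Your derivation of monotonicity from the sequential continuity hypothesis via the eventually-constant sequence trick, the induction establishing that the iterates form an increasing chain, the continuity step identifying the supremum as a fixed point, and the minimality argument are all sound and constitute exactly the classical proof one would expect here.
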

Let $\mathcal{D}$ denote the set of all cost functions, which are defined as maps that satisfy $\mathcal{S}_1 \times \mathcal{S}_2 \rightarrow [0,\frac{\bar{R}}{1-\gamma}]$. Equip $\mathcal{D}$ with the usual pointwise ordering: Consider two cost functions, say $d$ and $d'\in\mathcal{D}$, denote $d\leq d^{\prime}$ if and only if $d(s,s')\leq d^{\prime}(s,s')$ for any $s\in\mathcal{S}_1$ and $s'\in\mathcal{S}_2$. Then $\mathcal{D}$ forms a complete lattice with the least element $d^{1\text{-}2}_0$, i.e., the constant zero function. Given $s$ and $s'$, we regard the recursive definition in (\ref{eq:dn}) as a function of $d$ and accordingly define $F\!: \mathcal{D} \rightarrow \mathcal{D}$ by
\begin{align}
    F\left(d\,|\,s,s'\right)=H(X_s,X_{s'};\delta (d)).\label{defin:F}
\end{align}
Utilizing the Knaster-Tarski fixed-point theorem, the existence of $d^{1\text{-}2}$ is achieved if the continuity of $F$ holds on $\mathcal{D}$. 

We first prove the continuity of the second term in $F$. Define $F_{W_1}\!: \mathcal{D} \rightarrow \mathcal{D}$ by
\begin{equation}\label{defin:Fw}
\begin{aligned}
    F_{W_1}\left(d\,|\,s, s'\right)=&W_1\big(\mathbb{P}_1(\cdot|s,a), \mathbb{P}_2(\cdot|s',a);d \big).
\end{aligned}
\end{equation}
\begin{lemma} \label{lemma:conti:W1}
    $F_{W_1}$ is continuous on $\mathcal{D}$.
\end{lemma}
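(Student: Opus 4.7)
The goal is to verify the Scott-continuity condition from (\ref{eq:continu}): for every increasing sequence $\{d_n\} \subset \mathcal{D}$ with pointwise supremum $d^\star = \sqcup_{n} d_n$, I need to show that $F_{W_1}(d^\star\,|\,s,s') = \sqcup_{n} F_{W_1}(d_n\,|\,s,s')$ for every $(s,s')$. The overall strategy is to combine two properties of $W_1$ as a functional of its cost function---monotonicity and Lipschitz continuity in the sup-norm---with the finiteness of $\mathcal{S}_1$ and $\mathcal{S}_2$, which upgrades the pointwise convergence $d_n \to d^\star$ to uniform convergence.

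First, I would establish monotonicity: if $d \leq d'$ pointwise, then for every feasible transport plan $\boldsymbol{\lambda}$ in the primal LP (\ref{LP1}), one has $\sum_{i,j} \lambda_{i,j} d(s_i,s_j) \leq \sum_{i,j} \lambda_{i,j} d'(s_i,s_j)$; taking the minimum over $\boldsymbol{\lambda}$ preserves the inequality, so $W_1(P,Q;d) \leq W_1(P,Q;d')$. Applied to $d_n \leq d^\star$, this makes $\{F_{W_1}(d_n\,|\,s,s')\}$ an increasing real sequence bounded above by $F_{W_1}(d^\star\,|\,s,s')$, giving one direction of the desired identity. For the reverse inequality I would derive a Lipschitz estimate: letting $\boldsymbol{\lambda}_n^\star$ be an optimal plan for cost $d_n$, the plan is still feasible for cost $d^\star$, and its total mass equals one since $\sum_{i,j}(\lambda_n^\star)_{i,j} = \sum_i P(s_i) = 1$ by the marginal constraints, so
\begin{align*}
F_{W_1}(d^\star\,|\,s,s') - F_{W_1}(d_n\,|\,s,s')
&\leq \sum_{i,j}(\lambda_n^\star)_{i,j}\bigl(d^\star - d_n\bigr)(s_i,s_j) \\
&\leq \|d^\star - d_n\|_\infty .
\end{align*}

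To conclude, I would argue that for each of the finitely many pairs $(s_i,s_j) \in \mathcal{S}_1 \times \mathcal{S}_2$, the bounded increasing real sequence $\{d_n(s_i,s_j)\}$ converges to $d^\star(s_i,s_j)$ by the definition of the pointwise supremum; since $|\mathcal{S}_1|\cdot|\mathcal{S}_2|$ is finite, this pointwise convergence is equivalent to $\|d^\star - d_n\|_\infty \to 0$. Combined with the Lipschitz bound above, this forces $F_{W_1}(d_n\,|\,s,s') \to F_{W_1}(d^\star\,|\,s,s')$, and because the sequence is monotone increasing, its limit coincides with its supremum, yielding $\sqcup_{n} F_{W_1}(d_n\,|\,s,s') = F_{W_1}(d^\star\,|\,s,s')$. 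The only real subtlety is keeping the direction of inequalities consistent: because $W_1$ is a minimum over transport plans, the Lipschitz step must use the plan optimal for the smaller cost $d_n$ (rather than the one optimal for the limit $d^\star$) in order to get an upper bound on the difference $F_{W_1}(d^\star) - F_{W_1}(d_n)$ rather than the reverse.
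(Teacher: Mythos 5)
Your proof is correct and follows essentially the same two-sided argument as the paper: monotonicity of $W_1$ in the cost function gives $\sqcup_n F_{W_1}(d_n) \leq F_{W_1}(d^\star)$, and reusing the plan optimal for the smaller cost $d_n$ as a feasible plan for $d^\star$ gives the reverse bound $F_{W_1}(d^\star) - F_{W_1}(d_n) \leq \|d^\star - d_n\|_\infty = \max_{i,j}\epsilon^n_{i,j} \to 0$, exactly as in the paper's inequality (\ref{lemma2ieq2}). The only cosmetic difference is that you establish monotonicity via the primal LP (every feasible plan is cheaper under the smaller cost) while the paper uses the dual LP (the optimal dual pair for the smaller cost remains feasible for the larger one); both are valid and interchangeable.
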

\begin{proof}
We follow the definition of continuity defined in Lemma~\ref{lemma:fixedpoint}. Let $s_i\in\mathcal{S}_1$ and $s_j\in\mathcal{S}_2$. Regard $F_{W_1}\left(s_i, s_j;d\right)$ as a function of $d$. Without loss of generality, we denote probability distributions $\{\mathbb{P}_1(\cdot|s_i,a), \mathbb{P}_2(\cdot|s_j,a)\}$ as $\{P,Q\}$ for brevity, and let $\rho\leq \rho '$, $\{\rho,\rho '\}\in\mathcal{D} $. Considering the optimal solution $\{\boldsymbol{\mu},\boldsymbol{\nu}\}$ for $W_1(P, Q;\rho)$ in the dual LP in (\ref{LP2}), we have
\begin{equation}
 \mu_i-\nu_j\leq \rho(s_i,s_j)\leq\rho'(s_i,s_j),\ \forall\;i,j,
\end{equation}
 which is derived from the pointwise ordering in $\mathcal{D}$.
Here, for the other $W_1(P, Q;\rho')$, $\{\boldsymbol{\mu},\boldsymbol{\nu}\}$ is a feasible, though not necessarily optimal, solution to the dual LP in (\ref{LP2}). Thus, we have
\begin{align}\label{lemma2ieq1}
\ W_1(P, Q;\rho)&= \ \sum_{i=1}^{|\mathcal{S}_1|} \mu_i P(s_i)- \sum_{j=1}^{|\mathcal{S}_2|}\nu_j Q(s_j) \notag\\
&\leq \ W_1(P, Q;\rho '),\ \forall\rho\leq \rho '.
\end{align}
By such a monotonicity, we have $W_1(P,Q;\rho)\leq W_1(P,Q;\sqcup_{n \in \mathbb{N}} \{\rho_n\}),\ \forall \rho \in \{\rho_n\}$ for any increasing sequence $\{\rho_n\}$ on $\mathcal{D}$. This further implies that $\sqcup_{n \in \mathbb{N}} \{W_1(P,Q;\rho_n)\}\leq W_1(P,Q;\sqcup_{n \in \mathbb{N}} \{\rho_n\})$.

We use the primal LP for the other side. Let $\boldsymbol{\lambda}^n$ denote the optimal solution in (\ref{LP1}) for $W_1(P,Q;\rho_n)$, which also satisfies the conditions for $W_1(P, Q;\sqcup_{n \in \mathbb{N}} \{\rho_n\})$. Define $\epsilon^n_{i,j} = \sqcup_{n \in \mathbb{N}} \{\rho_n\}(s_i,s_j)-\rho_n(s_i,s_j)$, then $\epsilon^n_{i,j} \geq 0 $ and $\lim_{n\rightarrow \infty}\epsilon^n_{i,j} = 0$ due to the monotonicity of the increasing sequence of $\{\rho_n\}$. Then, we have
\begin{align}\label{lemma2ieq2}
 &\ W_1(P,Q;\sqcup_{n \in \mathbb{N}}\{\rho_n\})\notag\\
\overset{(a)}{\leq}&\ \sum_{i=1}^{|\mathcal{S}_1|}\sum_{j=1}^{|\mathcal{S}_2|}\lambda^n_{i,j} \cdot \sqcup_{n \in \mathbb{N}}\{\rho_n\}(s_i,s_j) \notag\\
{=}&\  \sum_{i=1}^{|\mathcal{S}_1|}\sum_{j=1}^{|\mathcal{S}_2|} \lambda^n_{i,j} \rho_n(s_i,s_j)+\sum_{i=1}^{|\mathcal{S}_1|}\sum_{j=1}^{|\mathcal{S}_2|} \lambda^n_{i,j} \epsilon^{n}_{i,j}\notag\\
{=}&\ W_1(P,Q;\rho_n) + \sum_{i,j=1}^{|\mathcal{S}|} \lambda^n_{i,j} \epsilon^{n}_{i,j}\notag\\
{\leq}&\ \sqcup_{n \in \mathbb{N}} \{W_1(P,Q;\rho_n)\} +\max_{i,j} \{{\epsilon}^{n}_{i,j}\}.
\end{align}
Here, step~$(a)$ follows from the fact that $\boldsymbol{\lambda}^n$ is the optimal solution for $W_1(P,Q;\rho_n)$ rather than $W_1(P,Q;\sqcup_{n \in \mathbb{N}}\{\rho_n\})$.
Taking $n\rightarrow \infty$, we have $\sqcup_{n \in \mathbb{N}} \{W_1(P,Q;\rho_n)\}\geq W_1(P,Q;\sqcup_{n \in \mathbb{N}} \{\rho_n\})$. Following from the above two inequalities from both directions, it is readily to get $\sqcup_{n \in \mathbb{N}} \{W_1(P,Q;\rho_n)\}= W_1(P,Q;\sqcup_{n \in \mathbb{N}} \{\rho_n\})$. Thus, for any $i$ and $j$,
\begin{align}
 &\ F_{W_1}\big(\sqcup_{n \in \mathbb{N}}\{\rho_n\}\,|\,s_i, s_j\big) \notag\\
=&\ W_1\big(\mathbb{P}_1(\cdot|s_i,a), \mathbb{P}_2(\cdot|s_j,a);\sqcup_{n \in \mathbb{N}}\{\rho_n\}\big)\notag\\
=&\ \sqcup_{n \in \mathbb{N}} \Big\{W_1(\mathbb{P}_1(\cdot|s_i,a), \mathbb{P}_2(\cdot|s_j,a);\rho_n)\Big\}\notag\\
=&\ \sqcup_{n \in \mathbb{N}}\Big\{ F_{W_1}\left(\rho_n\,|\,s_i, s_j\right)\Big\}.
\end{align}
Now that the continuity of $F_{W_1}$ in (\ref{defin:Fw}) on $\mathcal{D}$ is established.
\end{proof}
\noindent Armed with Lemma~\ref{lemma:conti:W1}, we are ready to establish the continuity of $F$ as follows.
\begin{lemma}\label{lemma:F}
    $F$ is continuous on $\mathcal{D}$.
\end{lemma}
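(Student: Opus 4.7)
The plan is to derive the continuity of $F$ from Lemma~\ref{lemma:conti:W1} by composing it with (i) the reward-difference term together with scaling by $\gamma$, and (ii) the Hausdorff metric $H$ over the finite action sets. First, for each fixed state-action pair, $d\mapsto \delta(d)((s,a),(s',a'))$ is continuous on $\mathcal{D}$: the reward term $|R_1(s,a)-R_2(s',a')|$ does not depend on $d$, so this map differs from $\gamma F_{W_1}$ only by an additive constant, and an additive constant together with multiplication by $\gamma>0$ obviously commute with pointwise directed suprema. Consequently, for any increasing sequence $\{d_n\}\subset\mathcal{D}$, one has $\delta(\sqcup_n d_n) = \sqcup_n \delta(d_n)$ pointwise on $X_s\times X_{s'}$, and each $\delta(d_n)$ is itself an increasing sequence of non-negative functions on this finite set.

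The core of the argument is to push the supremum through the Hausdorff metric. Because $\mathcal{A}_1$ and $\mathcal{A}_2$ are finite, the sets $X_s$ and $X_{s'}$ are finite, so I would establish two elementary commutation identities: for any increasing sequence $\{g_n\}$ of non-negative functions on a finite set $Y$ with pointwise supremum $g$,
\begin{align}
\max_{y\in Y} g(y) \;=\; \sup_n \max_{y\in Y} g_n(y), \qquad \min_{y\in Y} g(y) \;=\; \sup_n \min_{y\in Y} g_n(y).
\end{align}
The $\max$ identity holds for arbitrary $Y$ and is routine. The $\min$ identity is where finiteness is essential: picking $y_n^\ast\in\arg\min_{y} g_n(y)$, the pigeonhole principle yields some $y^\ast\in Y$ with $y_{n_k}^\ast=y^\ast$ for a subsequence, and monotonicity of $\{g_n\}$ then gives $\sup_n\min_y g_n(y)\ge\sup_k g_{n_k}(y^\ast)=g(y^\ast)\ge\min_y g(y)$, while the reverse inequality is immediate from $\min_y g_n(y)\le g_n(y)\le g(y)$ for every $y$.

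Iterating these two identities through the nested $\max$-of-$\min$ and $\min$-of-$\max$ defining $H$, and combining with the continuity of $\delta$ above, produces
\begin{align}
F(\sqcup_n d_n \mid s,s') \;=\; H(X_s,X_{s'};\delta(\sqcup_n d_n)) \;=\; \sqcup_n H(X_s,X_{s'};\delta(d_n)) \;=\; \sqcup_n F(d_n \mid s,s'),
\end{align}
which is precisely the continuity condition \eqref{eq:continu}. A brief sanity check also shows that $F$ is a well-defined self-map of $\mathcal{D}$: from $|R_1-R_2|\le\bar{R}$ and the inherited bound $W_1(\cdot,\cdot;d)\le\bar{R}/(1-\gamma)$ whenever $d\le\bar{R}/(1-\gamma)$, one obtains $\delta(d)\le\bar{R}+\gamma\bar{R}/(1-\gamma)=\bar{R}/(1-\gamma)$, and taking any combination of $\max$ and $\min$ over $X_s,X_{s'}$ preserves this bound.

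The only genuine obstacle is the $\min$-commutation step: $\sup$ of an increasing sequence commutes with $\max$ over arbitrary index sets, but the analogous statement for $\min$ truly relies on the finiteness of the action spaces, and must be justified by the pigeonhole argument above. Once this is in hand, the remainder of the proof is a mechanical composition of monotone operations with the already established continuity of $F_{W_1}$, so the Knaster-Tarski theorem (Lemma~\ref{lemma:fixedpoint}) then delivers the fixed point $d^{1\text{-}2}=\sqcup_n F^n(d_0^{1\text{-}2})$ required by Theorem~\ref{Theorem:1}.
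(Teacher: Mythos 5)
Your proposal is correct and follows essentially the same route as the paper: reduce everything to the continuity of $F_{W_1}$ (Lemma~\ref{lemma:conti:W1}) and push the directed supremum through the nested $\max$/$\min$ structure of the Hausdorff metric. You are in fact more careful than the paper at the one genuinely delicate step---commuting $\sqcup_n$ with $\min_{a'}$ over the action set, which the paper asserts as an equality without comment and which your pigeonhole argument (correctly relying on finiteness of $\mathcal{A}_1,\mathcal{A}_2$) rigorously justifies.
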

\begin{proof}
Considering an arbitrary increasing sequence $\{\rho_n\}$ on $\mathcal{D}$, for any $i$ and $j$, we have
\begin{align}
 &\ F(\sqcup_{n\in\mathbb{N}}\{\rho_n\}\,|\,s_i,s_j) \notag\\
 {=}&\ H(X_{s_i},X_{s_j};\delta (\sqcup_{n\in\mathbb{N}}\{\rho_n\}))\notag\\
{=}&\ \max\Big\{\max_{a\in\mathcal{A}_1} \min_{a'\in\mathcal{A}_2}\delta(\sqcup_{n\in\mathbb{N}}\{\rho_n\})((s_i,a),(s_j,a')),\notag\\
&\qquad\quad \min_{a\in\mathcal{A}_1}\max_{a'\in\mathcal{A}_2}\delta(\sqcup_{n\in\mathbb{N}}\{\rho_n\})((s_i,a),(s_j,a'))\Big\}
 \end{align}
 Without loss of generality, assume that $\max_{a\in\mathcal{A}_1} \min_{a'\in\mathcal{A}_2}\delta(d)((s_i,a),(s_j,a'))$ is greater, then
 \begin{align}
  &\ F(\sqcup_{n\in\mathbb{N}}\{\rho_n\}\,|\,s_i,s_j) \notag\\
{=}&\ \max_{a\in\mathcal{A}_1}\min_{a'\in\mathcal{A}_2} \Big\{\left|R_1(s_i,a)-R_2(s_j,a')\right|\notag\\
    &\qquad\quad+\gamma W_1\left(\mathbb{P}_1(\cdot|s_i,a), \mathbb{P}_2(\cdot|s_j,a') ;\sqcup_{n\in\mathbb{N}}\{\rho_n\}\right)\Big\}\notag\\
{=}&\ \max_{a\in\mathcal{A}_1}\min_{a'\in\mathcal{A}_2} \Big\{\left|R_1(s_i,a)-R_2(s_j,a')\right|\notag\\
    &\qquad\quad + \gamma \sqcup_{n\in\mathbb{N}} \left\{W_1\left(\mathbb{P}_1(\cdot|s_i,a), \mathbb{P}_2(\cdot|s_j,a') ;\rho_n\right)\right\}\Big\} \notag\\
{=}&\ \sqcup_{n\in\mathbb{N}}\Big\{\max_{a\in\mathcal{A}_1}\min_{a'\in\mathcal{A}_2}\Big\{\left|R_1(s_i,a)-R_2(s_j,a')\right|\notag\\
    &\ \qquad\quad +\gamma W_1\left(\mathbb{P}_1(\cdot|s_i,a), \mathbb{P}_2(\cdot|s_j,a') ;\rho_n\right)\Big\}\Big\} \notag\\
{\leq}&\ \sqcup_{n\in\mathbb{N}}\{F(\rho_n\,|\,s_i,s_j)\}.
 \end{align}
\end{proof}
Now that the existence of $d^{1\text{-}2}$ is established by using Lemma~\ref{lemma:fixedpoint} and Lemma~\ref{lemma:F}.

\section{Proof of Theorem \ref{Theorem:2}}\label{Appendix2}
This proves the optimal value difference bound between MDPs by induction.
For the base case, we have
        \begin{align}\label{corol1-1}
            |V_1^{0} (s_i)-V_2^{0} (s_j)| = d_\textnormal{0}^{1\textnormal{-}2}(s_i,s_j)=0,\forall (s_i,s_j)\in \mathcal{S}_1\times\mathcal{S}_2.
        \end{align}
    By the induction hypothesis, we assume that for an arbitrary $n$,
    \begin{align}\label{vv_ineq}
        |V_1^{n} (s_i)-V_2^{n} (s_j)|\leq&\ d^{1\text{-}2}_n(s_i,s_j),\ \forall\ (s_i,s_j)\in \mathcal{S}_1\times\mathcal{S}_2.
    \end{align}
    Without loss of generality, assume that $V_1^{n+1} (s_i)\geq V_2^{n+1} (s_j)$, and the induction follows
\begin{align}\label{corol1-3}
           &\ \left|V_1^{n+1} (s_i)-V_2^{n+1} (s_j)\right| \notag\\
           {=} &\ \left|\max_{a\in\mathcal{A}_1}\left\{R_1(s_i,a)+\gamma\sum_{k=1}^{|\mathcal{S}_1|}\mathbb{P}_1(s_k|s_i,a)V_1^{n}(s_k)\right\} \right.\notag\\
    &\qquad\left.-\max_{a'\in\mathcal{A}_2}\left\{R_2(s_j,a')+\gamma\sum_{k=1}^{|\mathcal{S}_2|}\mathbb{P}_2(s_k|s_j,a')V_2^{n}(s_k)\right\}\right|\notag\\
     \overset{(a)}{=} &\ \left|\left\{R_1(s_i,a_{\max})+\gamma\sum_{k=1}^{|\mathcal{S}_1|}\mathbb{P}_1(s_k|s_i,a_{\max})V_1^{n}(s_k)\right\} \right.\notag\\
    &\qquad\left.-\left\{R_2(s_j,a'_{\max})+\gamma\sum_{k=1}^{|\mathcal{S}_2|}\mathbb{P}_2(s_k|s_j,a'_{\max})V_2^{n}(s_k)\right\}\right|\notag\\
           =&\ \min_{a'\in\mathcal{A}_2} \left|\left\{R_1(s_i,a_{\max})+\gamma\sum_{k=1}^{|\mathcal{S}_1|}\mathbb{P}_1(s_k|s_i,a_{\max})V_1^{n}(s_k)\right\} \right.\notag\\
    &\qquad\left.-\left\{R_2(s_j,a')+\gamma\sum_{k=1}^{|\mathcal{S}_2|}\mathbb{P}_2(s_k|s_j,a')V_2^{n}(s_k)\right\}\right|\notag\\
               \leq&\ \max_{a\in\mathcal{A}_1} \min_{a'\in\mathcal{A}_2} \left|\left\{R_1(s_i,a)+\gamma\sum_{k=1}^{|\mathcal{S}_1|}\mathbb{P}_1(s_k|s_i,a) V_1^{n}(s_k)\right\} \right.\notag\\
    &\qquad\qquad\left.-\left\{R_2(s_j,a')+\gamma\sum_{k=1}^{|\mathcal{S}_2|}\mathbb{P}_2(s_k|s_j,a')V_2^{n}(s_k)\right\}\right|\notag\\
           {\leq} &\ \max_{a\in\mathcal{A}_1} \min_{a'\in\mathcal{A}_2} \Bigg\{\bigg|R_1(s_i,a) - R_2 (s_j,a')\bigg| \notag\\
           &\ +\gamma\bigg|\sum_{k=1}^{|\mathcal{S}_1|} \mathbb{P}_1 (s_k|s_i,a) V_1^{n}(s_k)- \sum_{k=1}^{|\mathcal{S}_2|} \mathbb{P}_2(s_k|s_j,a') V_2^{n}(s_k)\bigg|\Bigg\}\notag\\
           \overset{(b)}{\leq}&\ \max_{a\in\mathcal{A}_1} \min_{a'\in\mathcal{A}_2} \bigg\{\left|R_1(s_i,a)- R_2 (s_j,a')\right| \notag\\
           &\qquad\qquad +\gamma W_1\left(\mathbb{P}_1(\cdot|s_i,a),\mathbb{P}_2(\cdot|s_j,a');d^{1\text{-}2}_n\right)\bigg\}\notag\\
           \leq &\ d^{1\text{-}2}_\textnormal{n+1}(s_i,s_j),\ \forall\ (s_i,s_j)\in \mathcal{S}_1\times\mathcal{S}_2.
        \end{align}
    Here, $a_{\max}$ and $a'_{\max}$ in step~$(a)$ denotes actions for achiving optimal value functions $V_1^{n+1}$ and $V_2^{n+1}$, respectively. Step~$(b)$ follows from the fact that $\big( V_1^{n}(s_k) \big)_{k=1}^{|\mathcal{S}_1|}$ and $\big( V_2^{n}(s_k) \big)_{k=1}^{|\mathcal{S}_2|}$ form a feasible, but not necessarily the optimal, solution to the dual LP in (\ref{LP2}) for $W_1\big(\mathbb{P}_1(\cdot|s_i,a),\mathbb{P}_2(\cdot|s_j,a);d^{1\text{-}2}_n\big)$.

   Now from (\ref{corol1-1})-(\ref{corol1-3}), we have $|V_1^{n}(s)-V_2^{n}(s')|\leq d^{1\text{-}2}_{n}(s,s'),\ \forall\ (s,s')\in \mathcal{S}_1\times\mathcal{S}_2,\ \forall n\in\mathbb{N}$. Taking $n\rightarrow \infty$ yields the desired result.

\section{Proof of Theorem \ref{Theorem:4}}\label{Appendix:Theorem:4}
To prove the inter-MDP triangle inequality, we first need to prove the transitive property of inequality on the Wasserstein distance, that is, the Wasserstein distance between the three distributions follows $W_1(\mathbb{P}^1,\mathbb{P}^2 ;d^\textnormal{1-2}) \leq  W_1(\mathbb{P}^1,\mathbb{P}^\textnormal{3};d^{\textnormal{1-3}} ) +W_1(\mathbb{P}^\textnormal{3},\mathbb{P}^2;d^{\textnormal{3-2}} )$ if (\ref{triangleieq}) holds, where $\mathbb{P}^1$, $\mathbb{P}^2$, and $\mathbb{P}^3$ denote arbitrary distributions on $\mathcal{S}_1$, $\mathcal{S}_2$, and $\mathcal{S}_3$.

Let $(s_i,s_j,s_k)\!\in\! \mathcal{S}_1\times\mathcal{S}_2\times\mathcal{S}_3$. Define $\boldsymbol{\lambda}^{1,3}$ as the optimal transportation plan for $W_1(\mathbb{P}^1,\mathbb{P}^3 ;d^\textnormal{1-3})$ in primal LP (\ref{LP1}), with elements $ \lambda^{1,3}_{i,k}$ satisfying $ \sum_{k=1}^{|\mathcal{S}_3|} \lambda^{1,3}_{i,k}=\mathbb{P}^1(s_i)$ and $\sum_{i=1}^{|\mathcal{S}_1|}\lambda^{1,3}_{i,k}=\mathbb{P}^3(s_k)$. Similarly define $\boldsymbol{\lambda}^{3,2}$ for $W_1(\mathbb{P}^3,\mathbb{P}^2 ;d^\textnormal{3-2})$ with elements $\lambda^{3,2}_{k,j}$. Construct $\boldsymbol{\lambda}^{1,3,2}$ with elements $\lambda^{1,3,2}_{i,k,j}$ satisfying $\sum_{j=1}^{|\mathcal{S}_2|} \lambda^{1,3,2}_{i,k,j} = \lambda^{1,3}_{i,k}$ and $\sum_{i=1}^{|\mathcal{S}_1|} \lambda^{1,3,2}_{i,k,j} = \lambda^{3,2}_{k,j}$.
    Such a $\boldsymbol{\lambda}^{1,3,2}$ does exist according to the Gluing Lemma in~\cite{villani2009optimal}. Then, note that
    \begin{align}
        \sum_{j=1}^{|\mathcal{S}_2|} \sum_{k=1}^{|\mathcal{S}_3|}\lambda^{1,3,2}_{i,k,j} &= \sum_{k=1}^{|\mathcal{S}_3|} \lambda^{1,3}_{i,k} = \mathbb{P}^1(s_i),\notag\\ 
        \sum_{i=1}^{|\mathcal{S}_1|} \sum_{k=1}^{|\mathcal{S}_3|} \lambda^{1,3,2}_{i,k,j} &= \sum_{k=1}^{|\mathcal{S}_3|} \lambda^{3,2}_{k,j} = 
        \mathbb{P}^2(s_j), 
    \end{align}
    thus $\sum_{k=1}^{|\mathcal{S}_3|} \boldsymbol{\lambda}^{1,3,2}$ is a feasible, but not necessarily the optimal, solution to the primal LP in (\ref{LP1}) for $W_1(\mathbb{P}^1,\mathbb{P}^2 ;d^\textnormal{1-2})$. Consequently, we have
     \begin{align}\label{preservation}
     &\ W_1\big(\mathbb{P}^1,\mathbb{P}^2 ;d^\textnormal{1-2}\big)\notag\\
     {\leq}&\  \sum_{i=1}^{|\mathcal{S}_1|} \sum_{j=1}^{|\mathcal{S}_2|} \Big( \sum_{k=1}^{|\mathcal{S}_3|} \lambda^{1,3,2}_{i,k,j} \Big) d^{\textnormal{1-2}}(s_i, s_j)\notag\\
     \overset{(a)}{\leq}&\  \sum_{i=1}^{|\mathcal{S}_1|} \sum_{j=1}^{|\mathcal{S}_2|} \Big( \sum_{k=1}^{|\mathcal{S}_3|} \lambda^{1,3,2}_{i,k,j} \Big)
     \Big(d^{\textnormal{1-3}}(s_i,s_k)+d^{\textnormal{3-2}}(s_k,s_j)\Big)\notag\\
     {=}\; &\  \sum_{i=1}^{|\mathcal{S}_1|} \sum_{k=1}^{|\mathcal{S}_3|} \lambda^{1,3}_{i,k} d^{\textnormal{1-3}}(s_i,s_k) + \sum_{k=1}^{|\mathcal{S}_3|}\sum_{j=1}^{|\mathcal{S}_2|}  \lambda^{3,2}_{k,j} d^{\textnormal{3-2}}(s_k,s_j) \notag\\
     {=}\; &\ W_1\big(\mathbb{P}^1,\mathbb{P}^\textnormal{3};d^{\textnormal{1-3}} \big) +W_1\big(\mathbb{P}^\textnormal{3},\mathbb{P}^2;d^{\textnormal{3-2}} \big).
    \end{align}
    Here, step~$(a)$ stems from the assumption on $d$. We have now established the transitivity of the inter-MDP triangle inequality on the Wasserstein distance.

    Armed with (\ref{preservation}), we are ready to prove the inter-MDP triangle inequality of the GBSM through induction. For the base case,
    \begin{align}
         d_0^{\textnormal{1-2}}(s,s')\leq \ & d_0^{\textnormal{1-3}}(s,s'')+d_0^{\textnormal{3-2}}(s'',s')=0,\notag\\
        &  \forall\ (s,s',s'')\in \mathcal{S}_1\times\mathcal{S}_2\times\mathcal{S}_3.
    \end{align}
    By the induction hypothesis, we assume that for an arbitrary $n\in\mathbb{N}$ and for any $(s,s',s'')\in \mathcal{S}_1\times\mathcal{S}_2\times\mathcal{S}_3$,
    \begin{align}
             &d_n^{\textnormal{1-2}}(s,s')\leq d_n^{\textnormal{1-3}}(s,s'')+d_n^{\textnormal{3-2}}(s'',s'),\
    \end{align}
    The induction follows
    \begin{align}
          d_{n+1}^{\textnormal{1-2}}(s,s')  {=}&\ H(X_{s},X_{s'};\delta (d_{n}^{\textnormal{1-2}}))\notag\\
{=}&\ \max\Big\{\max_{a\in\mathcal{A}_1} \min_{a'\in\mathcal{A}_2}\delta(d_{n}^{\textnormal{1-2}})((s,a),(s',a')),\notag\\
&\qquad\quad \min_{a\in\mathcal{A}_1}\max_{a'\in\mathcal{A}_2}\delta(d_{n}^{\textnormal{1-2}})((s,a),(s',a'))\Big\}
    \end{align}
    Without loss of generality, we assume that $\max_{a\in\mathcal{A}_1} \allowbreak \min_{a'\in\mathcal{A}_2}\delta(d_{n}^{\textnormal{1-2}})((s,a),(s',a'))$ is greater, then we have
    \begin{align}
            &\ d_{n+1}^{\textnormal{1-2}}(s,s') \notag\\
            {=} &\ \max_{a\in\mathcal{A}_1} \min_{a'\in\mathcal{A}_2}\delta(d_{n}^{\textnormal{1-2}})((s,a),(s',a'))\notag\\
            \overset{(a)}{\leq} &\ \max_{a\in\mathcal{A}_1}\min_{a'\in\mathcal{A}_2} \Big\{\big|R_1(s,a)-R_3(s'',a_{\min}'')\big|\notag\\
            &\qquad\qquad\quad\ + \big|R_3(s'',a''_{\min})-R_2(s',a')\big|\notag\\
            &\qquad\qquad\quad\  + \gamma W_1\big(\mathbb{P}_1(\cdot|s,a), \mathbb{P}_3(\cdot|s'',a''_{\min})  ;d_n^{\textnormal{1-3}}\big) \notag\\
            &\qquad\qquad\quad\ + \gamma W_1\big( \mathbb{P}_3(\cdot|s'',a''_{\min}), \mathbb{P}_2(\cdot|s',a') ;d_n^{\textnormal{3-2}}\big)\Big\}\notag\\
            {\leq} &\ \max_{a\in\mathcal{A}_1}\min_{a''\in\mathcal{A}_3} \Big\{\big|R_1(s,a)-R_3(s'',a'')\big|\notag\\
            &\qquad\qquad\quad\  + \gamma W_1\big(\mathbb{P}_1(\cdot|s,a), \mathbb{P}_3(\cdot|s'',a'')  ;d_n^{\textnormal{1-3}}\big)\Big\} \notag\\
            &\ + \min_{a'\in\mathcal{A}_2} \Big\{\big|R_3(s'',a''_{\min})-R_2(s',a')\big|\notag\\
            &\qquad\qquad\quad\ + \gamma W_1\big( \mathbb{P}_3(\cdot|s'',a''_{\min}), \mathbb{P}_2(\cdot|s',a') ;d_n^{\textnormal{3-2}}\big)\Big\}\notag\\
            \leq &\ \max_{a\in\mathcal{A}_1}\min_{a''\in\mathcal{A}_3} \delta(d_{n}^{\textnormal{1-3}})((s,a),(s'',a''))\notag\\
            &\ +\max_{a''\in\mathcal{A}_3}\min_{a'\in\mathcal{A}_2}\delta(d_{n}^{\textnormal{3-2}})((s'',a''),(s',a')) \notag\\  
            \leq &\ H(X_s,X_{s''};\delta (d_{n}^{\textnormal{1-3}})+H(X_{s''},X_{s'};\delta (d_{n}^{\textnormal{3-2}}) \notag\\  
            = &\ d_{n+1}^{\textnormal{1-3}}(s,s'') + d_{n+1}^{\textnormal{3-2}}(s'',s'),\  \forall\ (s,s',s'')\in \mathcal{S}_1\times\mathcal{S}_2\times\mathcal{S}_3. 
    \end{align}
    Here, step~$(a)$ follows from (\ref{preservation}), the transitivity of the inequality, and $a_{\min}'' = \arg\min_{a''\in \mathcal{A}_3}  \{\max_{a\in\mathcal{A}_1} \delta(d_n^{\textnormal{1-3}})((s,a),\allowbreak(s'',a''))\}$. 
    Now we have $d_n^{\textnormal{1-2}}(s,s')\leq d_n^{\textnormal{1-3}}(s,s'')+d_n^{\textnormal{3-2}}(s'',s')$ for all $n\in\mathbb{N}$. Taking $n\rightarrow \infty$, we establish the inter-MDP triangle inequality of GBSM.

\section{Proof of Theorem \ref{Theorem:5}}\label{Appendix:Theorem:5}
Consider a special transportation plan between distributions $P$ and $Q$ over the same state space. This plan preserves all mass shared between $P$ and $Q$, defined as $\min\{P(s),Q(s)\}$ for all $s$. The remaining mass, where $P(s)>Q(s)$, is distributed to states where $P(s)<Q(s)$. Then the total mass to be transported is quantified by the total variation distance, where the transportation cost with the cost function $d$ is bounded by $\max_{s,s'}{d}(s,s')$. The shared mass is given by $1-\textnormal{TV}(P,Q)$, with the cost bounded by $\max_{s}{d}(s,s)$. While this plan adheres to the definition of Wasserstein distance, it is unlikely to be the optimal plan that yields the minimum transportation cost. Then we have
\begin{align}\label{tvandW}
 W_1(P,Q;d)\leq &\ \textnormal{TV}(P,Q) \max_{s,s'}{d}(s,s')  \notag\\
&\ + \big(1-\textnormal{TV}(P,Q)\big)\max_{s}{d}(s,s).
\end{align}
Also, for any state $s\in\mathcal{S}_1$ and $s'\in\mathcal{S}_2$, we have
\begin{align}
    d^{1\text{-}2}(s, s')\leq&\  \max_{a,a'} \Big\{\big| R_1(s,a)-R_2(s',a') \big|  \notag\\
    &\ +\gamma W_1\big(\mathbb{P}_1(\cdot|s,a), \mathbb{P}_2(\cdot|s',a') ;d^{1\text{-}2}\big)\Big\} \notag\\
    \leq &\  \bar{R}+ \gamma\max_{s,s'}d^{1\text{-}2}(s,s').
\end{align}
Rearranging the inequality yields
\begin{equation}\label{GBSMmaximum}
    \max_{s,s'}d^{1\text{-}2}(s,s')\leq \bar{R}/(1-\gamma),\  \forall\ (s_i,s_j)\in \mathcal{S}_1\times\mathcal{S}_2.
\end{equation}
Without loss of generality, we assume that $\max_{s}d^{\textnormal{1-2}}(s,s)=\max_{s,a}\allowbreak \min_{a'}\delta(d^{\textnormal{1-2}})((s,a),(s,a'))$, then
\begin{align}
    &\max_{s}d^{\textnormal{1-2}}(s,s)\notag\\
{=}&\max_{s,a}\min_{a'}  \Big\{ |R_1(s,a) - R_2(s,a')|  \notag\\
& \ \qquad\qquad+ \gamma W_1\big(\mathbb{P}_1(\cdot|s,a), \mathbb{P}_2(\cdot|s,a')  ;d^{\textnormal{1-2}}\big)\big\}  \notag\\
\overset{(a)}{\leq}&\max_{s,a}\min_{a'}  \Big\{ |R_1(s,a) - R_2(s,a')|  \notag\\
& \ +  \gamma\textnormal{TV}\big(\mathbb{P}_1(\cdot|s,a), \mathbb{P}_2(\cdot|s,a')\big)\max_{s,s'}d^{\textnormal{1-2}}(s,s') \notag\\
&\ +\gamma \max_{s,a}  \big(1-\textnormal{TV}\big(\mathbb{P}_1(\cdot|s,a), \mathbb{P}_2(\cdot|s,a) \big)\big)\max_{s}d^{\textnormal{1-2}}(s,s)  \Big\} \notag\\
\overset{(b)}{\leq}&\max_{s,a}\min_{a'}  \Big\{ |R_1(s,a) - R_2(s,a')|  \notag\\
& \ +  \frac{\gamma \bar{R}}{1\!-\!\gamma}\textnormal{TV}\big(\mathbb{P}_1(\cdot|s,a), \mathbb{P}_2(\cdot|s,a')\big) \Big\} +\gamma \max_{s}d^{\textnormal{1-2}}(s,s)  \notag\\
{\leq}& \max_{s} H(X_s,X_s;\delta_{\textnormal{TV}})  +  \!\gamma\max_{s}d^{\textnormal{1-2}}(s,s).
\end{align}
where step~$(a)$ uses (\ref{tvandW}) and step~$(b)$ stems from (\ref{GBSMmaximum}). Rearranging the inequality yields the desired result.

\section{Proof of Theorem \ref{Theorem:6}}\label{Appendix3}
This section provides a detailed proof of the regret bound for policy $\pi$ transferred from $\mathcal{M}_1$ to $\mathcal{M}_2$.
By the triangle inequality, for any state $s_j\in\mathcal{S}_2$ and $s_i=f(s_j)\in\mathcal{S}_1$, we have
\begin{align}\label{eq33}
    |V_2^*(s_j)-V_2^{\pi}(s_j)|\leq & \  |V_2^*(s_j)-V_1^*(s_i)| +|V_1^*(s_i)-V_1^{\pi}(s_i)|\notag\\
    &\  +|V_1^{\pi}(s_i)-V_2^{\pi}(s_j)|.
\end{align}
Within the right-hand side of this inequality, the first summation term $|V_2^*(s_j)-V_1^*(s_i)|$ is upper bounded by ${d}^{1\text{-}2}(s_i,s_j)$ according to Theorem \ref{Theorem:2}, and $|V_1^*(s_i)-V_1^{\pi}(s_i)|$ is upper bounded by $\max_{s\in\mathcal{S}_1}|V_1^*(s)-V_1^{\pi}(s)|$. For the last term, we have
\begin{align}
 &\big|V_1^\pi(s_i)-V_2^{\pi}(s_j)\big|\notag\\
{=}&\Bigg| \sum_{a=1}^{|\mathcal{A}_1|} \pi(a|s_i)\bigg(R_1(s_i,a)+\gamma\sum_{k=1}^{|\mathcal{S}_1|} \mathbb{P}_1(s_k|s_i,a) V_1^\pi(s_k)\bigg) \notag\\
 &-\!\sum_{a=1}^{|\mathcal{A}_1|}\! \pi(a|f(s_j))\! \bigg(\!R_2(s_j,g(a)\!)\!+\!\gamma\!\sum_{k=1}^{|\mathcal{S}_2|}\!\! \mathbb{P}_2(s_k|s_j,g(a)) V_2^\pi(s_k)\!\bigg) \!\Bigg|\notag\\
 {\leq} &\sum_{a=1}^{|\mathcal{A}_1|} \Bigg(\pi(a|s_i) \Bigg(\Big| R_1 (s_i,a)-R_2(s_j,g(a))\Big| \notag\\
 &+\gamma\bigg|\sum_{k=1}^{|\mathcal{S}_1|}\mathbb{P}_1(s_k|s_i,a) V_1^\pi(s_k) \!-\! \sum_{k=1}^{|\mathcal{S}_2|} \mathbb{P}_2(s_k|s_j,g(a)) V_2^\pi(s_k) \bigg|\Bigg)\!\Bigg)\notag\\
 {\leq} &\sum_{a=1}^{|\mathcal{A}_1|} \Bigg(\pi(a|s_i) \Bigg(\Big| R_1 (s_i,a)-R_2(s_j,g(a))\Big|\notag\\
 &+\gamma\bigg|\sum_{k=1}^{|\mathcal{S}_1|}\mathbb{P}_1(s_k|s_i,a) V_1^*(s_k)\! -\! \sum_{k=1}^{|\mathcal{S}_2|} \mathbb{P}_2(s_k|s_j,g(a)) V_2^*(s_k) \bigg|\Bigg)\!\Bigg) \notag\\
 &+\gamma \sum_{a=1}^{|\mathcal{A}_1|} \pi(a|s_i) \bigg|\sum_{k=1}^{|\mathcal{S}_1|} \mathbb{P}_1(s_k|s_i,a) (V_1^*(s_k)- V_1^\pi(s_k))\bigg|\notag\\
 &+\gamma \sum_{a=1}^{|\mathcal{A}_1|} \pi(a|s_i) \bigg|\sum_{k=1}^{|\mathcal{S}_2|} \mathbb{P}_2(s_k|s_j,g(a)) (V_2^*(s_k)-V_2^\pi(s_k) )\bigg| \notag\\
 \leq &\max_{a\in\mathcal{A}_1} \Bigg\{\Big| R_1 (s_i,a) -R_2(s_j,g(a))\Big|\notag\\
 &+\gamma\bigg|\sum_{k=1}^{|\mathcal{S}_1|} \mathbb{P}_1(s_k|s_i,a) V_1^*(s_k)\!-\!\sum_{k=1}^{|\mathcal{S}_2|} \mathbb{P} (s_k|s_j,g(a)) V_2^*(s_k)\bigg|\Bigg\} \notag\\
 & + \gamma \max_{s\in\mathcal{S}_1} |V_1^*(s) - V_1^\pi(s)|+ \gamma \max_{s\in\mathcal{S}_2} |V_2^*(s) - V_2^\pi(s)| \notag\\
 \overset{(a)}{\leq} &\max_{a\in\mathcal{A}_1} \bigg\{\Big| R_1 (s_i,a) -R_2(s_j,g(a))\Big|\notag\\
 &\ \qquad+\gamma W_1\big(\mathbb{P}_1(\cdot|s_i,a), \mathbb{P}_2(\cdot|s_j,a');d^{1\text{-}2}\big)\bigg\} \notag\\
 & + \gamma \max_{s\in\mathcal{S}_1} |V_1^*(s) - V_1^\pi(s)|+ \gamma \max_{s\in\mathcal{S}_2} |V_2^*(s) - V_2^\pi(s)| \notag\\
 = &\max_{a\in\mathcal{A}_1} \delta(d^{\textnormal{1-2}})((s_i,a),(s_j,g(a)))  \notag\\
 & + \gamma \max_{s\in\mathcal{S}_1} |V_1^*(s) - V_1^\pi(s)|+ \gamma \max_{s\in\mathcal{S}_2} |V_2^*(s) - V_2^\pi(s)| 
\end{align}
Here, step~$(a)$ stems from the fact that, according to Theorem \ref{Theorem:2}, $\big( V_1^*(s_k)\big)_{k=1}^{|\mathcal{S}_1|}$ and $\big( V_2^*(s_k) \big)_{k=1}^{|\mathcal{S}_2|}$ form a feasible, but not necessarily the optimal, solution to the dual LP in (\ref{LP2}) for $W_1\big(\mathbb{P}_1(\cdot|s_i,a), \mathbb{P}_2(\cdot|s_j,a);d^{1\text{-}2}\big)$. Combining the above inequalities on all three summation terms in (\ref{eq33}) and taking the maximum of both sides, we have 
\begin{align}
& \max_{s'\in\mathcal{S}_2}|V_2^*(s')-V_2^{\pi}(s')| \notag\\
     \leq & \underbrace{\max_{s'\in\mathcal{S}_2}d^{1\text{-}2}(f(s'),s')}_{\text{1st term}}+ \underbrace{\max_{s\in\mathcal{S}_1}|V_1^*(s)-V_1^{\pi}(s)|}_{\text{2nd term}} \notag\\
   &+  \underbrace{\max_{s'\in\mathcal{S}_2,a\in\mathcal{A}_1} \delta(d^{\textnormal{1-2}})((s_i,a),(s_j,g(a))) }_{\text{3rd term}} \notag\\
    &\underbrace{+ \gamma \max_{s\in\mathcal{S}_1} \Big|V_1^*(s) - V_1^\pi(s)  \Big|+ \gamma \max_{s'\in\mathcal{S}_2} \Big|V_2^*(s') - V_2^\pi(s')\Big|}_{\text{3rd term}}\notag\\
    \leq&\ \max_{s\in\mathcal{S}_2}d^{1\text{-}2}(f(s),s) + \max_{s'\in\mathcal{S}_2,a\in\mathcal{A}_1} \delta(d^{\textnormal{1-2}})((s_i,a),(s_j,g(a)))\notag\\
    &+\!(1\!+\!\gamma)\max_{s\in\mathcal{S}_1}|V_1^*(s)\!-\!V_1^{\pi}(s)| \!+\!\gamma \max_{s'\in\mathcal{S}_2} \Big|V_2^*(s') \!-\! V_2^\pi(s')\Big|.
\end{align}
Rearranging the inequality yields the desired result.

\section{Proof of Theorem \ref{Theorem:vfa}}\label{Appendix:Theorem:vfa}
The first inequality in (\ref{vfa}) is a direct consequence of Theorem~\ref{Theorem:2}. For the second one, we construct an intermediate MDP defined by $\mathcal{M}_{1_{[\mathcal{S}]}}= \langle [\mathcal{S}_1], \mathcal{A}_1, \mathbb{P}_1, R_1, \gamma\rangle$, where the reward function and transition probability of each state are replaced by the counterparts of its representative state, and construct a similar $\mathcal{M}_{2_{[\mathcal{S}]}}$. Without loss of generality, taking $\mathcal{M}_{1}$ as an example, we now prove $d^{1\textnormal{-}1_{[\mathcal{S}]}}(s,s)=d^{1\textnormal{-}[1]}(s,[s])$ for all $s\in\mathcal{S}_1$ through induction. For the base case, $d^{1_{[\mathcal{S}]}\textnormal{-}[1]}_1(s,[s])=\max_a|R_1([s],a)-R_1([s],a)|=0$. By the induction hypothesis, we assume that $d^{1_{[\mathcal{S}]}\textnormal{-}[1]}_n(s,[s])=0$ for any $n$, then
\begin{align}
    &\ d^{1_{[\mathcal{S}]}\textnormal{-}[1]}_{n+1}(s,[s])\notag\\
    =&\  H(X_s,X_{[s]};\delta (d^{1_{[\mathcal{S}]}\textnormal{-}[1]}_{n}))  \notag\\
    \leq&\  \max_a \Big\{ |R_1([s],a)-R_1([s],a)|  \notag\\
&\qquad\ \ \ + \gamma W_1(\mathbb{P}(\cdot|[s],a),[\mathbb{P}](\cdot|[s],a);d^{1_{[\mathcal{S}]}\textnormal{-}[1]}_n) \Big\} \notag\\
    \leq&\  \gamma \max_a \sum_{\tilde{s}\in\mathcal{S}_1} \mathbb{P}(\tilde{s}|[\tilde{s}],a) d^{1_{[\mathcal{S}]}\textnormal{-}[1]}_n(\tilde{s},[\tilde{s}])=0.
\end{align}
The second inequality follows from a transportation plan that moves the mass from each $\tilde{s}$ to its representative state $[\tilde{s}]$. Now we have established $d^{1_{[\mathcal{S}]}\textnormal{-}[1]}_{n}(s,[s])=0$ for all $n\in\mathbb{N}$ and $s\in\mathcal{S}_1$. Taking $n\rightarrow \infty$, we have $d^{1_{[\mathcal{S}]}\textnormal{-}[1]}(s,[s])=0,\ \forall s\in\mathcal{S}_1$. Using the inter-MDP triangle inequality in Theorem \ref{Theorem:4}, we derive $ d^{1\textnormal{-}1_{[\mathcal{S}]}}(s,s)=d^{1\textnormal{-}[1]}(s,[s])$ for all $s\in\mathcal{S}_1$. Meanwhile, by observing the special case $d^{1_{[\mathcal{S}]}\textnormal{-}[1]}([s],[s])=0$, we also have $ d^{1\textnormal{-}1_{[\mathcal{S}]}}(s,[s])=d^{1\textnormal{-}[1]}(s,[s])$ through the inter-MDP triangle inequality. Then, we have
\begin{align}
 d^{1\textnormal{-}1_{[\mathcal{S}]}}(s,s)= d^{1\textnormal{-}1_{[\mathcal{S}]}}(s,[s])=d^{1\textnormal{-}[1]}(s,[s]),\ \forall s\in\mathcal{S}_1.   
\end{align}

Next, we prove the inequality between $\sigma_1$ and $\tilde{\sigma}_1$. First, we have
\begin{align}
    &\ d^{1\textnormal{-}1_{[\mathcal{S}]}}([s],[s]) \notag\\
=&\  H(X_{[s]},X_{[s]};\delta (d^{1\textnormal{-}1_{[\mathcal{S}]}}))  \notag\\
\leq &\ \max_a \Big\{ |R_1([s],a)-R_1([s],a)| \notag\\
&\qquad\ \ + \gamma W_1(\mathbb{P}(\cdot|[s],a),\mathbb{P}(\cdot|[s],a);d^{1\textnormal{-}1_{[\mathcal{S}]}}) \Big\}\notag\\
     = &\ \gamma\max_a \Big\{W_1(\mathbb{P}(\cdot|[s],a),\mathbb{P}(\cdot|[s],a);d^{1\textnormal{-}1_{[\mathcal{S}]}}) \Big\}\notag\\
     {\leq} &\ \gamma \max_a \Big\{\sum_{\tilde{s}\in\mathcal{S}} \mathbb{P}(\tilde{s}|[s],a) d^{1\textnormal{-}1_{[\mathcal{S}]}}(\tilde{s},\tilde{s}) \Big\}\notag\\
     \leq &\ \gamma \max_s d^{1\textnormal{-}1_{[\mathcal{S}]}}(s,s) = \gamma \max_s d^{1\textnormal{-}1_{[\mathcal{S}]}}(s,[s]).
\end{align}
Here, the first inequality follows from a straightforward transportation plan that keeps all the mass at its position. Then, according to the inter-MDP triangle inequality in Theorem \ref{Theorem:4}, we have 
\begin{align}
    d^{1\textnormal{-}1_{[\mathcal{S}]}}(s,[s]) &\leq d^{1\textnormal{-}1}(s,[s]) + d^{1\textnormal{-}1_{[\mathcal{S}]}}([s],[s])\notag\\
&\leq d^{1\textnormal{-}1}(s,[s]) + \gamma \max_s d^{1\textnormal{-}1_{[\mathcal{S}]}}(s,[s]).
\end{align}
Taking the maximum of both sides, rearranging the inequality, and combining the established $d^{1\textnormal{-}1_{[\mathcal{S}]}}(s,s)=d^{1\textnormal{-}[1]}(s,s)$, we finally have
\begin{align}
    \sigma_1&\ =\max_{s}d^{1\textnormal{-}[1]}(s,[s]) \notag\\
    &\ = \max_{s}d^{1\textnormal{-}1_{[\mathcal{S}]}}(s,[s])\notag\\
    &\ \leq \max_{s}d^{1\textnormal{-}1}(s,[s]) /(1\!-\!\gamma) \notag\\
&\ \leq \max_{s}d^{\sim}(s,[s]) /(1-\gamma) = \tilde{\sigma}_1/(1-\gamma).
\end{align}

\section{Proof for Tightness of GBSM}\label{Appendix1.5}
First, we recall the original definition of GBSM in the earlier conference version \cite{GBSM}.
\begin{defi}[\textbf{Original GBSM definition}]
Given two MDPs $\mathcal{M}_1=\langle \mathcal{S}_1, \mathcal{A}, \mathbb{P}_1, R_1, \gamma\rangle$ and $\mathcal{M}_2=\langle \mathcal{S}_2, \mathcal{A}, \mathbb{P}_2, R_2, \gamma\rangle$, the GBSM between any states $s\in\mathcal{S}_1$ and $s'\in\mathcal{S}_2$ is defined as:
\begin{align}
    d_\textnormal{conf}^{1\textnormal{-}2}(s,s')=&\ \max_{a} \Big\{\big|R_1(s,a)-R_2(s',a)\big| \notag\\
    &\ \qquad\ \ +\gamma W_1(\mathbb{P}_1(\cdot|s,a), \mathbb{P}_2(\cdot|s',a) ;d_\textnormal{conf}^{1\text{-}2})\Big\} \notag\\
    =&\ \max_a \delta(d_{\textnormal{conf}}^{1\text{-}2})((s,a), (s',a)).
\end{align}
\end{defi}
Similarly, when writen iteratively as $d_{\textnormal{conf},n}^{1\textnormal{-}2}(s,s')= \max_a \delta(d_{\textnormal{conf},n-1}^{1\text{-}2})((s,a), (s',a))$, $d_{\textnormal{conf},n}$ converges to the fixed point $d_\textnormal{conf}^{1\textnormal{-}2}$ with $n\rightarrow\infty$. Then we are ready to prove the tightness of the GBSM proposed in this paper and its associated bounds compared with those of $d_\textnormal{conf}^{1\textnormal{-}2}$.
\begin{lemma}[\textbf{Tightness of $d^{1\textnormal{-}2}$}]\label{tightgbsm}
When $\mathcal{M}_1$ and $\mathcal{M}_2$ share the same $\mathcal{A}$, for any $s\in \mathcal{S}_1$ and $s'\in \mathcal{S}_2$,
    \begin{align}
        d^{1\textnormal{-}2}(s,s') \leq {d}_\textnormal{conf}^{1\text{-}2}(s,s').
    \end{align}
\end{lemma}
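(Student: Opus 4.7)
The plan is to establish the inequality by induction on the recursion index $n$ defining the two sequences $d_n^{1\textnormal{-}2}$ and $d_{\textnormal{conf},n}^{1\textnormal{-}2}$. The base case is immediate, as both iterates are initialized to the constant zero function. The argument then reduces to showing that if the inductive hypothesis $d_n^{1\textnormal{-}2}(s,s') \leq d_{\textnormal{conf},n}^{1\textnormal{-}2}(s,s')$ holds pointwise on $\mathcal{S}_1 \times \mathcal{S}_2$, it is preserved by one more application of the respective update rules.

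The inductive step decomposes into two sub-steps. First, I would invoke the monotonicity of the Wasserstein distance in its cost function, established in inequality (\ref{lemma2ieq1}) of Appendix~\ref{Appendix1}, to lift the pointwise inequality on $d_n$ to the pointwise inequality
$$\delta(d_n^{1\textnormal{-}2})((s,a),(s',a')) \leq \delta(d_{\textnormal{conf},n}^{1\textnormal{-}2})((s,a),(s',a'))$$
on every state-action-pair quadruple. Second, I would exploit the hypothesis $\mathcal{A}_1 = \mathcal{A}_2 = \mathcal{A}$: the identity pairing $a' = a$ is then a canonical matching between $X_s$ and $X_{s'}$, which plugged into each minimum inside the Hausdorff metric gives the crucial bound
$$H\bigl(X_s, X_{s'};\delta(d_n^{1\textnormal{-}2})\bigr) \leq \max_{a\in\mathcal{A}} \delta(d_n^{1\textnormal{-}2})((s,a),(s',a)).$$
Chaining the two sub-steps yields
$$d_{n+1}^{1\textnormal{-}2}(s,s') \leq \max_{a} \delta(d_{\textnormal{conf},n}^{1\textnormal{-}2})((s,a),(s',a)) = d_{\textnormal{conf},n+1}^{1\textnormal{-}2}(s,s'),$$
closing the induction.

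Finally, I would take $n\to\infty$ and use the convergence of both iterates to their respective fixed points, $d^{1\textnormal{-}2}$ via Theorem~\ref{Theorem:1} and $d_{\textnormal{conf}}^{1\textnormal{-}2}$ via the analogous (and simpler) fixed-point argument without the Hausdorff wrapping, to transfer the inequality to the limit. The main obstacle is the Hausdorff-to-diagonal-max bound: it is a short observation, but it is precisely where the assumption $\mathcal{A}_1 = \mathcal{A}_2$ enters in an essential way, since without a canonical action correspondence the identity matching is unavailable. Everything else in the argument is routine, relying only on the Wasserstein monotonicity already exploited in Appendix~\ref{Appendix1}.
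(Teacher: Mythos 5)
Your proposal is correct and follows essentially the same route as the paper's proof in Appendix~\ref{Appendix1.5}: induction from the common zero base case, monotonicity of the update in the cost function (the paper invokes this via the properties of $F$ established in Appendix~\ref{Appendix1}), the bound $H(X_s,X_{s'};\delta)\leq\max_a\delta((s,a),(s',a))$ obtained from the identity action pairing, and a passage to the limit. The only difference is cosmetic — you apply the Hausdorff-to-diagonal bound before the monotonicity step while the paper applies it after — so the two arguments are interchangeable.
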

\begin{proof}
For the base case, we have
        \begin{align}
            d_0^{1\textnormal{-}2}(s,s') = d_\textnormal{conf,0}^{1\textnormal{-}2}(s,s') = 0,\  \forall\ (s,s')\in \mathcal{S}_1\times\mathcal{S}_2.
        \end{align}
    By the induction hypothesis, we assume that for an arbitrary $n$,
    \begin{align}
        d^{1\text{-}2}_n(s,s') \leq d^{1\text{-}2}_{\textnormal{conf},n}(s,s'),\ \forall\ (s,s')\in \mathcal{S}_1\times\mathcal{S}_2.
    \end{align}
    By the continuity of $F$, we have that $F(d_\textnormal{n}^{1\text{-}2}|s,s') \leq F(d_{\textnormal{conf},n}^{1\text{-}2}|s,s')$, which implies
    \begin{align}
         d_\textnormal{n+1}^{1\text{-}2}(s,s')\leq&\ F(d_{\textnormal{conf},n}^{1\text{-}2}|s,s') \notag\\
        =&\max \Big\{ \max_{a\in\mathcal{A}} \min_{a'\in\mathcal{A}} \delta(d_{\textnormal{conf},n}^{1\text{-}2})((s,a), (s',a')) ,\notag\\
        &\qquad\quad \min_{a\in\mathcal{A}} \max_{a'\in\mathcal{A}} \delta(d_{\textnormal{conf},n}^{1\text{-}2})((s,a), (s',a')) \Big\}\notag\\
        \leq&\ \max_a \delta(d_{\textnormal{conf},n}^{1\text{-}2})((s,a), (s',a)) \notag\\
        =&\ d_{\textnormal{conf},n+1}^{1\text{-}2}(s,s'),\ \forall\ (s,s')\in \mathcal{S}_1\times\mathcal{S}_2.
    \end{align}
    Taking $n\rightarrow \infty$ yields the desired result.
\end{proof}

\begin{lemma}[\textbf{Tightness of $d^{1\textnormal{-}2}$-based policy transfer regret bound}]
When $\mathcal{M}_1$ and $\mathcal{M}_2$ share the same $\mathcal{A}$, and the action mapping is the identity mapping $g(a)=a$, the regret of the transferred policy in $\mathcal{M}_2$ is bounded by:
 \begin{align}
    &\ \ \max_{s'\in\mathcal{S}_2}|V_2^*(s')-V_2^{\pi}(s')|\notag\\
    &\underbrace{\begin{aligned}
        \leq & \big(\max_{s'\in\mathcal{S}_2}d^{\textnormal{1-2}}(f(s'),s')+\max_{\substack{s' \in \mathcal{S}_2 \\ a \in \mathcal{A}}}\delta(d^{\textnormal{1-2}})((f(s'),a),(s',g(a)))\notag\\
        &+(1+\gamma)\max_{s\in\mathcal{S}_1}|V_1^*(s)-V_1^{\pi}(s)|\big)\big/\big(1-\gamma\big)
    \end{aligned}}_{d^{1\textnormal{-}2}\text{-based bound}}
    \notag\\
    &\underbrace{\begin{aligned}
         \leq  \frac2{1-\gamma}\max_{s'\in\mathcal{S}_2}d_{\textnormal{conf}}(f(s'),s')+\frac{1+\gamma}{1-\gamma}\max_{s\in\mathcal{S}_1}|V_1^*(s)-V_1^{\pi}(s)|.
    \end{aligned}}_{d_{\textnormal{conf}}^{1\textnormal{-}2}\text{-based bound}}
\end{align}
\end{lemma}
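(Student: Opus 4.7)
The plan is to treat the two inequalities in the statement separately. The first inequality is a direct instantiation of Theorem~\ref{Theorem:6}: since $\mathcal{M}_1$ and $\mathcal{M}_2$ share the action space $\mathcal{A}$ and we take $g(a)=a$ as the identity mapping, the bound in \eqref{Th6eq1} applies verbatim with $\delta(d^{\textnormal{1-2}})((f(s'),a),(s',g(a)))=\delta(d^{\textnormal{1-2}})((f(s'),a),(s',a))$. No further argument is needed here.

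For the second inequality, after cancelling the common $\tfrac{1+\gamma}{1-\gamma}\max_{s}|V_1^*(s)-V_1^\pi(s)|$ terms and clearing the factor $1/(1-\gamma)$, it suffices to prove
\begin{align}
& \max_{s'\in\mathcal{S}_2} d^{\textnormal{1-2}}(f(s'),s') + \max_{\substack{s'\in\mathcal{S}_2\\ a\in\mathcal{A}}} \delta(d^{\textnormal{1-2}})((f(s'),a),(s',a)) \notag\\
& \qquad \leq 2\max_{s'\in\mathcal{S}_2} d_{\textnormal{conf}}^{\textnormal{1-2}}(f(s'),s').
\end{align}
I would bound each of the two terms on the left by $\max_{s'} d_{\textnormal{conf}}^{\textnormal{1-2}}(f(s'),s')$ separately. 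The first term is immediate from Lemma~\ref{tightgbsm}: $d^{\textnormal{1-2}}(s,s')\leq d_{\textnormal{conf}}^{\textnormal{1-2}}(s,s')$ pointwise, so taking maxima preserves the inequality.

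For the second term, the key step is monotonicity of $\delta$ in its cost-function argument. Since $d^{\textnormal{1-2}}\leq d_{\textnormal{conf}}^{\textnormal{1-2}}$ pointwise, Lemma~\ref{lemma:conti:W1} (more precisely, the monotonicity property established in \eqref{lemma2ieq1}) gives
\begin{align}
W_1\big(\mathbb{P}_1(\cdot|s,a),\mathbb{P}_2(\cdot|s',a);d^{\textnormal{1-2}}\big)\leq W_1\big(\mathbb{P}_1(\cdot|s,a),\mathbb{P}_2(\cdot|s',a);d_{\textnormal{conf}}^{\textnormal{1-2}}\big),
\end{align}
and hence $\delta(d^{\textnormal{1-2}})((s,a),(s',a))\leq \delta(d_{\textnormal{conf}}^{\textnormal{1-2}})((s,a),(s',a))$. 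Combining this with the very definition $d_{\textnormal{conf}}^{\textnormal{1-2}}(s,s')=\max_a \delta(d_{\textnormal{conf}}^{\textnormal{1-2}})((s,a),(s',a))$ yields
\begin{align}
\max_{\substack{s'\in\mathcal{S}_2\\ a\in\mathcal{A}}} \delta(d^{\textnormal{1-2}})((f(s'),a),(s',a)) \leq \max_{s'\in\mathcal{S}_2} d_{\textnormal{conf}}^{\textnormal{1-2}}(f(s'),s').
\end{align}
Adding the two bounds produces the required factor of $2$ and finishes the argument.

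The steps are essentially bookkeeping once Lemma~\ref{tightgbsm} and the monotonicity of $W_1$ are in hand, so I do not anticipate a significant obstacle. The only mild subtlety worth highlighting is that the Hausdorff-based $d^{\textnormal{1-2}}(f(s'),s')$ on the left is not expressed as a single $\max_a \delta(\cdot)((f(s'),a),(s',a))$ term, so one must argue the two summands separately rather than collapsing them into a single inequality at the $\delta$ level; using Lemma~\ref{tightgbsm} for the first and pointwise monotonicity of $\delta$ for the second cleanly handles this.
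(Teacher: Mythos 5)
Your proposal is correct and follows essentially the same route as the paper's proof: the first term is handled by Lemma~\ref{tightgbsm}, and the second by monotonicity of $W_1$ in its cost argument (i.e.\ $d^{1\textnormal{-}2}\leq d_{\textnormal{conf}}^{1\textnormal{-}2}$ pointwise implies $\delta(d^{1\textnormal{-}2})\leq\delta(d_{\textnormal{conf}}^{1\textnormal{-}2})$), after which the definition $d_{\textnormal{conf}}^{1\textnormal{-}2}(s,s')=\max_a\delta(d_{\textnormal{conf}}^{1\textnormal{-}2})((s,a),(s',a))$ closes the bound. The only cosmetic difference is that you explicitly cite the monotonicity established in \eqref{lemma2ieq1}, whereas the paper performs the same substitution inline.
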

\begin{proof}
We first derive
\begin{equation}\label{D.1.1}
    \max_{s'\in\mathcal{S}_2}d^{\textnormal{1-2}}(f(s'),s') \leq \max_{s'\in\mathcal{S}_2}d_{\textnormal{conf}}(f(s'),s')
\end{equation}
from Lemma~\ref{tightgbsm}, and substitute $g(a)=a$ into the second term of $d^{1\textnormal{-}2}$-based bound as follows.
\begin{align}\label{D.1.2}
    &\max_{s',a}\delta(d^{\textnormal{1-2}})((f(s'),a),(s',g(a)))\notag\\
   = & \max_{s',a}\big\{|R_1((f(s'),a)-R_2(s',a)| \notag\\
    &\qquad\  +\gamma W_1(\mathbb{P}_1(\cdot|(f(s'),a), \mathbb{P}_2(\cdot|s',a) ;d^{\textnormal{1-2}})\big\},\notag\\
   \leq & \max_{s',a}\big\{|R_1((f(s'),a)-R_2(s',a)| \notag\\
    &\qquad\ +\gamma W_1(\mathbb{P}_1(\cdot|(f(s'),a), \mathbb{P}_2(\cdot|s',a) ;d_{\textnormal{conf}}^{\textnormal{1-2}})\big\}\notag\\
    = &\max_{s'\in\mathcal{S}_2}d_{\textnormal{conf}^{\textnormal{1-2}}}(f(s'),s').
\end{align}
Combining (\ref{D.1.1}) and (\ref{D.1.2}) yields the desired result.
\end{proof}

\begin{lemma}[\textbf{Tightness of $d^{1\textnormal{-}2}$-based VFA bound}]
Given an MDP $\mathcal{M}_1$ and its aggregated counterpart $\mathcal{M}_{[1]}$, the VFA bound is given by
\begin{equation}
\max_{s\in\mathcal{S}_1}|V^*_1(s)-V^*_{[1]}(s)|\leq \sigma_1 \leq \sigma_{1,\textnormal{conf}},
\end{equation}
where $\sigma_1 = \max\limits_{s\in\mathcal{S}_1}d^{1\textnormal{-}[1]}(s,[s])$ and $\sigma_{1,\textnormal{conf}} = \max\limits_{s\in\mathcal{S}_1}d_\textnormal{conf}^{1\textnormal{-}[1]}(s,[s])$.
\end{lemma}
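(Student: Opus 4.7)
The plan is to prove the two inequalities in sequence, each of which follows almost immediately from a previously established result. The first inequality is essentially a restatement of the VFA bound in Theorem~\ref{Theorem:vfa}, and the second inequality is a direct application of Lemma~\ref{tightgbsm} (tightness of $d^{1\textnormal{-}2}$).

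For the first inequality $\max_{s\in\mathcal{S}_1}|V^*_1(s)-V^*_{[1]}(s)|\leq \sigma_1$, I would invoke Theorem~\ref{Theorem:2} applied to the MDP pair $(\mathcal{M}_1,\mathcal{M}_{[1]})$ with the state pair $(s,[s])$. Since the aggregated MDP $\mathcal{M}_{[1]}$ is constructed so that $R_1(s,a)=R_1([s],a)$ and $\mathbb{P}_1(\cdot|s,a)=\mathbb{P}_1(\cdot|[s],a)$ for all $s\in\mathcal{S}_1$, a simple recursive argument on the Bellman operator gives $V^*_{[1]}(s)=V^*_{[1]}([s])$. Combining with Theorem~\ref{Theorem:2} yields $|V^*_1(s)-V^*_{[1]}(s)|=|V^*_1(s)-V^*_{[1]}([s])|\leq d^{1\textnormal{-}[1]}(s,[s])$, and taking the maximum over $s\in\mathcal{S}_1$ gives $\sigma_1$ on the right-hand side.

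For the second inequality $\sigma_1\leq \sigma_{1,\textnormal{conf}}$, I would observe that $\mathcal{M}_1$ and $\mathcal{M}_{[1]}$ share the same action space $\mathcal{A}_1$, so the hypothesis of Lemma~\ref{tightgbsm} is satisfied. Applying the lemma with $\mathcal{M}_2=\mathcal{M}_{[1]}$ and $s'=[s]$ yields $d^{1\textnormal{-}[1]}(s,[s])\leq d_\textnormal{conf}^{1\textnormal{-}[1]}(s,[s])$ for every $s\in\mathcal{S}_1$. Taking the maximum over $s$ on both sides produces $\sigma_1\leq \sigma_{1,\textnormal{conf}}$, completing the chain.

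Since both pieces reduce to invoking previously established theorems, there is no substantive obstacle to overcome. The only point requiring care is verifying that the identities $R_1(s,a)=R_1([s],a)$ and $\mathbb{P}_1(\cdot|s,a)=\mathbb{P}_1(\cdot|[s],a)$ built into the aggregation construction indeed force $V^*_{[1]}(s)=V^*_{[1]}([s])$, which can be seen from the fact that the Bellman operator in $\mathcal{M}_{[1]}$ produces the same value iterates at $s$ and $[s]$ from the zero initialization. Once this identity is noted, both inequalities are immediate.
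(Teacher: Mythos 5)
Your proof is correct and follows essentially the same route as the paper, which simply cites Theorem~\ref{Theorem:vfa} for the first inequality and Lemma~\ref{tightgbsm} for the second. The only difference is that you unfold the first inequality explicitly (via Theorem~\ref{Theorem:2} together with the observation $V^*_{[1]}(s)=V^*_{[1]}([s])$), which is a faithful expansion of what the paper's citation of Theorem~\ref{Theorem:vfa} already encapsulates.
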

\begin{proof}
This inequality follows directly from Theorem~\ref{Theorem:vfa} and Lemma~\ref{tightgbsm}.
\end{proof}

\section{Proofs of on-policy GBSM Properties}\label{Appendix6}
\begin{theorem}[\textbf{On-policy GBSM optimal value difference bound}]\label{the:on-policyVB}
Let $V_1^\pi$ and $V_2^\pi$ denote the value functions with policy $\pi$ in $\mathcal{M}_1$ and $\mathcal{M}_2$, respectively. Then on-policy GBSM provides an upper bound for the difference between the value functions for any state pair $(s,s')\in \mathcal{S}_1\times\mathcal{S}_2$:
    \begin{align}
        |V^\pi_1(s)-V^\pi_2(s')|\leq d_\pi^{1\textnormal{-}2}(s,s').
    \end{align}
\end{theorem}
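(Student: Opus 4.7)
The plan is to mirror the proof of Theorem~\ref{Theorem:2} (Appendix~\ref{Appendix2}), but adapted to the on-policy setting where the max over actions is replaced by the policy-induced expectation. The key simplification is that the Hausdorff metric does not appear in the on-policy GBSM recursion, because the reward and transition under $\pi$ have already been averaged over actions, so the iterative formulation becomes
\begin{align*}
    d_{\pi,n+1}^{1\textnormal{-}2}(s,s') = |R_1^\pi(s)-R_2^\pi(s')| +\gamma W_1\bigl(\mathbb{P}_1^\pi(\cdot|s),\mathbb{P}_2^\pi(\cdot|s');d_{\pi,n}^{1\textnormal{-}2}\bigr),
\end{align*}
with $d_{\pi,0}^{1\textnormal{-}2}\equiv 0$, converging to $d_\pi^{1\textnormal{-}2}$ by the same Knaster-Tarski argument used for Theorem~\ref{Theorem:1}.

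First, I introduce the matching recursive form of the on-policy value function, $V_i^{\pi,0}\equiv 0$ and
\begin{align*}
    V_i^{\pi,n+1}(s)=R_i^\pi(s)+\gamma\sum_{\tilde{s}\in\mathcal{S}_i}\mathbb{P}_i^\pi(\tilde{s}|s)V_i^{\pi,n}(\tilde{s}),\quad i\in\{1,2\},
\end{align*}
which converges to $V_i^\pi$ as $n\to\infty$. The proof then proceeds by induction on $n$, with the base case $|V_1^{\pi,0}(s)-V_2^{\pi,0}(s')|=0=d_{\pi,0}^{1\textnormal{-}2}(s,s')$ being immediate.

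For the inductive step, assuming $|V_1^{\pi,n}(s_i)-V_2^{\pi,n}(s_j)|\leq d_{\pi,n}^{1\textnormal{-}2}(s_i,s_j)$ for all $(s_i,s_j)\in\mathcal{S}_1\times\mathcal{S}_2$, I expand the difference of the $(n+1)$-th iterates, apply the triangle inequality to split it into a reward term and a transition term, and bound the transition term using the dual LP representation of the Wasserstein distance in (\ref{LP2}). The crucial observation is that, by the induction hypothesis, the vectors $\bigl(V_1^{\pi,n}(s_i)\bigr)_{i=1}^{|\mathcal{S}_1|}$ and $\bigl(V_2^{\pi,n}(s_j)\bigr)_{j=1}^{|\mathcal{S}_2|}$ satisfy $\mu_i-\nu_j\leq d_{\pi,n}^{1\textnormal{-}2}(s_i,s_j)$ and therefore form a feasible (though not necessarily optimal) dual solution for $W_1\bigl(\mathbb{P}_1^\pi(\cdot|s),\mathbb{P}_2^\pi(\cdot|s');d_{\pi,n}^{1\textnormal{-}2}\bigr)$. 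This yields $|V_1^{\pi,n+1}(s)-V_2^{\pi,n+1}(s')|\leq d_{\pi,n+1}^{1\textnormal{-}2}(s,s')$, and letting $n\to\infty$ gives the claim.

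I do not expect a significant obstacle here: the only subtlety is the sign handling when invoking dual feasibility (one must take $\mu_i=V_1^{\pi,n}(s_i)$, $\nu_j=V_2^{\pi,n}(s_j)$ under one sign convention and swap the roles of $\mathcal{M}_1,\mathcal{M}_2$ if the reversed inequality is needed), which is handled exactly as in Appendix~\ref{Appendix2}. Since the on-policy recursion removes the max and Hausdorff operators, the inductive step is in fact strictly simpler than that of Theorem~\ref{Theorem:2}, and no new analytic machinery is required.
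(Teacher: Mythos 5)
Your proposal is correct and follows essentially the same route as the paper's proof in Appendix~\ref{Appendix6}: induction on the iterates $V_i^{\pi,n}$ and $d_{\pi,n}^{1\textnormal{-}2}$, splitting the difference into a reward term and a transition term, and bounding the latter via dual feasibility of $\bigl(V_1^{\pi,n}(s_i)\bigr)_i$ and $\bigl(V_2^{\pi,n}(s_j)\bigr)_j$ in the LP of (\ref{LP2}). Your observation that the absence of the max/Hausdorff operators makes the inductive step strictly simpler than in Theorem~\ref{Theorem:2} matches the paper's treatment exactly.
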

\begin{proof}
For the base case, we have
        \begin{align}\label{D1}
            &\ |V_1^{\pi,0} (s_i)-V_2^{\pi,0} (s_j)|  \notag\\
           {=} &\  d_{\pi,0}^{1\textnormal{-}2}(s_i,s_j)=0,\ \forall\ (s_i,s_j)\in \mathcal{S}_1\times\mathcal{S}_2.
        \end{align}
    Then, we assume that for an arbitrary $n$,
    \begin{align}\label{D2}
        |V_1^{\pi,n} (s_i)-V_2^{\pi,n} (s_j)| \leq&\ d^{1\text{-}2}_{\pi,n}(s_i,s_j), \forall (s_i,s_j)\in \mathcal{S}_1\!\times\!\mathcal{S}_2.
    \end{align}
    The induction follows
\begin{align}
           &\ \big|V_1^{\pi,n+1} (s_i)-V_2^{\pi,n+1} (s_j)\big| \notag\\
           {=} &\ \bigg|\Big(R_1^\pi(s_i)+\gamma\sum_{k=1}^{|\mathcal{S}_1|}\mathbb{P}_1^\pi(s_k|s_i)V_1^{\pi,n}(s_k)\Big) \notag\\
           &\ -\Big(R^\pi_2(s_j)+\gamma\sum_{k=1}^{|\mathcal{S}_2|}\mathbb{P}^\pi_2(s_k|s_j)V_2^{\pi,n}(s_k)\Big)\bigg|\notag\\
           {\leq} &\  \bigg|R_1^\pi(s_i) - R_2^\pi (s_j)\bigg|  +\gamma\bigg|\sum\limits_{k=1}^{|\mathcal{S}_1|} \mathbb{P}^\pi_1 (s_k|s_i) V_2^{\pi,n}(s_k)\notag\\
           &\ - \sum\limits_{k=1}^{|\mathcal{S}_2|} \mathbb{P}^\pi_2(s_k|s_j) V_2^{\pi,n}(s_k)\bigg|\notag\\
           {\leq}&\  \Big|R_1^\pi(s_i)- R_2^\pi (s_j)\Big| +\gamma W_1\Big(\mathbb{P}^\pi_1(\cdot|s_i),\mathbb{P}^\pi_2(\cdot|s_j);d^{1\text{-}2}_n\Big)\notag\\
           {=}&\ d^{1\text{-}2}_{n+1}(s_i,s_j),\ \forall\ (s_i,s_j)\in \mathcal{S}_1\times\mathcal{S}_2.
        \end{align}

   Taking $n\rightarrow \infty$ yields the desired result.
\end{proof}

\begin{theorem}[\textbf{On-policy GBSM distance bound on identical spaces}]
When $\mathcal{M}_1$ and $\mathcal{M}_2$ share the same $\mathcal{S}$,
\begin{align}
   \max_{s}d^{\textnormal{1-2}}_\pi(s,s) \leq &\frac{1}{1-\gamma}\max_{s,a} \Big\{\big|R_1^\pi(s)-R_2^\pi(s)\big|\notag\\ 
   &\ \ \ \ +\frac{\gamma \bar{R}}{1-\gamma} \textnormal{TV}\left(\mathbb{P}^\pi_1(\cdot|s), \mathbb{P}^\pi_2(\cdot|s) \right)\Big\},
\end{align}
where TV represents the total variation distance defined by $\textnormal{TV}(P,Q)= \frac{1}{2} \sum_{s\in\mathcal{S}} \big|P(s)-Q(s)\big|$.
\end{theorem}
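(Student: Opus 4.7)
The plan is to mirror the proof of Theorem~\ref{Theorem:5} given in Appendix~\ref{Appendix:Theorem:5}, with the Hausdorff-metric envelope removed since the on-policy GBSM has already averaged actions through $\pi$. The argument rests on two ingredients that I would set up in order, followed by a substitution into the fixed-point equation and a rearrangement.

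First, I would establish a uniform upper bound $\max_{s,s'} d^{1\textnormal{-}2}_\pi(s,s') \leq \bar{R}/(1-\gamma)$. From the on-policy fixed-point equation in Section~\ref{laxgbsm}, $d^{1\textnormal{-}2}_\pi(s,s') \leq \bar{R} + \gamma \max_{s,s'} d^{1\textnormal{-}2}_\pi(s,s')$, where the Wasserstein term is bounded by the maximum of its cost function; taking the maximum on both sides and rearranging yields the desired $\bar{R}/(1-\gamma)$ bound. This is exactly the on-policy analogue of (\ref{GBSMmaximum}).

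Second, I would invoke the same transportation-plan lemma already used to prove Theorem~\ref{Theorem:5}: for any two distributions $P,Q$ on a common state space $\mathcal{S}$ and any cost function $d$,
\begin{equation}
W_1(P,Q;d) \leq \textnormal{TV}(P,Q)\max_{s,s'} d(s,s') + \bigl(1-\textnormal{TV}(P,Q)\bigr)\max_s d(s,s),
\end{equation}
which follows from the explicit plan that preserves the shared mass $\min\{P(s),Q(s)\}$ at each state (incurring cost at most $\max_s d(s,s)$) and redistributes the residual $\textnormal{TV}(P,Q)$ of mass at worst-case cost $\max_{s,s'} d(s,s')$. This is precisely inequality (\ref{tvandW}) from Appendix~\ref{Appendix:Theorem:5}, and it transfers verbatim since it makes no reference to actions.

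Third, I would apply these two ingredients to the on-policy fixed-point equation evaluated at $(s,s)$:
\begin{equation}
d^{1\textnormal{-}2}_\pi(s,s) = |R_1^\pi(s)-R_2^\pi(s)| + \gamma W_1\bigl(\mathbb{P}_1^\pi(\cdot|s), \mathbb{P}_2^\pi(\cdot|s); d^{1\textnormal{-}2}_\pi\bigr).
\end{equation}
Substituting the lemma from the second step for the Wasserstein term and the bound $\bar{R}/(1-\gamma)$ from the first step for $\max_{s,s'} d^{1\textnormal{-}2}_\pi(s,s')$ gives
\begin{equation}
d^{1\textnormal{-}2}_\pi(s,s) \leq |R_1^\pi(s)-R_2^\pi(s)| + \frac{\gamma\bar{R}}{1-\gamma}\textnormal{TV}\bigl(\mathbb{P}_1^\pi(\cdot|s), \mathbb{P}_2^\pi(\cdot|s)\bigr) + \gamma \max_{\tilde{s}} d^{1\textnormal{-}2}_\pi(\tilde{s},\tilde{s}),
\end{equation}
where the last term absorbs the $(1-\textnormal{TV})\max_{\tilde{s}} d(\tilde{s},\tilde{s})$ contribution by monotonicity. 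Taking $\max_s$ on both sides and isolating $\max_s d^{1\textnormal{-}2}_\pi(s,s)$ on the left produces the factor $1/(1-\gamma)$ in front of the right-hand side, recovering the claimed bound.

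The main conceptual step is the transportation-plan inequality, and it is already proved in Appendix~\ref{Appendix:Theorem:5}; therefore no genuine obstacle arises. The argument is essentially a specialization of the action-indexed proof of Theorem~\ref{Theorem:5} to the on-policy setting, with the $\max_{a}\min_{a'}$ Hausdorff envelope replaced by direct evaluation at the policy-averaged rewards $R_i^\pi$ and transition kernels $\mathbb{P}_i^\pi$.
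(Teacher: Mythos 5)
Your proposal is correct and follows essentially the same route as the paper's own proof: both apply the transportation-plan inequality (\ref{tvandW}) to the on-policy fixed-point equation at $(s,s)$, bound $\max_{s,s'}d_\pi^{1\text{-}2}(s,s')$ by $\bar{R}/(1-\gamma)$, absorb the $(1-\textnormal{TV})$ term into $\gamma\max_s d_\pi^{1\text{-}2}(s,s)$, and rearrange to extract the $1/(1-\gamma)$ factor. The only difference is cosmetic: you make the $\bar{R}/(1-\gamma)$ uniform bound an explicit preliminary step, whereas the paper invokes it implicitly from the proof of Theorem~\ref{Theorem:5}.
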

\begin{proof}
Using inequality (\ref{tvandW}) in Theorem~\ref{Theorem:5}, we have
\begin{align}
    &\ \max_{s}d_\pi^{\textnormal{1-2}}(s,s)\notag\\
{\leq}&\  |R_1^\pi(s) - R_2^\pi(s)|   +  \gamma\textnormal{TV}\big(\mathbb{P}^\pi_1(\cdot|s), \mathbb{P}^\pi_2(\cdot|s)\big)\max_{s,s'}d_\pi^{\textnormal{1-2}}(s,s') \notag\\
&\ +\gamma \big(1-\textnormal{TV}\big(\mathbb{P}^\pi_1(\cdot|s), \mathbb{P}^\pi_2(\cdot|s) \big)\big)\max_{s}d_\pi^{\textnormal{1-2}}(s,s)\notag\\
{\leq}& \ |R_1^\pi(s) - R_2^\pi(s)|   +  \frac{\gamma \bar{R}}{1-\gamma}\textnormal{TV}\big(\mathbb{P}^\pi_1(\cdot|s), \mathbb{P}^\pi_2(\cdot|s )\big)  \notag\\
&\ + \gamma\max_{s}d_\pi^{\textnormal{1-2}}(s,s).\notag
\end{align}
Rearranging the inequality yields the desired result.
\end{proof}

\begin{theorem}[\textbf{VFA error bound with non-optimal policy}]
    \begin{align}
        \max_s |V^\pi_1(s)-V^\pi_{[1]}(s)|\leq&\ \max_s d_\pi^{1\textnormal{-}[1]}(s,s)\notag\\
        \leq&\ \max_s \tilde{d}_\pi(s,[s])/(1-\gamma)
    \end{align}
\end{theorem}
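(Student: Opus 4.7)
The plan is to mirror the two-part proof of Theorem~\ref{Theorem:vfa} given in Appendix~\ref{Appendix:Theorem:vfa}, but replace every object by its on-policy counterpart. The first inequality is immediate from Theorem~\ref{the:on-policyVB}: applying it with $\mathcal{M}_2 = \mathcal{M}_{[1]}$ and $s'=s$ yields $|V_1^\pi(s)-V_{[1]}^\pi(s)| \le d_\pi^{1\text{-}[1]}(s,s)$, and taking the maximum over $s$ gives the first bound.

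For the second inequality, I would introduce the intermediate MDP $\mathcal{M}_{1_{[\mathcal{S}]}} = \langle [\mathcal{S}_1], \mathcal{A}_1, \mathbb{P}_1, R_1, \gamma\rangle$ exactly as in Appendix~\ref{Appendix:Theorem:vfa}, i.e.\ replace the reward and transition dynamics at every state by those of its representative; the corresponding on-policy quantities $R_1^\pi(s)$ and $\mathbb{P}_1^\pi(\cdot|s)$ are then identified with those at $[s]$. A short induction, parallel to the one in Appendix~\ref{Appendix:Theorem:vfa}, then shows $d_\pi^{1_{[\mathcal{S}]}\text{-}[1]}(s,[s])=0$ for every $s$: the base case is trivial, and the inductive step uses the transportation plan that moves all mass from each $\tilde{s}$ to its representative $[\tilde{s}]$, which is admissible because $[\mathbb{P}_1^\pi]$ aggregates exactly that way.

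With this zero-distance identity in hand, I would invoke the on-policy inter-MDP triangle inequality (the analog of Theorem~\ref{Theorem:4}, noted in Section~\ref{laxgbsm} to hold with the same proof) to derive $d_\pi^{1\text{-}1_{[\mathcal{S}]}}(s,s) = d_\pi^{1\text{-}[1]}(s,[s])$ for every $s \in \mathcal{S}_1$. Next, I would bound the diagonal term using the trivial transportation plan that keeps each state's mass in place, giving $d_\pi^{1\text{-}1_{[\mathcal{S}]}}([s],[s]) \le \gamma \max_{s} d_\pi^{1\text{-}1_{[\mathcal{S}]}}(s,s)$. A second application of the triangle inequality then yields
\begin{align}
d_\pi^{1\text{-}1_{[\mathcal{S}]}}(s,[s]) &\le d_\pi^{1\text{-}1}(s,[s]) + d_\pi^{1\text{-}1_{[\mathcal{S}]}}([s],[s]) \notag\\
&\le \tilde{d}_\pi(s,[s]) + \gamma \max_{s} d_\pi^{1\text{-}1_{[\mathcal{S}]}}(s,[s]).
\end{align}
Taking the maximum over $s$ and solving for $\max_s d_\pi^{1\text{-}1_{[\mathcal{S}]}}(s,[s])$ produces the factor $1/(1-\gamma)$, and combining with $d_\pi^{1\text{-}1_{[\mathcal{S}]}}(s,s) = d_\pi^{1\text{-}[1]}(s,[s])$ completes the argument.

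The only delicate point is confirming that the inter-MDP triangle inequality and the existence of the on-policy GBSM do carry over to this setting, since those tools are the workhorses of the proof; however, because the on-policy formulation simply substitutes $(R^\pi, \mathbb{P}^\pi)$ for $(R, \mathbb{P})$ and collapses the Hausdorff term to a pointwise comparison (there is no outer $\max_a/\min_{a'}$ over actions), the arguments of Appendix~\ref{Appendix1} and Appendix~\ref{Appendix:Theorem:4} transfer verbatim with only cosmetic bookkeeping, so I do not expect a genuine obstacle beyond this verification.
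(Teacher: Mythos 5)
Your proposal is correct and follows exactly the route the paper takes: its proof of this theorem is literally "the first inequality follows from Theorem~\ref{the:on-policyVB}, the second by a derivation analogous to the proof of Theorem~\ref{Theorem:vfa}," and you have simply carried out that analogy in detail (intermediate MDP $\mathcal{M}_{1_{[\mathcal{S}]}}$, the zero-distance induction, the two triangle-inequality applications, and the $1/(1-\gamma)$ rearrangement). Your closing observation that the on-policy case drops the Hausdorff max/min over actions and therefore transfers verbatim is also consistent with the paper's treatment in Section~\ref{laxgbsm}.
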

\begin{proof}
The first inequality follows directly from Theorem~\ref{the:on-policyVB}, while the second is established using a derivation analogous to the proof of Theorem~\ref{Theorem:vfa}.
\end{proof}

\end{appendices}
\ifCLASSOPTIONcaptionsoff
  \newpage
\fi
\footnotesize
\bibliographystyle{IEEEtran}

\bibliography{IEEEabrv,IEEEexample}
\end{document}